\pgfplotsset{compat=1.18}
\newtheorem{theorem}{Theorem}[section]
\newtheorem{lemma}[theorem]{Lemma}
\newtheorem{proposition}[theorem]{Proposition}
\theoremstyle{definition}
\newtheorem{definition}[theorem]{Definition}
\newtheorem*{theorem**}{Theorem\theoremnum}
\newenvironment{theorem*}[1][]{%
  % https://tex.stackexchange.com/a/53091/5764
  \edef\theoremnum{\if\relax\detokenize{#1}\relax\else~#1\fi}% Store theorem number
  \begin{theorem**}
}{%
  \end{theorem**}
}  
\newcommand{\fa}{\mathbf{a}}
\newcommand{\fb}{\mathbf{b}}
\newcommand{\fc}{\mathbf{c}}
\newcommand{\fx}{\mathbf{x}}
\newcommand{\fz}{\mathbf{z}}
\newcommand{\fv}{\mathbf{v}}
\newcommand{\Acal}{\mathcal{A}}
\newcommand{\Ocal}{\mathcal{O}}
\newcommand{\RR}{\mathbb{R}}
\newcommand{\Ncal}{\mathcal{N}}
\DeclareMathOperator{\Vect}{Vect}
\DeclareMathOperator{\Cov}{Cov}
\DeclareMathOperator{\Diag}{Diag}
\DeclareMathOperator{\tr}{tr}
\newcommand{\T}{^\mathsf{T}}
\newcommand{\SB}{\mathbb{S}}
\newcommand{\fw}{\mathbf{w}}
\title{Multi-context principal component analysis}
\author[K. Wang et al.]{Kexin Wang$^*$}
\address{Kexin Wang, Harvard University}
\email{kexin\_wang@g.harvard.edu}
\thanks{$^*$ denotes equal contribution. }
\author[]{Salil Bhate$^{*, \dagger}$}
\address{Salil Bhate, Broad Institute of MIT and Harvard}
\email{salil92@gmail.com}
\author[]{Jo\~ao M. Pereira}
\address{Jo\~ao M. Pereira, University of Georgia}
\email{jpereira@uga.edu}
\author[]{Joe Kileel}
\address{Joe Kileel, University of Texas at Austin}
\email{jkileel@math.utexas.edu}
\author[]{Matylda Figlerowicz}
\address{Matylda Figlerowicz, Harvard University}
\email{matyldafiglerowicz@g.harvard.edu}
\author[]{Anna Seigal$^\dagger$}
\address{Anna Seigal, Harvard University}
\email{aseigal@seas.harvard.edu}
\thanks{$^\dagger$ to whom correspondence should be addressed.}
\date{}
\let\AMS@printaddresses\@setaddresses
\let\@setaddresses\@empty
\begin{document}

\begin{abstract}
   Principal component analysis (PCA) is a tool to capture factors that explain variation in data. 
   Across domains, data are now collected across 
   multiple contexts (for example, individuals with different diseases, cells of different types, or words across texts). While the factors explaining variation in data are undoubtedly shared across subsets of contexts, no tools currently exist to systematically recover such factors.
   We develop multi-context principal component analysis (MCPCA), a theoretical and algorithmic framework that 
   decomposes data into factors shared across subsets of contexts. Applied to gene expression, MCPCA reveals axes of variation shared across subsets of cancer types and an axis whose variability in tumor cells, but not mean, is associated with lung cancer progression. Applied to contextualized word embeddings from language models, 
   MCPCA maps stages of a debate on human nature, revealing a discussion between science and fiction over decades.
   These axes are not found by combining data across contexts or by restricting to individual contexts. 
   MCPCA is a principled generalization of PCA to address
   the challenge of understanding factors underlying data across contexts.
\end{abstract}

\maketitle

\section{Introduction}

 Principal component analysis (PCA) is a universal tool in data analysis. It finds directions along which variance is maximized in a dataset and, in doing so, minimizes the reconstruction error of the data. Principal components (PCs)
	 are fundamental factors that explain data, as a standalone tool and in machine learning pipelines~\cite{jolliffe2016principal,novembre2008genes,yeung2001principal,nandi2015principal,xu2012outlier,dhal2022comprehensive}.

Technological advances now produce data collected across multiple, related contexts. Examples include 
patient samples across disease types~\cite{chang2015impact}, 
single-cell measurements under perturbations or from different cell states \cite{meinshausen2016methods,dixit2016perturb,rao2021exploring,lonsdale2013genotype}, and literature across genres and time periods ~\cite{gao2020pile,mnih2013learning}. 
One has a fixed set of variables (such as genes), which are measured across multiple contexts. 
The number of contexts ranges from two or three (e.g., patient groups) to thousands (e.g., genetic perturbations). 

We propose a new modeling principle that extends PCA to the multi-context setting. We call it multi-context PCA (MCPCA). Just as usual PCA decomposes data into axes that explain variation in a dataset, MCPCA decomposes multi-context data into axes that explain variation in subsets of contexts. We show that the multi-context principal components (MCPCs) are factors explaining the data within and across contexts that cannot be found via alternative methods.

The approach of MCPCA, to find axes that optimally explain variation in subsets of contexts, is a new idea. It builds on two threads of work. The first is to find axes common to multiple contexts~\cite{flury1983some,flury1984common,flury1987two,flury1986algorithm,ponnapalli2011higher,sentana2001identification} 
and the second is to compare factors between two contexts~\cite{alter2003generalized,abid2018exploring,wang2024contrastive}.
   Finding factors common across contexts originated in work of Flury in the 1980s~\cite{flury1984common}. He considered a set of variables measured in multiple contexts and formulated the \emph{hypothesis of common principal components}, the modeling assumption that the covariance matrices share the same eigenvectors. 
(His example had six variables and two contexts~\cite{flury1983angewandte}.)  
A similar idea appears in econometrics, where the covariance matrices of different volatility regimes share common structure~\cite{sentana2001identification,rigobon2003identification,lewis2021identifying}. 
Finding factors that explain the difference between two contexts also has a long history, starting with canonical correlation analysis~\cite{hotelling1936relations} and the generalized singular value decomposition~\cite{alter2003generalized,van1976generalizing}. 

Operationally, MCPCA generalizes the diagonalization of a single covariance  matrix (as in PCA) to the simultaneous diagonalization of a collection of matrices -- the covariance matrices of each context. 
Simultaneous diagonalization is well-studied~\cite{ziehe2004fast,cardoso1996jacobi,bunse1993numerical}. 
However, existing algorithms are too slow or inaccurate for large real-world datasets. 
We propose a new algorithm and demonstrate its superiority in terms of accuracy and speed. 
It is based on a recent advance in tensor decomposition~\cite{wang2025multi}, applied to a stack of covariance matrices, with new partial non-negativity conditions imposed (since variance is non-negative).

The outputs of MCPCA are the MCPCs and the \emph{context loadings} for each MCPC. The MCPCs are directions explaining variance in the subsets of contexts indicated by its context loadings. MCPCA does not impose restrictions on the context loadings: it does not require specifying in advance which factors are shared or individual, which subset of contexts they appear in, or how the factors change from one context to another. It requires no threshold for comparing factors and its only hyperparameter is the rank, just like usual PCA.  It generalizes finding common factors (those with high or comparable context loading in every context) and finding factors with high relative importance in one context versus another. 

We demonstrate qualitatively and quantitatively that MCPCs and their context loadings capture factors across contexts in 
synthetic data and two real-world 
domains: gene expression profiles in biology and contextualized word embeddings in fiction and scientific works.

In biology, MCPCA reveals factors of gene expression whose patterns across contexts, as given by their context loadings, capture crucial structure to relate contexts. We apply MCPCA to gene expression profiles of patients across different cancer types from The Cancer Genome Atlas. It decomposes the data into organ-specific, multi-cancer, and pan-cancer axes of variation. These capture biologically meaningful hallmarks shared across subsets of cancers. We find an axis of variation that relates activation of metal ion transport and activation of extracellular matrix remodeling  genes, predominantly present in thyroid cancer samples, which defines a subset of pancreatic cancer patients with improved survival. Applied to single-cell gene expression profiles across lung cancer patients from the Lung Cancer Atlas, we find an axis of variation with  hypoxia/stress response at one pole and oxidative phosphorylation/proliferation at the other whose context loadings and variance explained -- but not average position -- is predictive of stage. This axis cannot be found by PCA on the combined data. We benchmark the context loadings for gene-function annotation from Perturb-seq data and for phylogenetic reconstruction from single-cell gene expression of orthologous genes.

Applied to contextualized word embeddings, MCPCA provides a quantitative way to map the dynamics of intellectual debates of word meanings across time periods and genres of cultural and scientific production. We apply MCPCA to contextualized  embeddings of the word `human' in sentences from Project Gutenberg  measured by language models. The contexts are textual forms (Science vs. Fiction) across time periods (1800-1820, 1820-1840, 1840-1860, 1860-1880, 1880-1900). 
The works under science include texts from disciplines of science and some works of philosophy and social science.
The MCPCs reveal two factors present in both science and fiction, though at different times. 
A factor representing questioning of the essence of humanity is present in fiction from 1800-1820, is then addressed by science texts between 1820-1840, and subsequently answered in science texts by
a factor representing a debate between humans as species and humans as citizens initiated by
the \emph{On the Origin of Species}, and then this debate is picked back up in fiction in 1880-1900. These factors cannot be found by comparing factors within contexts or across the combined data.

MCPCA is a principled algorithm for addressing a fundamental, unresolved challenge in modern data analysis. It highlights the insights that can be gleaned across domains by finding axes that optimally explain variance shared across subsets of contexts.

\section{Results}

\subsection{A model for factors shared across subsets of contexts}

MCPCA is a method for variables (such as genes) that are measured through data assigned to contexts (such as tumor types).
The assignments of data to context (e.g. the type of a tumor) are assumed to be known. 
The modeling assumption of MCPCA is 
the \emph{hypothesis of collective components}: factors that explain variation in the data are shared across subsets of contexts.  See \textbf{Figure~1A}. 
MCPCA recovers these factors and their importance in each context, without specification of which contexts share factors or how their importance changes across contexts.  

MCPCA models the covariance matrix in the $i$th context as $\Sigma_i = A B_i A\T$, where $A$ is the matrix of MCPCs, which are shared across contexts, and the non-negative diagonal matrix $B_i$ gives the weight of each MCPC in the $i$th context. 
The weights of the MCPCs across contexts are collected into a single matrix $B$, called the  \emph{context loading matrix}, whose $i$th row gives the weight of each MCPC in the $i$th context (equivalently, whose $j$th column gives the weight of the $j$th MCPC in each context). See \textbf{Figure 1B} and Section~\ref{sec:input_output_model}. 
The columns of $A$ are scaled to have unit norm, since they represent directions.

The output of MCPCA is the matrix of MCPCs $A$ and the context loading matrix $B$. They are both useful in downstream analysis. Matrix $A$ is analogous to the matrix of PCs in usual PCA: the MCPCs are factors of interest in the data, useful for scientific hypothesis generation and testing. 
The context loadings matrix encodes each context by the importance of the MCPCs.
We show that it is superior to alternatives (such as the variance along usual PCs in that context) for prediction tasks.

An alternative viewpoint on MCPCA is a latent variable model. Observed variables are a transformation and weighting of 
shared latent variables~$\mathbf{z}$ that are uncorrelated and unit variance.  
The latent variables and transformation are shared across contexts and the weighting is context specific. 
That is, $\mathbf{x}_i = A B_i^{1/2} \mathbf{z}$, where $\mathbf{x}_i$ are the observed variables in the $i$th context and matrices $A$ and $B_i$ are as above. The exponent $1/2$ is because scaling a variable by $\beta^{1/2}$ scales its variance by $\beta$. 
Not all latent variables appear in all contexts,  since diagonal entries of $B$ may be zero.
Note that MCPCA, like PCA, seeks to model the variation in the data, and both ignore information about the mean.
To project 
to the space defined by the MCPCs, one uses the pseudo-inverse $A^\dagger$, which is called the \emph{MCPC projection matrix}. It projects to the MCPC space for linearly independent MCPCs, since $A^\dagger A = I$.

\begin{figure}[htbp]
\centering
\includegraphics[scale = 0.5]{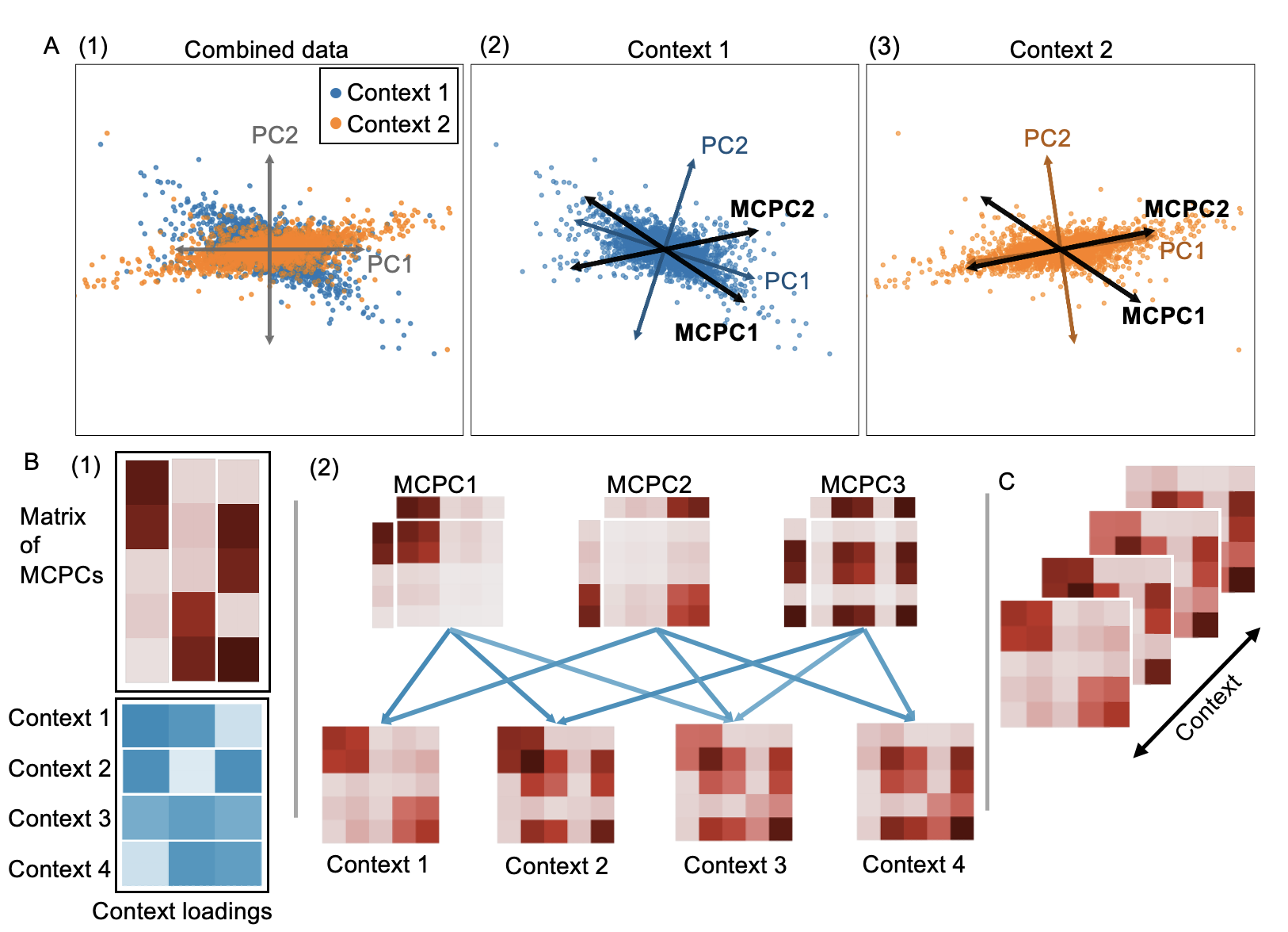}
\captionsetup{width=0.95\textwidth}
\caption{\textbf{A model for multi-context principal components.}
\newline \textbf{A.} (1) Data assigned to known contexts (here, blue and orange) plotted along the top two overall PCs, (2) and (3) Data in individual contexts, plotted in the space of overall PCs, the within-context PCs marked in blue and orange, respectively, and MCPCs shown in black. The MCPCs are non-orthogonal directions that capture axes of variation shared across contexts.
\newline \textbf{B.} (1) the parameters in MCPC: the matrix of MCPCs (reds) and the context loading matrix (blues). This example has five variables, four contexts, and three MCPCs.  (2) The MCPCs are components of covariance. Each is weighted by the context loading to approximate the covariance matrices in each context. 
\newline \textbf{C.} The covariance tensor is obtained by stacking the covariance matrices from each context.
}
\label{fig1}
\end{figure}

\subsection{Tensor decomposition estimates multi-context principal components}
Just as usual PCs are the vectors in a low-rank approximation of a covariance matrix, the MCPCs are the vectors in a low-rank approximation of the \emph{covariance tensor}, the tensor obtained by stacking together the covariance matrices of the different contexts (\textbf{Figure 1C}).
Given $p$ variables and $k$ contexts, the covariance tensor has size $p \times p \times k$. 
MCPCA finds a low-rank approximation of this tensor, see Section~\ref{sec:reconstruction}. 
This discovers collective components without pairwise comparison of contexts, via the connection between tensor decomposition and simultaneous diagonalization~\cite{de2006link}.
Unlike previous work on tensor decomposition for data analysis, see e.g.~\cite{kolda2009tensor},
it does not require a pairing of samples from different contexts (for example, a pancreatic tumor need not be paired with a corresponding thyroid tumor).
 
Tensor decomposition establishes the theoretical foundation of MCPCA. It has a unique solution, which (unlike PCA) does not require orthogonality of the MCPCs, see Section~\ref{sec:identifiability}. 
MCPCA generalizes the four main facets of PCA:  minimizing reconstruction error, maximizing variance explained, transforming to uncorrelated variables, and estimating parameters in a multivariate Gaussian model, see Sections~\ref{sec:reconstruction} to \ref{sec:mle}.

\subsection{An accurate and scalable algorithm for estimating MCPCs}

A priori, any tensor decomposition algorithm can be used for MCPCA: it is a rank $r$ approximation of the covariance tensor. 
We design an algorithm for MCPCA based on a partially non-negative adaptation of the multi-subspace power method (MSPM)~\cite{wang2025multi}. The algorithm uses subspace projection and deflation, followed by non-negative least squares. 

We conduct experiments with synthetic and semi-synthetic data to compare to other tensor decomposition algorithms, in terms of accuracy, scalability, and sample complexity.
We generate synthetic data by sampling from an $r$-dimensional Gaussian reweighted via the matrix $B$ and transformed via the MCPCs in $A$, see \textbf{Figure 2A}. 
We sample from these distributions, compute the sample covariance matrices, and decompose them via various matrix and tensor decomposition algorithms. 
We record the accuracy and time taken for MCPCA and the 12 other methods in  \textbf{Figure 2B}.
MCPCA is the fastest of the three accurate methods. The other two were FFDIAG~\cite{ziehe2004fast} and QRJ1D~\cite{afsari2006simple}, which were 30 and 300 times slower, respectively. 
Nonlinear least squares (NLS)~\cite{vervliet2016tensorlab} was accurate on average, though sensitive to initialization. We further discuss existing algorithms in Section~\ref{sec:comparison}.
We quantify dependence on sample size in \textbf{Figure 2C}.
MCPCA exhibits the best sample complexity
of the methods.
We give theoretical error analysis in Section~\ref{sec:error}.

MCPCA does not impose orthogonality of the MCPCs. When they are orthogonal, they coincide with usual PCs, which can be found using PCA of the individual contexts or of the combined data. 
We show the advantages of MCPCA even in the orthogonal case, where it has numerical and sample complexity advantages over matrix-based methods such as PCA, in \textbf{Supplementary Figure S2}.
Intuition for this benefits of MCPCA over matrix methods is that the ill-posed set (the inputs for which the method has non-unique output) is smaller for a tensor method than for a matrix method, see Section~\ref{sec:orthogonal}.

We qualitatively validated MCPCA with a semi-synthetic example. We constructed three datasets of MNIST digits \cite{deng2012mnist} superimposed onto a background of clouds or grass from ImageNet \cite{deng2009imagenet}. 
One context contains only the background, one contains digits~0 and~2 superimposed onto background images, and the third contains digits~1 and~2, superimposed onto the background images, see \textbf{Figure 2D}. 
The digits appear in subsets of contexts. The MCPCs are images, since each loading is a pixel intensity. MCPCA finds the digits and correct context loadings, see \textbf{Figure 2E}. 
Though sparsity is not hard coded into MCPCA, in practice the context loadings have few large loadings. 
Alternative tensor decomposition algorithms (except for NLS) and PCA did not find interpretable output, see \textbf{Figure 2F}.

\begin{figure}
    \centering
    \includegraphics[width=0.8\linewidth]{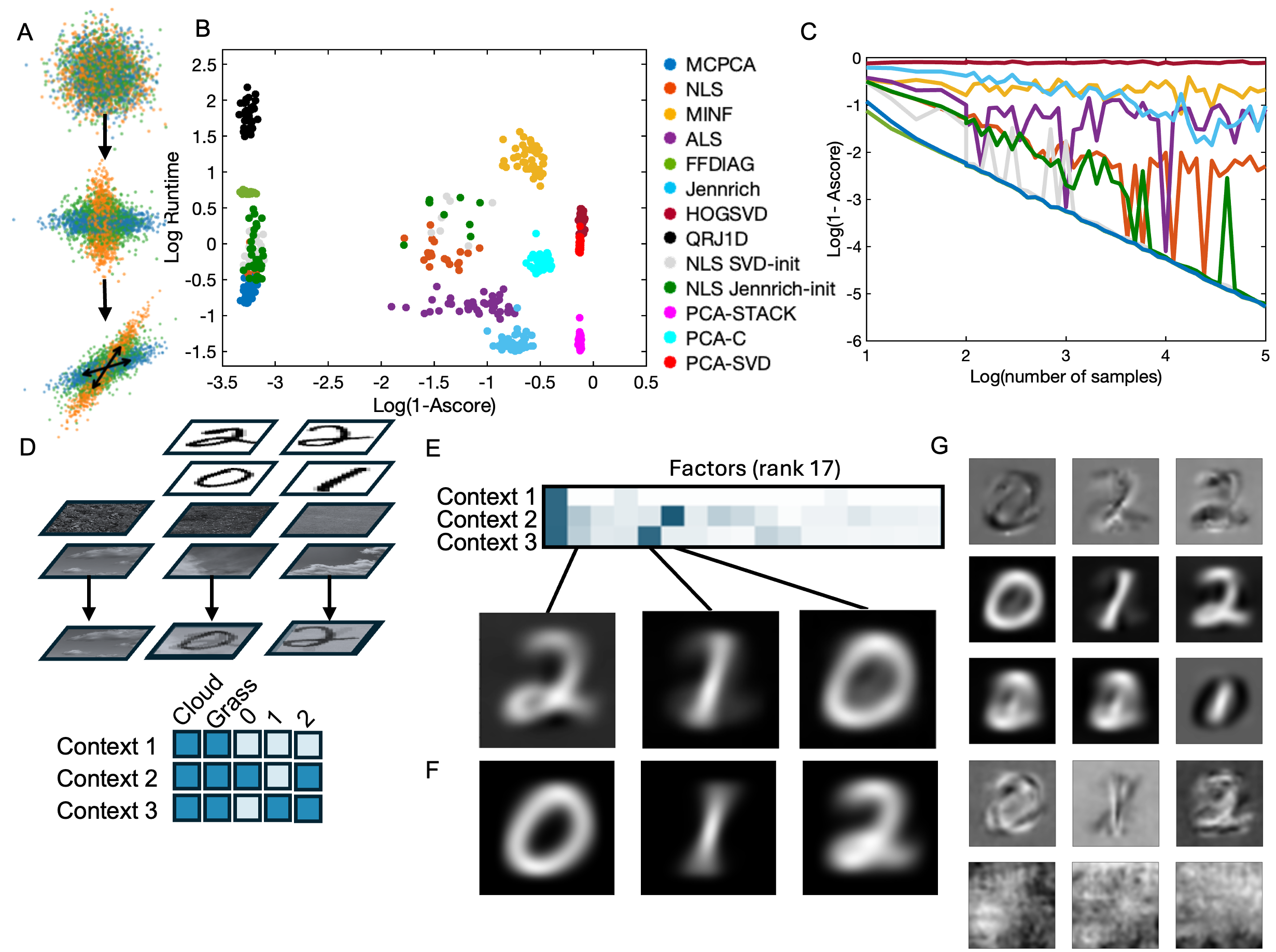}
    \captionsetup{width=0.95\textwidth}
    \caption{\textbf{⁠Benchmarking MCPCA with synthetic and semi-synthetic data.}
\\ \textbf{A.} Synthetic data generated from a standard multivariate Gaussian by transforming by a context-dependent diagonal scaling followed by a shared linear transformation. MCPCs marked as arrows in the transformed data. 
\\ \textbf{B.} Runtime (in seconds) against accuracy (Ascore) for the 13 methods in the legend. MCPCA lower left, the fastest accurate method.
\\ \textbf{C.} Accuracy compared to sample size for the 13 methods listed in the legend. 
\\ \textbf{D.} Semi-synthetic data schematic: data generated via background images of clouds and grass with digits superimposed. Ground truth context loadings shown in $3 \times 5$ array.  
\\ \textbf{E.} Three MCPCs plotted via pixel intensities. Context loading matrix as heatmap. Digit 0 has high loading in context 2, digit 1 has high loading in context 3 and digit two has non-zero loadings in contexts 2 and 3. This matches ground truth in D.
\\ \textbf{F.} The four ground truth averaged digits (zero in context 2, two in context 2, one in context 3, two in context 3).
\\ \textbf{G.} Digits recovered via FFDIAG, NLS, PCA-STACK, QRJ1D, HO GSVD.
}
    \label{fig:placeholder}
\end{figure}

\subsection{MCPCs of cancer}

We apply MCPCA to RNA sequencing data. The input are data matrices with rows labeled by samples and columns by genes; the matrix entry is the normalized log-transformed count of transcripts in the cell. 

In molecular biology, the PCs of gene expression are the axes in gene expression space that together optimally explain variance. These reflect the biological processes driving phenotypic heterogeneity (alongside technical artifacts).  Given multi-context gene expression data, MCPCs are the axes that optimally explain variance across contexts. We therefore hypothesized that MCPCA applied to multi-context gene expression data could reveal the biological processes driving heterogeneity across  phenotypically distinctive contexts.

We first considered  The Cancer Genome Atlas, which has gene expression measurements across 30 cancer types. It was previously shown that gene expression similarity is driven predominantly by cell-of-origin: samples from different cancer types do not cluster together~\cite{hoadley2018cell}. However, the processes driving phenotypic variability in different cancer types presumably overlap, even though the cancer types are phenotypically distinctive. This should reflect, for example, a balance between inflammatory and immunosuppressive tumors present in multiple cancer types. 

We applied MCPCA to gene expression measurements from 10509 tumor samples across the 30 cancer types. We performed PCA on gene expression  across all samples, and constructed the covariance tensor, each slice representing the PC-PC (i.e. PC $\times$ PC) covariance matrix within individual cancer types (\textbf{Figure 3A}). 
We determined the optimal rank (30) for MCPCA by examining stability of MCPCs and context loadings across rank, and consistency of MCPCs with 10\% of the data held out  (\textbf{Supplementary Figure S3A-C}). We interpreted the learned MCPCs with gene set enrichment  analysis using Enrichr \cite{kuleshov2016enrichr}.

MCPCA provided a biologically coherent decomposition of cancer phenotypic heterogeneity into organ-specific, multi-cancer, and pan-cancer axes of variation (\textbf{Figure 3B}, \textbf{Supplementary Figure S3D-E} and \textbf{Supplementary Table S1}). Organ-specific axes correctly captured biological processes specific to the organs they were detected in, including MCPC21 (metabolic processes present in liver), MCPC29 (renal transport processes in the kidney) and MCPC16 (T cell activation specific to thymoma).  Pan cancer axes captured hallmark balances present in many tumors: MCPC0 (retinoid metabolism vs. angiogenesis), MCPC1 (aerobic respiration),  MCPC2 (Mitosis/proliferaiton vs. immune signaling) and MCPC3 (chemokine signaling/inflammation vs. cholesterol biosynthesis). Multi-cancer axes captured processes common across organ systems, for example MCPC11 (cilium assembly vs. amino acid tranposrt shared in uterine corpous carcinoma and lung adenocarcinoma), and MCPC4 (epidermal differentiation present in skin and squamous cell cancers).

We found an MCPC that was a major axis of variation in thyroid carcinomas, but defined a subgroup of pancreatic adenocarcinoma tumors with improved survival, see \textbf{Figure 3B, arrow}. MCPC10 was a multi-cancer MCPC that reflected a balance between metal ion transport and antigen presentation/extracellular matrix organization. It had a high context loading only in thyroid carcinoma and pancreatic adenocarcinoma and did not explain variance in other adenocarcinomas, see  \textbf{Figure 3C}. Inspecting cancer samples with respect to the overall PCs, we see that this MCPC indeed defines a major axis of variation in thyroid cancer (\textbf{Figure 3D}) and a subgroup of pancreatic cancer patients (\textbf{Figure 3E}). This subgroup is associated with improved survival in pancreatic cancer \textbf{Figure 3F}. Performing PCA on pancreatic cancers individually, we see that MCPC10 represents the direction of separation between two groups of patients.

MCPCA decomposes phenotypic variability into components shared across subsets of cancers, and leverages coherence in gene expression variation across cancer types to find biologically meaningful axes of variation that are missed when combining data or looking at cancer types individually. Our results emphasize the clinical relevance of cancer gene expression module pleiotropy across tumor types.

\begin{figure}[htbp]
    \centering
    \includegraphics[width=0.95\linewidth]{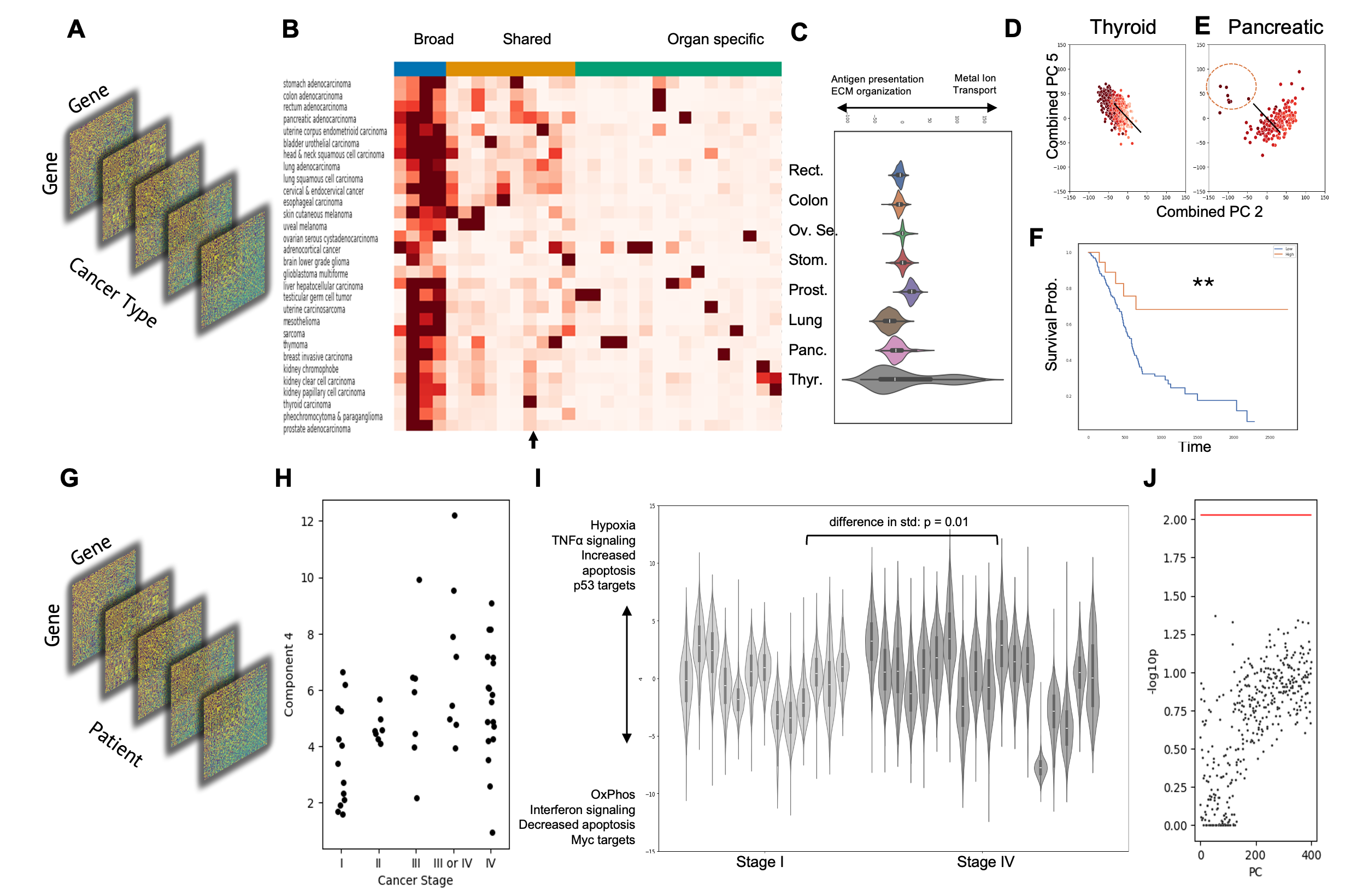}
    \captionsetup{width=0.95\textwidth}
    \caption{\textbf{Multi-context principal components of cancer across tumors and patients.}
\newline \textbf{A.} Gene-gene covariance tensor across cancer types.
\newline \textbf{B.} Context loadings from MCPC with rank 30. Colors indicate grouping into broad, multicancer and organ-specific MCPCs
\newline \textbf{C.} Distribution of scores of MCPC10 in patients with cancers of highest context loading. 
\newline \textbf{D.} Scores of overall PCs 2 and 5 (those with highest similarity to MCPC10) in thyroid cancer. Color indicates MCPC10 score. Black line is projection of MCPC10.
\newline \textbf{E.} Scores of overall PCs 2 and 5 in pancreatic cancer. Color indicates MCPC10 score. Black line indicates projection of MCPC10 on PCs 2 and 5. Dashed circle indicates patients with highest score along MCPC10.
\newline \textbf{F.} Kaplan-Meier plot of patients with high and low MCPC10. $**$ indicates Cox Proportional-Hazards regression p-value $= 0.001$.
\newline \textbf{G.} Schematic of gene-gene covariance tensor across lung cancer patients.
\newline \textbf{H.} Distribution of context loadings of MCPC5 per patient grouped by cancer stage.
\newline \textbf{I.} Distribution of  per-cell MCPC5 score in Stage I and IV patients; t-test p-value for difference in std of scores $=0.01$, p-value for difference in mean not significant.
\newline \textbf{J.} -Log10pvalues of association with cancer stage (maximum p-value for t-test difference between groups I/II vs III/IV and Spearman correlation with stage) for std of overall PCs 1-400. Red line indicates significance obtained by MCPC5.
}
    \label{fig3}
\end{figure}

\subsection{Variation in single-cell gene expression heterogeneity across patients}
A current challenge in single-cell analysis is to model inter-individual variability in cell state heterogeneity. This is important in cancer, where tumor cell heterogeneity is indicative of plasticity and resistance potential \cite{yuan2019cellular}. MCPCA finds axes that explain variance across multiple patients and therefore provides a systematic way to quantify single-cell heterogeneity shared across subsets of individuals. We investigated the MCPCs of gene-gene covariance matrices across patients from the lung cancer atlas \cite{salcher2022high}. Here, each patient is a context and each cell is a sample.

We applied MCPC with five components to gene-gene covariance matrices of tumor cells from  lung adenocarcinoma atlas patients, performing PCA on the combined data of all samples to reduce to 400 overall components (\textbf{Figure 3G}).  Restricting to tumor cells means axes of variation are driven by cell state diversity, not differences in cell type composition.

The  context loading of MCPC5 was predictive of cancer stage (\textbf{Figure 3H}, Bonferroni $p < 0.05$ with Spearman correlation and t-test); patients with later stage cancer had a higher context loading. We interpreted MCPC5 by identifying gene sets enriched in the positive and negative directions of the axis. (MCPCs, just like PCs, are only unique up to sign, so positive and negative here indicate only two poles.)

One direction of MCPC5 indicated a stressed, hypoxic state characterized by enrichment of  TNFa signaling,  apoptosis pathway and p53 target gene sets. The other indicated an oxygen rich, proliferative state characterized by oxidative phosporylation,  proliferation and target gene sets Myc (\textbf{Supplementary Table S2}). Crucially, a patient's mean score along the axis (i.e., the average lean towards hypoxic/stress response or oxygen rich/proliferative)  was not predictive of stage, but the variance  was (\textbf{Figure 3I}).

This axis was stable to resampling single cells, and across random seeds. It was not correlated with the proportion of metastatic cells in the sample (\textbf{Supplementary Figure S3F}). While the mean expression of PC6 was also predictive of stage, the context loading for MCPC5 was not correlated with it (\textbf{Supplementary Figure S3G}), and the context loading was significant in a multivariate model including PCs (\textbf{Supplementary Figure S3H}) indicating that it was an independent predictor. Furthermore, none of the top 50 PCs had a standard deviation (std) predictive of stage, so this multi-context axis is emergent from analysis across contexts, see \textbf{Figure 3J}.

Thus, MCPCA identifies a plasticity hallmark of lung cancer progression: tumor cell diversity along a hypoxia/stress/p53 - oxygen rich/proliferative/Myc axis, but not average tumor cell positioning along it, increases with cancer progression.

\subsection{Benchmarking context-level representations for gene expression from MCPCA}
We benchmarked whether context loadings from MCPCA better captured information in gene-gene covariance matrices.

We first considered the task of reconstructing phylogeny using human, chimpanzee, gorilla, rhesus macaque, and marmoset brain single-cell gene expression, a task that requires contrasting axes of variation in cellular populations. We performed an initial dimensionality reduction with PCA applied to all orthologs, and then compared whether hierarchically clustering species by their context loadings in MCPCA, or by baseline feature sets, recovered the correct phylogenetic tree (\textbf{Supplementary Figure S4A-I}). 

MCPCA with rank greater than 8 recovers the correct tree (\textbf{Supplementary Figure S4B-C}). 
The mean of each species' cells in PC space does not recover the correct tree, indicating covariation information is required to reconstruct phylogeny (\textbf{Supplementary Figure S4D-E}).  
Using vectorizations of the PC-PC covariance matrices recovers the correct tree only with 33 or more components (\textbf{Supplementary Figure S4F-G}). Variance explained by each PCs does not recover the correct tree (\textbf{Supplementary Figure S4H-I}). 

A current benchmarking task is to identify functional gene properties from single-cell gene expression profiling of gene perturbation effects with Perturb-seq. The contexts are gene perturbations. Vector representations of contexts are evaluated by whether context similarity (which corresponds to gene-gene relationships in the case of Perturb-seq) enriches known, gene-gene functional annotations \cite{bendidi2024benchmarking}. We constructed a covariance tensor of gene $\times$ gene $\times$ perturbation using the  Perturb-seq data of~\cite{replogle2022mapping}, performed MCPCA, and evaluated whether concatenating MCPCs to average PC in each context improves recall of known gene relationships. Across multiple gene sets, including cell-type markers, gene ontology terms, and GWAS hits, MCPC improves recall over mean PC as well as mean PC concatenated with variance explained by PCs (\textbf{Supplementary Figure S4J}). 

Thus MCPCA provides an efficient way to extract evolutionary information by contrasting  gene-gene covariance matrices of orthologous genes from different species and to extract functional information by  contrasting gene-gene covariance across perturbation effects.

\begin{figure}[htbp]
\centering
\includegraphics[width = 0.9\linewidth]{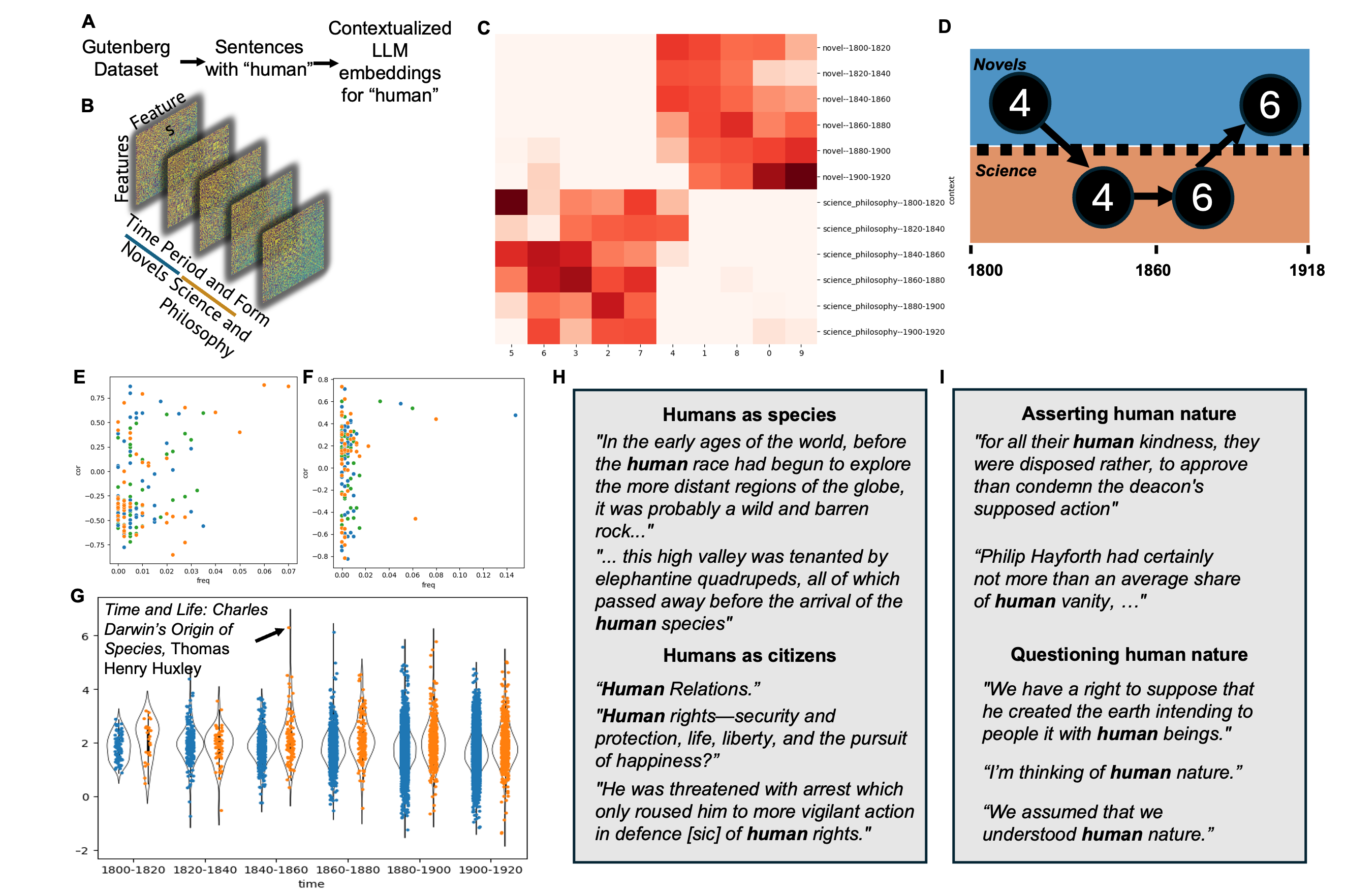}
\captionsetup{width=0.95\textwidth}
\caption{
\textbf{ ⁠Multi-context principal components of semantics across literary forms and time periods.}
\newline \textbf{A.} Pipeline for feature extraction from contextualized embeddings of the word “human” in the Gutenberg dataset.
\newline \textbf{B.} Input to MCPC. Covariance matrices of human embeddings across literary forms (science and fiction) and time intervals (20 year intervals between 1800 and 1900).
\newline \textbf{C.} Context loadings of MCPCs across contexts.
\newline \textbf{D.} Graph summarizing temporal ordering of factors observed in both literary forms by their context loadings.
\newline \textbf{E.} Correlation of context variances ($y$ axis) and overlap between top 200 sentences ($x$ axis) with factors obtained by PCA in individual contexts and overall PCs for MCPC4.
\newline \textbf{F.} Correlation of context variances ($y$ axis) and overlap between top 200 sentences ($x$ axis) with factors obtained by PCA in individual contexts and overall PCs for MCPC6.
\newline \textbf{G.} Distributions of median scores of MCPC6 across works grouped by time and form. 
\newline \textbf{H.} Annotation of MCPC6 using sentences with highest and lowest scores.
\newline \textbf{I.} Annotation of MCPC4 using sentences with highest and lowest scores. 
}
\label{fig4}
\end{figure}

\subsection{Examining intellectual history across time periods and genres of textual production}

Contextualized word embeddings from large language models (LLMs) of a word in a sentence capture its relational meaning \cite{reif2019visualizing}; PCs of contextualized word embeddings of a single important word across a corpus capture relational axes of variability - which we refer to as debates - around that concept \cite{kozlowski2019geometry}. We therefore hypothesized that MCPCs of word embeddings of a fixed concept across time and literary form could offer quantitative insights to trace the emergence and transformation of debates in intellectual history that unfold across disciplines.

We obtained contextualized word embeddings of the word `human' in the DeepMind Gutenberg Dataset \cite{raecompressive2019} using the BERT language model \cite{devlinbert2018} (\textbf{Figure 4A}). We grouped sentences by time period (twenty year periods 1800-1920) and form (science or fiction), and constructed covariance matrices between embeddings in each context (\textbf{Figure~4B}). We applied MCPCA with ten factors. We chose the word `human' because there were key changes in its usage during this time period.

Most MCPCs were shared across time periods but restricted to one form (science or fiction) (\textbf{Figure 4C}). However, two MCPCs were present across both science and fiction: MCPC4 and MCPC6. We constructed a graph summarizing the dynamics of these two MCPCs across time and form (\textbf{Figure 4D}): the context loadings of MCPC4 were high in fiction prior to in science; MCPC6 had high loading initially in science after the peak of MCPC4, and was subsequently observed in fiction. Applying PCA to the embeddings in individual forms, or collectively, did not produce factors that recapitulated these (measured with respect to cosine similarity, correlation across time periods/forms, or overlap in top sentences) (\textbf{Figure 4E-F}).

We grouped works by their median score for MCPC6, across sentences containing `human'. We observed a peak for one work in 1859 (\textbf{Figure 4G}). This was \emph{Time and Life} by Thomas Henry Huxley, a magazine article discussing Charles Darwin's \emph{On the Origin of Species}, published the preceding month. This high score seems to aptly reflect the cultural relevance of \emph{On the Origin of Species} and Huxley's role in popularizing Darwin's theory.

Investigating the top sentences at each pole of MCPC6, we saw that one direction corresponded to the use of the term human as natural objects positioned alongside geography and geology. The other direction positioned humans alongside relations, rights and other social relationships. Thus, MCPC6 reveals how the \emph{On the Origin of Species} ignited a debate in fiction between humans as biological versus social objects (\textbf{Figure 4H, Supplementary Table S3}).
MCPC4 captured a debate between questioning versus assuming human nature. At one pole was a positioning of human nature as unknown and, at the other, assertions of routine aspects of human nature (\textbf{Figure 4I, Supplementary Table S3}).

Interpreting these MCPCs with their temporal dynamics encoded by the context loadings (\textbf{Figure 4D}) we obtain a  model for how fiction was a site of questioning the existence and character of `human nature'; this conversation was addressed by scientific discourses via a debate in terms of humans as species versus citizens, which in turn was picked back up by fictional texts. Thus, the analysis objective of MCPCA -- contrasting axes of variation across subsets of contexts -- can be applied to the internal representations of language models to glean quantitative insights about data at scale.

\section{Discussion}

We introduced MCPCA, a conceptual and computational framework to find axes that explain variation across subsets of contexts. We formulated the problem via partially symmetric tensor decomposition, proving that it generalizes the properties that define PCA  to the multi-context setting (including minimizing reconstruction error, maximizing variance explained, transforming data to uncorrelated variables, and estimating parameters in a multivariate Gaussian model). 

We provided an algorithm that uses subspace projection and deflation to identify MCPCs. Through benchmarking on synthetic and semi-synthetic data, we showed that the approach outperforms other methods and tensor decomposition approaches in terms of accuracy, sample complexity, and scalability. 

We show the trans-disciplinary relevance of MCPCA via two application domains (gene expression and intellectual history). The second studied data from contextualized word embeddings, showing how MCPCA is a tool to use on top of large language models. 
MCPCA served as a principled technique to reveal the compositional structure of contexts, present across disciplines and useful within disciplines. It identified meaningful factors that could not be seen via alternative methods. 

 Gene expression studies have highlighted the importance of understanding  gene expression hetereogeneity across cancer types \cite{hoadley2018cell,kinker2020pan,gavish2023hallmarks}. These studies use clustering or non-negative matrix factorization followed by heuristic merging of factors across cancer types. In contrast, applied to patient gene expression samples from different cancer types as contexts, MCPCA provides a decomposition of biological processes across distinct cancers into organ-specific, shared, and pan-cancer gene expression modules. Underscoring the clinical significance of pan-cancer MCPCs, an MCPC defined a patient group in pancreatic cancer with improved survival by integrating heterogeneity present in thyroid cancer.

We applied MCPCA to single-cell gene expression in lung cancer cells, with individual patients as contexts. Unlike previous methods to perform sample-level dimensionality reduction \cite{liu2025learning,boyeau2025deep}, which learn nonlinear embeddings useful for comparing similarity or fitting predictive models, MCPCA directly identifies the dominant variability in cellular heterogeneity across samples in a principled and interpretable manner. Specifically, we found that cellular diversity, but not cellular position, along a hypoxia/stress response versus oxidative phosphorylation/proliferation axis, was a hallmark of cancer progression.  These axes could not be found by merging data across contexts or by investigating individual contexts. We further showed in quantitative benchmarking that MCPCA outperforms PCA for extracting evolutionary relationships from single-cell gene expression data across species, and for extracting gene function information from cellular heterogeneity.

The multi-context factor estimation enabled by MCPCA applies to the output of machine learning models. We applied it to LLM embeddings of the word `human' across texts, split by form (science vs. fiction) and by twenty year period (between 1800-1900) to map debates and their transmission across forms and time. We identified debates that could not be identified by combining data across contexts or by viewing contexts individually. Interpreting the MCPCs, we saw an axis questioning the essence of humans arise in fictional texts, preceding an axis answering it with humans as species versus humans as citizens arising in science, initiated by \emph{On the Origin of Species}, before being discussed in fictional texts in the 1880s.

This analysis offers tools for considering genres of textual production jointly and across a large volume of materials. It underscores the necessity for this type of study, showing that debates often treated as specific to one scientific discipline or genre of cultural production are cross-pollinated, and respond to shared historico-political conditions. This opens avenues for further research, such as a detailed examination of genres through which cultural concepts are interrogated and propagated. For example, in the presented data, literature for children and young adults are strongly represented among the highest-scoring works (\textbf{Supplementary Table S4}, works in bold).

As PCA has inspired a class of machine learning algorithms~\cite{scholkopf1998nonlinear,tenenbaum2000global,hinton2006reducing}, we anticipate that MCPCA will also serve as a useful toolkit in deep learning models.  Key limitations are the choice of rank and scalability. Here, we have provided a CPU implementation, but anticipate that a GPU implementation will provide speedups.

\section{Methods}

\subsection{Tensor decomposition}

Given $p$ variables in $k$ contexts, for each context $i = 1, \ldots, k$ we form the $p \times p$ sample covariance matrix $S_i$.
The covariance tensor $T$ is their stack, a tensor of size $p\times p\times k$.
The coupled decomposition $S_i = A B_i A\T$ is equivalent to the decomposition of the covariance tensor as $T = \sum_{j=1}^r \fa_j \otimes \fa_j \otimes \fb_j$, where $\fa_j$ and $\fb_j$ are the $j$th columns of matrices $A$ and $B$, respectively. The columns of $A$ are scaled to have unit norm.  

The MCPCA algorithm adapts MSPM~\cite{wang2025multi} to incorporate partially non-negative output. 
The vectors $\fb_j$ are non-negative, since variance is non-negative and $(\fb_j)_i$ is the contribution to the variance in context $i$ coming from direction $\fa_j$.
The MCPCA algorithm transforms the covariance tensor to a smaller tensor of size $p \times k \times r$ using the singular value decompositon (SVD), uses power method iterations~\cite{de1995higher} to find the MCPCs (the matrix $A$), and then finds the context loadings (matrix $B$) via non-negative least squares~\cite{paatero1997weighted}. 

A Python implementation of MCPCA is available, along with installation instructions, at \url{https://github.com/QWE123665/MCPCA}.
Given as input a rank and list $X_{\rm list} = (X_1,\ldots,X_k)$ of NumPy arrays, each with the same number of columns (i.e., one data matrix per context), the MCPCs and context loadings are computed as follows:
\begin{lstlisting}
from mcpca import MCPCA 
model = mcpca(n_components = r)
model.fit(X_list)
A = model.components_
B = model.loadings_
\end{lstlisting}

\subsection{Choosing the rank}\label{sec: choose rank}

A key challenge in any dimensionality reduction tool is to choose the dimension, which here (as in usual PCA) is the rank $r$. 
The rank is a hyperparameter that controls the tradeoff between parsimony and variance explained. The rank is selected based on the stability of the algorithm across runs, to ensure that the components reflect intrinsic structure in the data.

In the noiseless setting, the rank of $T$ is the rank of the matrix obtained by reshaping it into a $p \times pk$ matrix, provided $T$ is sufficiently general of rank at most~$p$. 
Motivated by this, we use the scree plot of its singular values to find a range of candidate ranks, analogously to PCA.
In this range, we quantify the stability of each rank by measuring the similarity between the MCPC matrices obtained from repeated runs under random initialization.
Unless otherwise specified, we choose the rank in MCPCA by choosing the biggest rank for which the MCPCs are consistent across runs, balancing expressivity with robustness.

Rank selection from a set of candidate ranks can be performed via
\begin{lstlisting}
from mcpca import MCPCA
model = MCPCA(n_components=None, rank_range=rank_list, n_seed_pairs=5, cos_sim_threshold=0.8)
r = model.r_
\end{lstlisting}
where $\rm{rank}\_\rm{list}$ is a list of candidate ranks (e.g., suggested by a scree plot). The parameter \verb|n_seed_pairs| specifies how many pairs of runs with random initialization are used to assess stability. For each candidate rank, we compute the average cosine similarity score across these  pairs of runs, and select the largest rank whose average score exceeds \verb|cos_sim_threshold|.

\subsection{Synthetic analysis}\label{sec: synthetic}

We compare MCPCA to 12 other methods, in terms of accuracy, time taken, and sample complexity:
\begin{enumerate}
    \item PCA and eigendecomposition methods ($\times 4$): PCA on the combined data (PCA-STACK), PCA for each dataset followed by clustering PCs into $r$ groups and taking the cluster means as the shared axes (PCA-C), PCA in each context followed by SVD on the PCs, taking the top $r$ right singular vectors as the shared axes (PCA-SVD), the higher-order generalized singular value decomposition (HO GSVD) \cite{ponnapalli2011higher}. 

        \item Simultaneous diagonalization methods ($\times 3$):
    FFDIAG~\cite{ziehe2004fast}, QRJ1D~\cite{afsari2006simple} (for $A$ non-orthogonal), Jacobi~\cite{cardoso1996jacobi} (for $A$ orthogonal), Jennrich~\cite{harshman1970foundations} 

    \item Tensor decomposition algorithms ($\times 5)$: 
    nonlinear least squares (NLS) (vanilla version, SVD initialization, and Jennrich initialization), unconstrained nonlinear optimization (MINF), alternating least squares (ALS), as implemented in Tensorlab.

\end{enumerate}
We focus on recovery of the MCPCs as not all methods can compute context loadings.  
We match the recovered and true components ($\fa_i'$ and $\fa_i$, respectively) via a greedy approach: we match $\fa_1$ with the vector among the $\fa_i'$ with which it has the largest absolute cosine similarity, flip its sign to make the cosine similarity positive, then proceed to $\fa_2$ with the remaining columns, and so on until all are matched.  
We then compute the mean cosine similarity between the matched pairs
$$
\text{Ascore}(\{\fa_i\}_{i=1}^r, \{\fa_i'\}_{i=1}^r) = \frac{1}{r}\sum_{i=1}^r |\langle \fa_i, \fa_i'\rangle|.
$$

In the experiments, we let $k = 50$, $p = 100$, and $r = 60$. We generate matrices
$A \in \RR^{100 \times 60}$ and $B \in \RR^{50 \times 60}$ each $40$ times.
The matrix $A$ is obtained by normalizing the columns of a random
Gaussian matrix, while $B$ is constructed as a sparse nonnegative
matrix 
by having non-zero entries present with density 0.2 and sampling the non-zero entries from the absolute values of a sample from $\Ncal(0,1)$.  
Each time, we sample $1000$ observations from each of the $50$
multivariate Gaussian distributions $\Ncal(0, A B_i A\T)$, where $B_i$ is the $60 \times 60$ diagonal matrix formed from
the $i$th row of $B$.
The covariance tensor has size $100 \times 100 \times 50$ and we compute its best rank $60$ approximation. 
We compared the recovered MCPCs  with the true MCPCs using the Ascore.
We evaluated accuracy and runtime across methods.
Figure~1B plots the log-runtime against $\log(1-\mathrm{Ascore})$ for all
algorithms for the $40$ trials.

In the second experiment, we fix
$A \in \RR^{100 \times 60}$ and $B \in \RR^{50 \times 60}$ and vary the sample
size in each context from $10$ to $10^5$, comparing accuracy against sample size. We exclude QRJ1D due to its prohibitive computational cost.

\subsection{Semi-synthetic analysis}

We superimpose hand-written digits 0, 1 and 2 from MNIST \cite{deng2012mnist} onto grass and cloud images from \cite{deng2009imagenet}.  All images have size $28\times 28$ -- they are pixel intensities for $784$ pixels. 
The first context consists of 5000 cloud images and 5000 grass images. 
The second context has 8000 grass and 2000 cloud images.
We sample 10000 images of digits 0 and 10000 images of digits 2 and superimpose them on the grass and cloud images with independent strengths, via the uniform distribution $\text{Uniform}[0,1]$. 
The third context has 2000 grass and 8000 cloud images.
Next, we sample 10000 images of digit $1$ and 10000 images of digit $2$ and superimpose them on the grass and cloud images, again with strength following $\text{Uniform}[0,1]$.
Samples are shown in \textbf{Supplementary Figure S2D}. 

The covariance tensor has size $784 \times 784 \times 3$, and we compute a rank-$r$ approximation with $r=17$. The choice of rank follows the procedure in Section \ref{sec: choose rank}; it is the largest rank such that the average Ascore of five pairs of random runs exceeds 0.8, see  \textbf{Supplementary Figure~S2F}. We apply all methods from Section \ref{sec: synthetic} except for Jacobi, due to its prohibitive runtime. 
The top patterns from each method are plotted in \textbf{Figure 2F} and \textbf{Supplementary Figure S2E}.
Only MSPM and NLS recover clear digits. 

\subsection{MCPCs across cancer types}
\subsubsection{Data setup}
The pan-cancer normalized TCGA gene expression data of \cite{hoadley2018cell} and sample annotations were downloaded from UCSC Xena \cite{goldman2020visualizing}. Primary tumor samples were selected. Cancer types with fewer than thirty samples were removed. PCA with 400 components was performed using scikit-learn. PC-PC covariance matrices per cancer type were stacked into a covariance tensor. There were 10509 samples across 30 cancer types.

\subsubsection{Determining rank for MCPCA}
We performed MCPCA on all the data and on random subsets of 9000 samples at ranks from 20 to 50. We computed three metrics:
\begin{itemize}
    \item Projection similarity: the maximum cosine similarity between the MCPC projection vectors, i.e., the rows of $A^{\dagger}$, on the downsampled and original dataset, scoring a rank by the proportion of MCPCs whose projection vectors had absolute cosine similarity $> 0.9$ between downsampled and original.
    \item Context loading similarity: the Pearson correlation between context loadings on original and downsampled datasets. We calculated the proportion of MCPCs with correlation $>0.99$ across contexts.
    \item Context relationship consistency:  the pairwise correlations between contexts using the MCPC context loadings on the full dataset. We computed the Spearman correlation between these pairwise correlations across pairs of ranks.
\end{itemize}
The rank with highest score with respect to these three metrics of stability and consistency was 30. One fixed random seed was used for initialization.

\subsubsection{Interpreting MCPCs}
We multiplied the MCPC matrix by the PCA projection matrix to obtain gene scores. For each MCPC, we took the 200 genes with highest and lowest scores, and performed gene set enrichment analysis with Gene Ontology Biological Process and Cell Marker gene sets using the GSEApy implementation of Enrichr \cite{fang2023gseapy,kuleshov2016enrichr}. We selected the sigificant (defined as FDR $<.001$) gene sets per MCPC \textbf{(Supplementary Table S2)} and applied the Gemini language model to summarize these into MCPC labels.

\subsubsection{Survival analysis}
Per patient scores of MCPC10 were obtained by applying the MCPC projection matrix~$A^{\dagger}$ across all patients.  We applied Cox proportional hazards regression using the lifelines python package \cite{Davidson-Pilon2019} with  per patient scores of MCPC10 in pancreatic adenocarcinoma on overall survival. 

\subsection{MCPCs across lung cancer patients}
\subsubsection{Data setup}

We obtained the harmonized Lung Cancer Atlas \cite{salcher2022high}. We selected cancer cells from lung adenocarcinoma donors with at least 200 cancer cells. We selected non-mitochondrial genes and computed PCA on the overall dataset with 400 components using scikit-learn. We built the covariance tensor of per-patient PC-PC covariance matrices.

\subsubsection{Interpreting MCPCs}
We applied MCPCA with five components. For each component, we computed the Spearman correlation of context loadings with stage, as well as a t-test for difference in mean context loadings by stage, performing Bonferroni correction. MCPC5 was significant. We interpreted MCPC5 with GSEApy and Enrichr on the gene scores obtained by multiplication of the MCPCs with the PC projection matrix, as above. 

\subsubsection{Comparison to PCA}
We computed the significance with respect to the tests above using the standard deviation (across cells) of each PC per patient. We performed a multivariate logistic regression analysis, predicting Stages III/IV vs. Stages I/II using the mean of PCs whose mean (across cells) was associated with survival with the lifelines package.

\subsection{MCPCs with Perturb-seq data}
\subsubsection{Data setup}
The raw, genome-wide Perturb-seq data in K562 cells was downloaded from \cite{replogle2022mapping}.  Cells with at least 2000 UMIs were selected, the data was log-normalized with a scale factor of 1000, and PCA with 100 components was performed. A smaller number of preprocessing PCs was used for scalability of the algorithm to large numbers of contexts. Gene perturbations with at least 300 cells detected were selected. The input to MCPCA was the tensor of PC-PC matrices in each gene perturbation of the dataset. There were 1872 perturbations after thresholding. MCPCA was performed at ranks 5, 10, 20, 30, 40, and 50. 

\subsubsection{Benchmark construction}
GO Biological Process, Cell Marker and GWAS catalog gene sets were downloaded from \cite{xie2021gene} and restricted to genes whose perturbation effects were in the covariance tensor. Gene-gene links were defined when genes co-occurred in gene sets. We filtered maximum gene-set size at different thresholds (5, 10 and 20) for each of these gene sets. REACTOME, CORUM, and SIGNOR genesets were downloaded from \cite{croft2010reactome,giurgiu2019corum,lo2023signor} respectively, and gene-gene links were defined as co-occurrence in any geneset.

\subsubsection{Benchmark calculation}
Following \cite{bendidi2024benchmarking}, we thresholded gene-gene similarity matrices with different feature sets at the lowest 5\% and highest 5\% similarity, and the recall for gene-gene links across the genesets using different feature sets: (i) Mean overall PC across cells per gene perturbation, (ii) Mean overall PC concatenated with MCPC context loadings, (iii)  Mean overall PC concatenated with overall PC standard deviation or variance explained.
We selected the best rank for MCPC and the best number of PCs variance explained per geneset and computed the proportion of true links recovered (the recall).

\subsection{Phylogenetic analysis}
\subsubsection{Data setup}
We reduce each species’ gene expression matrix to 500 PCs and compute the $500\times 500$ covariance matrix for each species. Stacking the covariance matrices forms the covariance tensor, of size $500\times 500\times 5$, to which we apply MCPCA.

\subsubsection{Benchmark calculation}
The MCPC matrix is $500\times r$, where $r$ is the rank. We use it to construct a dendrogram via hierarchical clustering \cite{hartigan1975clustering}. For $r \geq 9$, MCPC recovers the dendrogram consistent with the known evolutionary relationships among the five species.

\subsection{MCPCs of genres of textual production and time periods}
\subsubsection{Data setup}
Metadata for the Gutenberg data was obtained from the Huggingface Gutenberg dataset (https://huggingface.co/datasets/sedthh/gutenberg\_english). This data was intersected with the PG-19 language modeling benchmark (a cleaned version of the Project Gutenberg digital library annotated by time of publication \cite{raecompressive2019}).  

The Gutenberg and PG19 metadata was used to annotate texts by the literary form and time period they were from. The time periods were 1800-1820, 1820-1840, 1840-1860, 1860-1880, 1880-1900, 1900-1918. The texts were annotated with literary form by searching the following keywords in the Gutenberg header metadata:
\begin{itemize}
    \item[Science:] `science', `philosophy', `physics', `chemistry', `biology',
        `mathematics', `astronomy', `psychology', `logic', `ethics',
        `metaphysics', `epistemology', `natural history', `geology',
        `evolution', `scientific', `algebra', `geometry', `medicine', `anatomy', `botany', `zoology', `technology', `engineering', `political science', `economics', `sociology'
    \item[Fiction:]  `fiction', `novel', `romance', `adventure', `mystery', `detective', `gothic', `love stories', `short stories',
`fantasy', '`science fiction', `historical fiction', `thriller',
`western', `sea stories', `war stories', `humorous stories'
\end{itemize}

The contexts were form/time period pairs. Sentences containing `human' were extracted.

\subsubsection{Feature extraction}
The pre-trained Bert-base-uncased model and tokenizer was downloaded from HuggingFace \cite{devlinbert2018}. Sentences containing the word human 

For each sentence in the dataset, the token positions for the first occurrence of the word human in each sentence were selected and contextualized word embeddings (the token embeddings at the final hidden layer) were averaged across these tokens. The feature dimension of that model is 768. These were the features for each sentence. For each of the contexts, the feature x feature covariance matrices across all sentences was computed. 
 
\subsubsection{MCPCA and interpretation}
MCPCA was performed with 10 components. Per-sentence scores for each MCPC were obtained via the MCPC projection matrix. 

Per-text scores were the median score of per-sentence scores across the sentences within each text. MCPCs were annotated by manual inspection of the top sentences and works.

\subsubsection{Comparison to PCs}
To evaluate whether PCA could resolve the same factors as MCPCA, we compared PCs to MCPCs. PCA with 50 PCs was performed on features of all science sentences, as well as PCs of all fiction sentences, as well as of all sentences combined. The proportion of the top 200 sentences with respect to MCPC4 and MCPC6 scores that were contained within the top 200 sentences with respect to the scores of each PC were computed. In addition, the Spearman correlation between the variance explained in each context for each PC and the MCPCA context loadings was computed.

\section*{Acknowledgments} 
S.B. was supported in part by the Eric and Wendy Schmidt Center at the Broad Institute. 
J.P. was supported in part by a start-up grant from the University of Georgia. 
J.K. was supported in part by NSF DMS 2309782, NSF DMS 2436499, NSF CISE-IIS 2312746, DE SC0025312, and the Sloan Foundation. 
A.S. was supported in part by the Sloan Foundation.

\bibliographystyle{plain}
\bibliography{new_reference}
   
\markboth{}{}

\makeatletter
\AMS@printaddresses
\makeatother

\newpage

\setcounter{figure}{0}
\renewcommand{\thefigure}{SI-\arabic{figure}}
\setcounter{section}{0}
\renewcommand{\thesection}{SI-\arabic{section}}

\section{Details of MCPCA}
\label{sec:details}

\subsection{The input and output of MCPCA}
\label{sec:input_output_model}

MCPCA is a tool to study a set of variables measured in multiple contexts. 
It studies $p$ variables, with samples in one of $k$ contexts.
The data in context $i$ are organized into a matrix of size $n_i \times p$, denoted $X_i$, 
The list of $k$ data matrices is the input to MCPCA, together with a rank $r$. 
The first step is 
the collection of $p \times p$ sample covariance matrices $S_1, \ldots, S_k$.
To do so, the data matrices are mean-centered: the mean $\mu_i \in \RR^p$ is subtracted  from each row. 
Given mean-centered data matrix $\tilde{X}_i \in \RR^{n_i \times p}$, 
the sample covariance matrix is 
$ S_i = \frac{1}{n_i-1} \tilde{X}_i\T \tilde{X}_i $.
MCPCA writes each covariance as a shared set of vectors, multiplied by context-specific weights.

\begin{definition}
\label{def:mcpca}
The \emph{MCPCA model} on $p$ variables and $k$ contexts of rank $r$ is the set of all covariance matrices $\Sigma_1, \ldots, \Sigma_k \in \RR^{p \times p}$ with 
\begin{equation}
    \label{eqn:ABA}
    \Sigma_i = AB_i A\T, \qquad \text{for } \, i = 1, \ldots, k ,
\end{equation} 
where $A \in \RR^{p \times r}$ has unit norm columns and $B_1, \ldots, B_k \in \RR^{r \times r}$ are non-negative and diagonal. 
\end{definition}

For comparison, the PCA model of rank $r$ is the set of covariance matrices of the form $\Sigma  = VDV\T$, where $V$ has size $p \times r$ with orthonormal columns, and $D$ is diagonal of size $r \times r$.

MCPCA takes a tuple of covariance matrices and finds their closest approximation in the MCPCA  model (we discuss objective functions later).  Its output is two matrices
\[ A = \begin{bmatrix} \vline & & \vline \\ 
\fa_1 & \cdots & \fa_r \\ 
\vline & & \vline \end{bmatrix} \in \RR^{p \times r} \qquad \text{and} \qquad 
B = \begin{bmatrix} \vline & & \vline \\ 
\fb_1 & \cdots & \fb_r \\ 
\vline & & \vline \end{bmatrix} \in \RR^{k \times r}.
\] 
The columns of $A$ are the MCPCs. 
They are the axes (or factors, components, or directions) in the space of variables that explain variance across a subset of contexts.
The matrix $B$ is the context loadings. Its entry at position $(i,j)$ is the weight (or importance) of the $j$th factor in the $i$th context, i.e., the $j$th entry of the diagonal matrix $B_i$. 
Not all factors appear in all contexts, since some entries of $B$ may be zero.

\subsection{MCPCA as a tensor decomposition}
\label{sec:connection_to_tensor}

Given covariance matrices $\Sigma_1, \ldots, \Sigma_k \in \RR^{p \times p}$, the covariance tensor $T$ is the $p \times p \times k$ tensor that stacks the matrices together: its entry at position $(\alpha, \beta, i)$, for $\alpha, \beta \in [p]$ and $i \in [k]$, is the covariance of variables $\alpha$ and $\beta$ in context $i$. 
The covariance tensor is partially symmetric: its entries are unchanged under swapping the first two indices. 
The MCPCA model is a coupled decomposition (or simultaneous diagonalization) of the covariance matrix from each context.

\begin{proposition}
\label{prop:connection_to_tensor}
The MCPCA model $\Sigma_i = AB_i A\T$ for all $i = 1, \ldots, k$ is equivalent to the tensor decomposition
\begin{equation}
    \label{eqn:decomp}
    T = \sum_{j=1}^r \fa_j \otimes \fa_j \otimes \fb_j , 
\end{equation}
where $\fa_j$ is the $j$th column of $A \in \RR^{p \times r}$, and $\fb_j$ is the $j$th column of $B \in \RR^{k \times r}$.     
\end{proposition}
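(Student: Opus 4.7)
The plan is to prove the equivalence by an entry-by-entry computation that translates the matrix equations $\Sigma_i = AB_iA\T$ directly into the tensor decomposition of $T$. The bridge between the two formulations is the convention established in Definition~\ref{def:mcpca}: the $i$-th row of the context loading matrix $B$ is the diagonal of the matrix $B_i$, so $(B_i)_{jj} = B_{ij}$.

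For the forward direction, assuming the MCPCA model holds, I would expand the $(\alpha,\beta)$ entry of each $\Sigma_i$ using the diagonality of $B_i$:
\[ T_{\alpha\beta i} \;=\; (\Sigma_i)_{\alpha\beta} \;=\; \sum_{j=1}^r A_{\alpha j}(B_i)_{jj} A_{\beta j} \;=\; \sum_{j=1}^r (\fa_j)_\alpha (\fa_j)_\beta (\fb_j)_i. \]
The right-hand side is precisely the $(\alpha,\beta,i)$-entry of $\sum_{j=1}^r \fa_j \otimes \fa_j \otimes \fb_j$, so matching entries at every $(\alpha,\beta,i)$ yields the tensor identity \eqref{eqn:decomp}.

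For the reverse direction, I would read the same identity backwards. Given the tensor decomposition, fixing $i$ and varying $\alpha,\beta$ shows that the $i$-th slice of $T$, which is $\Sigma_i$, equals $AB_iA\T$ with $B_i = \Diag((\fb_1)_i, \ldots, (\fb_r)_i)$. The non-negativity of each $B_i$ (required by Definition~\ref{def:mcpca}) is equivalent to the non-negativity of each $\fb_j$ imposed in the partially non-negative tensor decomposition.

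The argument is essentially a reindexing of a single triple sum, so I do not expect a substantive obstacle. The only care required is bookkeeping of conventions: the partial symmetry of $T$ in its first two indices must line up with the symmetric sandwich $A(\cdot)A\T$, and the rows of $B$ (indexed by contexts) must be consistently matched with the diagonals of the matrices $B_i$.
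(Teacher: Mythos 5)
Your proof is correct and follows essentially the same approach as the paper: both arguments verify the identity entrywise by expanding $(\Sigma_i)_{\alpha\beta} = \sum_{j=1}^r a_{\alpha j}(B_i)_{jj}a_{\beta j}$ and matching it to the $(\alpha,\beta,i)$-entry of $\sum_j \fa_j\otimes\fa_j\otimes\fb_j$ via the convention $(B_i)_{jj}=b_{ij}$. Your explicit treatment of the reverse direction and the non-negativity bookkeeping is a harmless elaboration of the same computation.
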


\begin{proof}
Each summand of the tensor decomposition in~\eqref{eqn:decomp} is a $p \times p \times k$ rank-one tensor. 
The tensor $\fa_j \otimes \fa_j \otimes \fb_j$ has $(\alpha, \beta, i)$ entry $(\fa_j)_\alpha (\fa_j)_\beta (\fb_j)_i = a_{\alpha j} a_{\beta j} b_{ij}$. 
We have $b_{ij} = (B_i)_{jj}$.
Hence the $i$th slice of $T$ has $(\alpha, \beta)$ entry  
$\sum_{j=1}^r a_{\alpha j} a_{\beta j} (B_{i})_{jj}$, which is
$(AB_i A\T)_{\alpha, \beta}.$    
\end{proof}

The tensor decomposition~\eqref{eqn:decomp} is illustrated in \textbf{Supplementary Figure S1}. 
Every partially symmetric tensor (for example, the covariance tensor $T$) has a decomposition~\eqref{eqn:decomp} for large enough rank $r$.
For an input rank $r$, MCPCA seeks the closest approximation of the covariance tensor $T$ by a rank $r$ partially symmetric tensor.
The approximation 
is computed using the tensor decomposition algorithm MSPM~\cite{wang2025multi}, modified to impose non-negativity on the matrix $B$ (since each of its entries is a variance).

\begin{figure}
\centering 
\includegraphics[width = 0.7\linewidth]{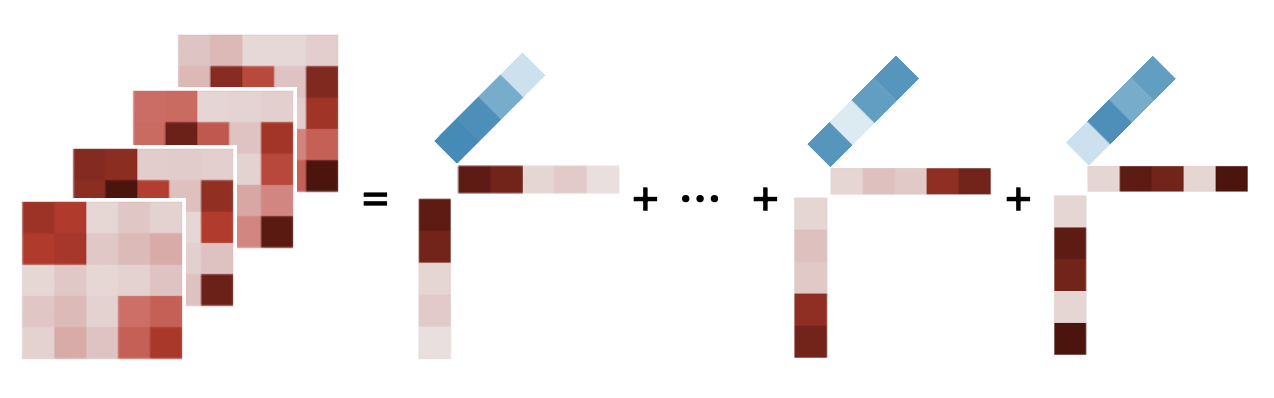}
\caption{\textbf{A tensor decomposition of the covariance tensor.} Illustrated for five variables measured across four contexts.
}
\label{SI: figure1}
\end{figure}

\subsection{MCPCA as a latent variable model}
\label{sec:uncorrelated}

In the MCPCA model, the MCPC projection transforms the data to be uncorrelated in every context. 
Let $\fx_i$ be the $p$ observed variables in context $i$. The MCPCA model is $\fx_i = A B_i^{1/2} \fz$, where $\fz$ is a shared vector of $r$ latent variables, which are uncorrelated and unit variance, the $r \times r$ matrix $B_i$ encodes their importance in context $i$, and the MCPC matrix $A$ transforms from latent to observed variables. 
Then $A^\dagger \mathbf{x}_i = B_i^{1/2} \fz$.
For comparison, PCA is the model $\fx = V \fz$, where $\fx$ are observed variables, $\fz$ are uncorrelated and unit variance latent variables, and $V$ is an orthogonal $p \times r$ matrix whose columns are the PCs. We project to the space of PCs via $V\T \mathbf{x}$.

\section{Theoretical properties of MCPCA}\label{sec:theory}

We show the identifiability of MCPCA and compute the dimension of the MCPCA model. We demonstrate how it generalizes the four main facets of usual PCA: minimizing reconstruction error, maximizing variance explained, transforming to uncorrelated variables, and maximum likelihood  estimation in a multivariate Gaussian model.

\subsection{Identifiability}
\label{sec:identifiability}

We show the uniqueness of the output of MCPCA when $r \leq p$, 
up to benign sign and ordering transformations and away from a measure zero set, as is true for usual PCA. MCPCs are unaffected by sign, just like usual PCs. A choice must be made to order the MCPCs. Usual PCs are ordered by eigenvalue. The order of weights of the MCPCs will differ across contexts, so instead the MCPCs can be ordered e.g. by the 
total variance across all contexts, i.e. by the sum of the columns of $B$.  
We show that the MCPCA model is unique away from a measure zero set after fixing the sign and order of the MCPCs.

\begin{definition}[Identifiability of latent variable models, see~\cite{allman2009identifiability}]
    A point in a statistical model with latent variables is \emph{identifiable} if its parameters can be recovered up to sign and reordering. The model is \emph{generically identifiable} if the set of 
non-identifiable parameters has measure zero in the space of parameters.
\end{definition}

 A point in the MCPCA model is a tuple of matrices $\Sigma_i = A B_i A\T$ for $i = 1, \ldots, k$. 
We show the generic identifiability of MCPCA. It does not require orthogonality of the MCPCs. 
This differs from usual PCA, which requires orthogonal PCs for uniqueness in the eigendecomposition $VDV\T$. 
Since orthogonality is not required, we do not impose it, reasoning that it may be restrictive in practice. 
The absence of orthogonality means we can in principal extend to more latent factors than observed variables (the overcomplete setting, $r > p$).  However, here we focus here on $r \leq p$. 
(When orthogonality is present, 
we investigate the benefits of MCPCA over competing tools, see Section~\ref{sec:orthogonal}.)

\begin{proposition}
\label{prop:identifiability}
   The MCPCA model is generically identifiable when $r \leq p$. 
\end{proposition}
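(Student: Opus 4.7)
The plan is to transfer the question to uniqueness of the partially symmetric tensor decomposition $T = \sum_{j=1}^r \fa_j \otimes \fa_j \otimes \fb_j$ via Proposition~\ref{prop:connection_to_tensor}, and then apply a Jennrich-style simultaneous diagonalization argument. Since $r \leq p$, for generic $A \in \RR^{p \times r}$ the columns are linearly independent, so $\rank(A) = r$ and the column space of every slice $\Sigma_i = A B_i A\T$ lies in $\Span(A)$.

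First I would form two generic linear combinations of the slices: pick generic $c^{(1)}, c^{(2)} \in \RR^k$ and set
\[
M_\ell \;=\; \sum_{i=1}^k c_i^{(\ell)} \Sigma_i \;=\; A D^{(\ell)} A\T, \qquad \ell = 1, 2,
\]
where $D^{(\ell)}$ is the diagonal matrix with $(D^{(\ell)})_{jj} = \langle c^{(\ell)}, \fb_j \rangle$. For generic $(c^{(1)}, c^{(2)})$ and generic $B$, each $D^{(\ell)}$ is invertible and the ratios $d_j^{(1)}/d_j^{(2)}$ are distinct across $j = 1,\ldots,r$. Factoring $A = VR$ with $V$ an orthonormal basis of $\Span(A)$ and $R$ invertible, a direct computation shows
\[
M_1 M_2^\dagger \;=\; V R\, D^{(1)} (D^{(2)})^{-1} R^{-1} V\T,
\]
whose nonzero eigenvalues are the distinct ratios $d_j^{(1)}/d_j^{(2)}$ with one-dimensional eigenspaces spanned by the columns of $A$. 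This recovers the $\fa_j$ up to scaling and permutation, and the unit-norm normalization in Definition~\ref{def:mcpca} then fixes the scaling up to sign. Because each summand $\fa_j \otimes \fa_j \otimes \fb_j$ is invariant under $\fa_j \mapsto -\fa_j$, the sign of $\fa_j$ is a genuine model symmetry, matching the allowed ambiguity in the definition of identifiability. The weights $B$ are then recovered linearly: $A^\dagger \Sigma_i (A\T)^\dagger = B_i$ is forced to be diagonal, and its entries fill out the $i$th row of $B$.

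The loci on which this argument breaks down — where $A$ drops rank, some $D^{(\ell)}$ is singular, or two of the ratios $d_j^{(1)}/d_j^{(2)}$ coincide for every choice of $(c^{(1)}, c^{(2)})$ — are cut out by polynomial conditions in the entries of $A$ and $B$, so they form a Zariski-closed subset of parameter space. The main obstacle is verifying that this bad set is proper, i.e.\ not all of parameter space: this requires exhibiting a single identifiable instance, which is straightforward (e.g.\ $A$ with entries sampled independently from a continuous distribution and $B$ strictly positive with generic entries). Once properness is established the bad set has Lebesgue measure zero, giving generic identifiability. An alternative route is to apply Kruskal's theorem directly to the partially symmetric format: generically $\krank(A) = r$ and $\krank(B) \geq \min(k, r)$, so the Kruskal bound $2\krank(A) + \krank(B) \geq 2r + 2$ holds whenever $\min(k, r) \geq 2$, with the remaining low-dimensional cases handled by a separate eigendecomposition argument.
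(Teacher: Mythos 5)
Your proof is correct, but it takes a genuinely different route from the paper's. The paper flattens $T$ to $M = \begin{bmatrix} \Sigma_1 & \cdots & \Sigma_k\end{bmatrix} = A(A \odot B)\T$ and argues that, generically, the row span of $M$ contains only the $r$ rank-one matrices $\fa_j \otimes \fb_j$ up to scale, by the Generalized Trisecant Lemma; this pins down the summands of the tensor decomposition directly. You instead run the classical Jennrich/Leurgans--Ross--Abel simultaneous-diagonalization argument on two generic slice combinations, which is more elementary and constructive (it is essentially the ``Jennrich'' baseline the paper benchmarks against). Both arguments ultimately rest on the same genericity: your requirement that the ratios $d_j^{(1)}/d_j^{(2)}$ can be made pairwise distinct is exactly the condition that no two $\fb_j$ are proportional, which is the explicit sufficient condition the paper isolates separately in Proposition~\ref{prop:identifiability explicit} (proved there via Kruskal, matching your alternative route). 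Two small points to tighten: (i) identifiability is a uniqueness statement, so after computing the eigenspaces of $M_1 M_2^\dagger$ you should add the standard bridging step that any competing rank-$r$ decomposition $(A',B')$ of the same $T$ satisfies $\mathrm{span}(A') = \mathrm{col}(M_2)$, forcing $D'^{(2)}$ to be invertible and the columns of $A'$ into those one-dimensional eigenspaces; (ii) both your argument and the paper's implicitly need $k \geq 2$ (and $r \geq 2$ for the Kruskal variant), since for $k=1$ all $\fb_j$ are proportional and the model is genuinely non-identifiable. What the trisecant-based argument buys the paper is a direct link to the theory behind its MSPM-based algorithm (which searches for rank-one elements of a subspace); what yours buys is an explicit reconstruction procedure with no algebraic-geometry input.
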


\begin{proof}
We show the uniqueness of the tensor decomposition in Proposition~\ref{prop:connection_to_tensor}. We concatenate the covariance matrices to flatten $T$ into a matrix $M = \begin{bmatrix} \Sigma_1 & \cdots & \Sigma_k \end{bmatrix} \in \RR^{p \times pk}$.
Then $M = A (A \odot B)\T$, where $A \odot B \in \mathbb{R}^{pk \times r}$ has as columns the vectorizations of $\fa_j \otimes \fb_j$.
Since $r \leq p$, generically the row span of $M$ equals the column span of $A \odot B$.
This linear space is spanned by the rank-one matrices $\fa_j \otimes \fb_j$.  
For generic $\fa_j \otimes \fb_j$, the linear space contains only these $r$ rank-one matrices (and their scalar multiples), by the Generalized Trisecant Lemma~\cite{chiantini2002weakly}, see also \cite[Theorem 4.2 and Corollary 6.3]{wang2025multi}. 
This implies that generically $\fa_j \otimes \fa_j \otimes \fb_j$ are uniquely determined by $T$, up to scale.  
Generically, these rank-one tensors are linearly independent, which implies that their scales are also uniquely determined by $T$.  Hence the rank $r$ decomposition of $T$ is unique. 
\end{proof}

The previous result establishes that non-identifiability of MCPCA holds on a measure zero set of parameters. The next result gives a sufficient condition for identifiability at a point in the model. 

\begin{proposition}\label{prop:identifiability explicit}
    The MCPCA model is identifiable when $\fa_1, \ldots, \fa_r$ are linearly independent and no pair of $\fb_1, \ldots, \fb_r$ is linearly dependent. Linear independence $\fb_j = \lambda \fb_i$ for $i \neq j$ leads to non-identifiability. 
\end{proposition}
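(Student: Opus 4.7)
The plan is to prove the two directions in the statement separately, leveraging the flattening approach from the proof of Proposition~\ref{prop:identifiability} but replacing the Generalized Trisecant Lemma with an explicit argument that exploits the two stated hypotheses.

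For the sufficient direction, I would start from the flattening $M = \begin{bmatrix} \Sigma_1 & \cdots & \Sigma_k \end{bmatrix} = A(A \odot B)\T$. Linear independence of $\fa_1, \ldots, \fa_r$ makes $A$ have rank $r$, so the row span of $M$ (an invariant of $T$) coincides with the column span $V \subset \RR^{pk}$ of the Khatri--Rao product $A \odot B$. Because no pair of $\fb_j$ is proportional, no $\fb_j$ is zero, and a direct calculation shows that the $r$ columns $\fa_j \otimes \fb_j$ are linearly independent and form a basis of $V$. I would next characterize the rank-one elements of $V$ viewed as $p \times k$ matrices: a general element $\sum_j c_j \fa_j \fb_j\T$ has $i$-th column $\sum_j c_j b_{ij} \fa_j$, and linear independence of $\{\fa_j\}$ implies its rank equals the rank of $B \, \mathrm{diag}(c_1, \ldots, c_r)$. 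For this rank to equal one, either a single $c_j$ is nonzero, or two of the columns $\fb_j$ with $c_j \neq 0$ are proportional; the latter is excluded by hypothesis. Hence the rank-one elements of $V$ are precisely the scalar multiples of $\fa_j \otimes \fb_j$, which are therefore determined by $T$; unit-norm normalization of $\fa_j$ fixes the remaining scale up to sign and recovers $\fb_j$.

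For the non-identifiability direction, suppose $\fb_j = \lambda \fb_i$ for some $i \neq j$. Non-negativity of $B$ forces $\lambda \geq 0$, and the case $\lambda = 0$ is trivial (then $\fb_j = 0$ and the direction $\fa_j$ can be replaced arbitrarily). For $\lambda > 0$, combine the two summands
\begin{equation*}
\fa_i \otimes \fa_i \otimes \fb_i + \fa_j \otimes \fa_j \otimes \fb_j = \bigl(\fa_i \fa_i\T + \lambda\, \fa_j \fa_j\T\bigr) \otimes \fb_i = S \otimes \fb_i,
\end{equation*}
where $S = X X\T$ with $X = \begin{bmatrix} \fa_i & \sqrt{\lambda}\, \fa_j\end{bmatrix}$. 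For any $2 \times 2$ orthogonal matrix $O$, the matrix $XO$ is another square root of $S$; normalizing its columns to unit norm gives an alternative expansion $S = \alpha_1 \tilde{\fu}_1 \tilde{\fu}_1\T + \alpha_2 \tilde{\fu}_2 \tilde{\fu}_2\T$ with $\alpha_k \geq 0$. Re-inserting into the tensor decomposition produces a one-parameter family of valid MCPCA decompositions not related to the original by sign or permutation, certifying non-identifiability.

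The main obstacle is the rank-one characterization inside $V$: the argument has to combine linear independence of $\{\fa_j\}$ and pairwise non-proportionality of $\{\fb_j\}$ to rule out spurious rank-one elements arising from cancellation, and some care is needed to verify that the failure of either hypothesis genuinely breaks the chain (matching the necessity direction and confirming the conditions are tight).
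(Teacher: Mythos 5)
Your proof is correct, but the sufficiency direction takes a genuinely different route from the paper. The paper proves sufficiency in one line by invoking Kruskal's theorem: linear independence gives $\krank(A)=r$, pairwise non-proportionality gives $\krank(B)\geq 2$, and the partially symmetric Kruskal bound $r \leq \krank(A)+\tfrac12\krank(B)-1$ is then met. You instead work with the flattening $M = A(A\odot B)\T$ and characterize the rank-one elements of its row space by hand: writing a general element as $A\,\mathrm{diag}(c)\,B\T$ and using full column rank of $A$ to reduce to the rank of $B\,\mathrm{diag}(c)$, so that pairwise linear independence of the $\fb_j$ forces exactly one nonzero coefficient. This is essentially the paper's proof of the \emph{generic} identifiability statement (Proposition~\ref{prop:identifiability}) with the appeal to the Generalized Trisecant Lemma replaced by an explicit computation that the stated hypotheses make possible; it is more elementary and self-contained, shows precisely where each hypothesis enters, and needs no genericity, at the cost of being longer than the Kruskal citation. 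Two small points you gloss over but which are routine: to compare against an arbitrary alternative rank-$r$ decomposition you should note that its span of rank-one matrices contains the row space of $M$ and has dimension at most $r=\mathrm{rank}(M)$, hence equals it; and recovering the scales of the $\fb_j$ (not just their directions) uses linear independence of the tensors $\fa_j\otimes\fa_j\otimes\fb_j$, which follows from the same full-column-rank argument. For the non-identifiability direction your construction is the same as the paper's: the paper splits $\mu^2\fa_1\otimes\fa_1+\fa_2\otimes\fa_2$ by one explicit rotation, while you exhibit the whole one-parameter family of square roots $XO$ of $S=XX\T$ (and, like the paper's closing remark, treat $\fb_j=0$ separately), which is a mild generalization of the same idea.
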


\begin{proof}
MCPCA is identifiable when the decomposition of $T = \sum_{j=1}^r \fa_j\otimes \fa_j \otimes \fb_j$ is unique. We use Kruskal's Theorem~\cite{kruskal1977three}, which says that the tensor decomposition of $T$ is unique when $r \leq \rm{krank}(A) + \frac12 \rm{krank}(B)- 1$, where $\rm{krank}(M)$ is the maximal number $\ell$ such that any subset of $\ell$ columns of $M$ are linearly independent.
All $\fa_j$ are linearly independent, so $\rm{krank}(A) = r$. 
No pair of $\fb_j$ is collinear, so $\rm{krank}(B) \geq 2$. 
It follows that the decomposition is unique up to $r + 1 - 1 = r$. 

 To show non-identifiability, suppose $r>1$ and that $\fb_1=\mu^2 \fb_2$ for some $\mu\neq 0$.  
Then
\[
\mu^2 \fa_1 \otimes \fa_1 + \fa_2 \otimes \fa_2
= \tfrac{\mu^2}{2}\bigl(\fa_1 + \tfrac{1}{\mu}\fa_2\bigr)^{\otimes 2}
+ \tfrac{\mu^2}{2}\bigl(\fa_1 - \tfrac{1}{\mu}\fa_2\bigr)^{\otimes 2}.
\]
Let $\fv = \fa_1 + \tfrac{1}{\mu}\fa_2$ and $\fw = \fa_1 - \tfrac{1}{\mu}\fa_2$.
We obtain an alternative decomposition
$T
= \fv^{\otimes 2} \otimes \tfrac{1}{2}\fb_1
+ \fw^{\otimes 2} \otimes \tfrac{\mu^2}{2}\fb_2
+ \sum_{j=3}^r \fa_j \otimes \fa_j \otimes \fb_j$.
It is not equivalent to the original one, since $\fv$ and $\fw$ are not scalar multiples of $\fa_1$ or $\fa_2$.
Therefore, the model is not identifiable.  
The only other case is $\fb_1 = 0$, since $B$ is nonnegative; it gives non-identifiability too. 
\end{proof}

We use identifiability to find the dimension of the MCPCA model.
    It lives in the space of $k$-tuples of $p \times p$ covariance matrices, which has dimension ${{p + 1 \choose 2}}k$, since there are ${{ p + 1\choose 2}}$ degrees of freedom in each of the $p \times p$ covariance matrices. 
    
\begin{proposition}
\label{prop:dim}
    The MCPCA model on $p$ variables and $k$ contexts of rank $r \leq p$ has dimension $r(p + k - 1)$.
\end{proposition}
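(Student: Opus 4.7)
The plan is to compute the dimension by parameterizing the MCPCA model explicitly and then invoking the identifiability result of Proposition~\ref{prop:identifiability} to conclude that the parameterization is generically finite-to-one, so the image has the same dimension as the parameter space.

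First, I would set up the parameter space. The matrix $A \in \RR^{p \times r}$ has $r$ columns each constrained to have unit norm, so the space of admissible $A$ is the product of spheres $(S^{p-1})^r$, a smooth manifold of dimension $r(p-1)$. The matrix $B$ of context loadings lies in the non-negative orthant $\RR_{\geq 0}^{k \times r}$, which is a full-dimensional semi-algebraic subset of $\RR^{kr}$, hence of dimension $kr$. The total dimension of the parameter space is therefore $r(p-1) + kr = r(p+k-1)$.

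Next, consider the polynomial parameterization map
\[
\phi : (S^{p-1})^r \times \RR_{\geq 0}^{k \times r} \longrightarrow (\Sym^2 \RR^p)^k, \qquad (A, B) \longmapsto (AB_1 A\T, \ldots, AB_k A\T),
\]
whose image is the MCPCA model. By Proposition~\ref{prop:identifiability}, for parameters outside a measure-zero set the tuple $(A, B)$ is determined by $\phi(A, B)$ up to sign flips of the columns $\fa_j$ and simultaneous permutations of the columns of $A$ and $B$. These symmetries form a finite group of order $2^r \cdot r!$, so the generic fiber of $\phi$ is finite.

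Finally, I would apply the constant rank / fiber dimension theorem for algebraic (or smooth) maps between semi-algebraic sets: if $\phi$ is a polynomial map whose generic fiber is finite, then $\dim(\im \phi) = \dim(\text{domain})$. This yields the dimension $r(p+k-1)$ for the MCPCA model. The only real subtlety is ensuring that the identifiability in Proposition~\ref{prop:identifiability}, which gives uniqueness up to discrete symmetry, is enough to rule out positive-dimensional fibers; but this is exactly what generic finiteness of the fiber encodes, so the identifiability result does all the work and there is no serious obstacle beyond correctly accounting for the unit-norm constraint on the columns of $A$.
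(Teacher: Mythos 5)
Your proposal is correct and follows essentially the same route as the paper: count parameters ($r(p-1)$ for the unit-norm columns of $A$, $kr$ for the diagonal matrices $B_i$) and then invoke the generic identifiability of Proposition~\ref{prop:identifiability} to conclude that the model dimension equals the parameter-space dimension. Your version is slightly more explicit than the paper's, spelling out the finite symmetry group of order $2^r \cdot r!$ and the fiber-dimension theorem that the paper leaves implicit, but there is no substantive difference in the argument.
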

\begin{proof}
    The MCPCA model says that $\Sigma_i = A B_i A\T$. The matrix $A$ has  $(p-1)r$ degrees of freedom, since it is a $p \times r$ matrix with norm one columns. The degrees of freedom in $B_i$ is~$r$, since it is diagonal, hence the total number of parameters contributed by the matrices $B_i$ is $kr$. Hence, the number of parameters for the MCPCA model is $(p-1)r + kr = r ( p + k - 1)$. The model is generically identifiable, by Proposition~\ref{prop:identifiability}. Hence, the dimension of the model equals the dimension of the parameter space.  
\end{proof}

\subsection{Reconstruction error}
\label{sec:reconstruction}

Reconstruction of the $p \times p \times k$ tensor $T$ is equivalent to reconstruction of the covariance matrices $\Sigma_i$, with respect to the Frobenius norm $\| \cdot \|_F$, which (for a matrix or a tensor) is the square root of the sum of the squares of the entries. 

\begin{proposition}
\label{prop:frob_norms}
Minimizing the average reconstruction error of the covariance matrices $ \frac{1}{k}\sum_{i=1}^k \| \Sigma_i - AB_i A\T \|_F^2$ is equivalent to finding a closest approximation of $T$ by a rank $r$ partially symmetric tensor, with respect to the Frobenius norm. 
\end{proposition}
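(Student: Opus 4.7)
The plan is to reduce the slicewise matrix approximation problem to a single tensor approximation problem by exploiting the coordinate-free definition of the Frobenius norm for tensors.

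First, I would observe that the Frobenius norm of a $p \times p \times k$ tensor $T$ is just the square root of the sum of the squares of all its entries, so it splits along the third mode as $\|T\|_F^2 = \sum_{i=1}^k \|T_{:,:,i}\|_F^2$, where $T_{:,:,i}$ is the $i$-th frontal slice. Since the covariance tensor has $i$-th slice $\Sigma_i$ by construction, and any candidate MCPCA reconstruction with parameters $(A, B_1, \ldots, B_k)$ assembles into a $p \times p \times k$ tensor $\hat{T}$ whose $i$-th slice is $A B_i A^\mathsf{T}$, this slicewise identity applied to $T - \hat{T}$ gives
\[
\|T - \hat{T}\|_F^2 = \sum_{i=1}^k \bigl\|\Sigma_i - A B_i A^\mathsf{T}\bigr\|_F^2.
\]

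Next, I would invoke Proposition~\ref{prop:connection_to_tensor} to identify $\hat{T}$ with the partially symmetric rank $r$ tensor $\sum_{j=1}^r \fa_j \otimes \fa_j \otimes \fb_j$, where $\fa_j$ and $\fb_j$ are the $j$-th columns of $A$ and $B$. Conversely, any partially symmetric rank $r$ tensor admits such a decomposition, and so corresponds to a valid tuple $(A, B_1, \ldots, B_k)$ in the MCPCA model (with the non-negativity on the $\fb_j$ being the only MCPCA-specific constraint, which is imposed on both sides of the equivalence). Therefore the parameter set for the slicewise problem and the parameter set for the tensor problem are in bijection, and the objective values agree up to the constant factor $1/k$.

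Dividing the displayed identity by $k$ then shows that minimizing $\frac{1}{k}\sum_{i=1}^k \|\Sigma_i - A B_i A^\mathsf{T}\|_F^2$ over admissible $(A, B_1, \ldots, B_k)$ is equivalent to minimizing $\|T - \hat{T}\|_F^2$ over rank $r$ partially symmetric tensors $\hat{T}$. There is no real obstacle here: the only subtlety to be careful about is ensuring the correspondence between MCPCA parameters and partially symmetric rank $r$ decompositions is a bijection onto the relevant feasible set, which is exactly the content of Proposition~\ref{prop:connection_to_tensor}.
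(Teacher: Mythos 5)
Your proof is correct and follows essentially the same route as the paper, which simply rewrites $\frac{1}{k}\sum_{i=1}^k \| \Sigma_i - AB_i A\T \|_F^2$ as $\frac{1}{k} \| T - \sum_{j=1}^r \fa_j \otimes \fa_j \otimes \fb_j \|_F^2$ using the slicewise additivity of the Frobenius norm and the correspondence from Proposition~\ref{prop:connection_to_tensor}. You have merely spelled out the details that the paper's one-line proof leaves implicit.
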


\begin{proof}
The average reconstruction error can be rewritten as 
$\frac{1}{k} \| T - \sum_{j=1}^r \fa_j \otimes \fa_j \otimes \fb_j \|_F^2.$
\end{proof}

\subsection{Variance explained}
\label{sec:variance_explained}

\begin{definition}[Average squared variance explained by $\{\fa_1,\ldots,\fa_r\}$]
Let $\Sigma_1,\ldots,\Sigma_k \in \mathbb{R}^{p\times p}$ be covariance matrices and 
let $A = [\fa_1,\ldots,\fa_r] \in \mathbb{R}^{p\times r}$ be a matrix with unit vector columns.
Define the subspace of matrices
\[
\mathcal{\Acal} = \mathrm{span}\{  \fa_j \fa_j\T : j = 1,\ldots,r \} 
\subseteq \mathbb{R}^{p\times p}.
\]
For each $i$, let $P_{\Acal}(\Sigma_i)$ be the projection of 
$\Sigma_i$ onto the subspace $\mathcal{A}$, i.e.
\[
P_{\mathcal{A}}(\Sigma_i)
:= 
\arg\min_{M \in \Acal} \|\Sigma_i - M\|_F^2.
\]
The \emph{average squared variance explained} in $\Sigma_1,\ldots,\Sigma_k$ by 
$\{\fa_1,\ldots,\fa_r\}$ is 
$
\frac{1}{k}\sum_{i=1}^k \bigl\| P_{\Acal}(\Sigma_i) \bigr\|_F^2$.
\end{definition}

\begin{proposition}\label{prop: max var exp}
Maximizing the average squared variance explained in $\Sigma_1, \ldots, \Sigma_k$ is equivalent to computing a closest approximation of $T$ by a rank $r$ partially symmetric tensor, with respect to Frobenius norm.
\end{proposition}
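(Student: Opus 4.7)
The plan is to reduce the statement to Proposition~\ref{prop:frob_norms} by showing that, for each fixed candidate matrix $A$, the optimal choice of the diagonal matrices $B_i$ in the reconstruction objective reproduces exactly the subspace projection $P_\Acal(\Sigma_i)$, and then using the Pythagorean theorem to convert minimization of reconstruction error into maximization of variance explained.

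First I would record the algebraic identity $\{ A B A\T : B\in\RR^{r\times r}\ \text{diagonal}\} = \Acal$, which holds because $A B A\T = \sum_{j=1}^r B_{jj}\,\fa_j\fa_j\T$. This means that, for any fixed $A$, the inner problem $\min_{B_i \text{ diag}} \|\Sigma_i - A B_i A\T\|_F^2$ is exactly orthogonal projection of $\Sigma_i$ onto $\Acal$ in the Frobenius inner product, so its optimum equals $\|\Sigma_i - P_\Acal(\Sigma_i)\|_F^2$. Pythagoras applied to the orthogonal decomposition $\Sigma_i = P_\Acal(\Sigma_i) + (\Sigma_i - P_\Acal(\Sigma_i))$ then gives
\[
\|\Sigma_i - P_\Acal(\Sigma_i)\|_F^2 \;=\; \|\Sigma_i\|_F^2 \;-\; \|P_\Acal(\Sigma_i)\|_F^2.
\]

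Averaging over $i$ and noting that $\tfrac{1}{k}\sum_i \|\Sigma_i\|_F^2$ depends only on the data, minimizing the average reconstruction error jointly over $A$ (with unit columns) and the diagonal matrices $B_1,\ldots,B_k$ becomes the same optimization problem as maximizing the average squared variance explained $\tfrac{1}{k}\sum_i \|P_\Acal(\Sigma_i)\|_F^2$ over $A$. Proposition~\ref{prop:frob_norms} identifies the former with the closest rank-$r$ partially symmetric approximation of $T$ in Frobenius norm, so the two sides of the claimed equivalence coincide. Since the inner optimization is separable across $i$, taking the minimum inside the sum is clearly valid.

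The main thing to be careful about is that both sides of the equivalence are \emph{unconstrained} in the sign of the $B_i$: the variance-explained definition uses the linear span $\Acal$, and a rank-$r$ partially symmetric approximation permits $\fb_j$ with arbitrary entries. Hence the non-negativity built into Definition~\ref{def:mcpca} does not enter the argument; if it were imposed, the inner minimization would become projection onto a cone rather than a subspace and the Pythagorean identity would weaken to an inequality. Beyond this bookkeeping the plan is a routine computation, so I do not foresee a serious obstacle.
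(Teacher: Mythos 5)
Your proposal is correct and follows essentially the same route as the paper: identify the inner minimization over diagonal $B_i$ with orthogonal projection onto $\Acal$, apply the Pythagorean identity $\|\Sigma_i\|_F^2 = \|\Sigma_i - P_\Acal(\Sigma_i)\|_F^2 + \|P_\Acal(\Sigma_i)\|_F^2$, and reduce to Proposition~\ref{prop:frob_norms}. Your closing remark that the argument works with sign-unconstrained $B_i$ (so the non-negativity in Definition~\ref{def:mcpca} plays no role here) is a fair and accurate caveat that the paper leaves implicit.
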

\begin{proof}
Fix vectors $\fa_1,\ldots,\fa_r\in\mathbb{R}^p$ and let 
$\Acal=\mathrm{span}\{\fa_j\fa_j\T : j=1,\ldots,r\}$.    
The projection $P_{\Acal}(\Sigma_i)$ has the form 
$A B_i A\T$, where $A = [\fa_1,\ldots,\fa_r] \in \mathbb{R}^{p\times r}$ and $B_i$ is a diagonal matrix that minimizes the least squares objective 
$\|\Sigma_i - A B_i A\T\|_F^2$ (where $\Sigma_i$ and $A$ are fixed).
We have 
$\|\Sigma_i\|_F^2 = \|\Sigma_i- A B_i A\T\|_F^2 + \| AB_i A\T\|^2_F$, by orthogonality of the residual from the approximation.
Thus, maximizing the average squared variance explained is equivalent to minimizing 
$\sum_{i=1}^k \bigl\| \Sigma_i - A B_i A\T \bigr\|_F^2$, which is a closest approximation of $T$ by a rank $r$ partially symmetric tensor, with respect to Frobenius norm, by Proposition~\ref{prop:frob_norms}.
\end{proof}

\subsection{A transformation to uncorrelated variables}

Given a matrix $A$ of size $p \times r$, its pseudo-inverse is $A^\dagger$, a matrix of size $r \times p$ that satisfies conditions analogous to the inverse matrix. For our purposes, it will suffice to consider a matrix $A$ with linearly independent columns, for which $A^\dagger = (A\T A)^{-1} A\T$ and hence $A^\dagger A = I_r$, the identity matrix of size $r \times r$.
That is, $A^\dagger$ is a left inverse.
An important example is that for a matrix $V \in \RR^{p \times r}$ with orthonormal columns, $V^\dagger = V\T$. 

For usual PCA, transforming the data via $V\T$ projects to a space in which the data are uncorrelated. 
In MCPCA, transforming the variables via the MCPC projection matrix $A^\dagger$ projects to a lower-dimensional space in which the data are uncorrelated in every context. 

\begin{proposition}\label{prop: uncorrelated}
Let $\Sigma_i = A B_i A\T$ be the covariance matrix of $\fx_i$ in the $i$th context, for $i=1,\ldots,k$ where $A$ is a matrix with linearly independent columns. 
Then $A^{\dagger}$ transforms $\fx_i$ to $A^\dagger \fx_i$, whose components are uncorrelated in every context.
\end{proposition}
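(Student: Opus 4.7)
The plan is to compute the covariance matrix of the transformed random vector $A^{\dagger}\fx_i$ directly and observe that it is diagonal, which is equivalent to pairwise uncorrelatedness of its components. The key inputs are the linearity of covariance under linear transformations and the left-inverse identity $A^{\dagger}A = I_r$, which holds because $A$ has linearly independent columns (as noted in Section~\ref{sec:reconstruction}, where $A^\dagger = (A\T A)^{-1}A\T$ when the columns of $A$ are linearly independent).

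First I would write $\Cov(A^{\dagger}\fx_i) = A^{\dagger}\,\Cov(\fx_i)\,(A^{\dagger})\T = A^{\dagger}\Sigma_i (A^{\dagger})\T$, using the standard transformation rule for covariance under a deterministic linear map. Substituting the MCPCA model $\Sigma_i = AB_iA\T$ gives
\[
\Cov(A^{\dagger}\fx_i) \;=\; A^{\dagger}A\,B_i\,A\T(A^{\dagger})\T \;=\; A^{\dagger}A\,B_i\,(A^{\dagger}A)\T.
\]
Next I would invoke linear independence of the columns of $A$ to conclude $A^{\dagger}A = I_r$, so both the left and right factors collapse to identity matrices, leaving $\Cov(A^{\dagger}\fx_i) = B_i$.

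Finally, since $B_i$ is diagonal by Definition~\ref{def:mcpca}, all off-diagonal entries of the covariance matrix of $A^{\dagger}\fx_i$ vanish, which is precisely the statement that the components of $A^{\dagger}\fx_i$ are pairwise uncorrelated; and this holds in every context $i = 1,\ldots,k$. There is no substantive obstacle here beyond bookkeeping: the main thing to be careful about is using the left-inverse identity $A^{\dagger}A = I_r$ (which requires linear independence of columns) rather than the right-inverse identity $AA^{\dagger}$, which would only equal the identity when $A$ is square and invertible.
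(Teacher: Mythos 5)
Your proposal is correct and follows essentially the same route as the paper's proof: compute $\Cov(A^\dagger \fx_i) = A^\dagger \Sigma_i (A^\dagger)\T$, substitute $\Sigma_i = AB_iA\T$, use $A^\dagger A = I_r$ to collapse to $B_i$, and conclude from diagonality of $B_i$. Your additional remark distinguishing the left-inverse identity from $AA^\dagger$ is a sensible point of care but does not change the argument.
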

\begin{proof}
The covariance of $A^\dagger \fx_i$ is
\[
\Cov(A^\dagger \fx_i)
= A^\dagger \Sigma_i (A^\dagger)\T
= A^\dagger A B_i A\T (A^\dagger)\T
= B_i,
\]
which is diagonal. Hence, the components of $A^\dagger \fx_i$ are uncorrelated in each context.
\end{proof}

   Usual PCA only achieves partially uncorrelated  variables in the multi-context setting, as follows. The PCs in individual contexts transform the data to be uncorrelated only in that context. The PCs of the combined data transform it to be uncorrelated together but, after restricting to one context, the data will not remain uncorrelated in general.
   
In practice, one does not have exact fit of data to the MCPCA model.
The correlation in transformed variables can be quantified as $\sum_{i=1}^k\|\rm{offdiag}(A^{\dagger} \Sigma_i (A^{\dagger})\T)\|^2_F$, 
where $\rm{offdiag}$ sets the diagonal terms to $0$.
Minimizing this objective finds a rank $r$ tensor approximation of $T$, see \cite{ziehe2004fast}. Hence, transformation via $A^\dagger$ makes the variables approximately uncorrelated in every context. The next subsection gives a maximum likelihood estimation interpretation.

\subsection{Connection to maximum likelihood estimation}
\label{sec:mle}

MCPCA maximizes the likelihood in a multi-context Gaussian model.
This has been established for usual PCA in probabilistic PCA~\cite{tipping1999probabilistic} and for common orthogonal principal components in~\cite{flury1984common}.

\begin{definition}[{See~\cite[Section 2]{flury1984common}}]
    Fix multivariate Gaussian models $\fx_i\sim \Ncal (0, \Sigma_i)$, for $i = 1, \ldots, k$. Consider i.i.d. samples in the $i$th model, with sample covariance matrix $S_i$ and all samples independent. The \emph{common log-likelihood function} is \[ \ell_{S_1, \ldots, S_k}(\Sigma_1 , \ldots, \Sigma_k) = \sum_{i=1}^k \ell_{S_i} (\Sigma_i),\]
    where $\ell_{S_i}(\Sigma_i)$ is the usual multivariate Gaussian log-likelihood of $\Sigma_i$ given $S_i$. A maximizer of $\ell_{S_1, \ldots, S_k}$ over a model is called a \emph{maximum likelihood estimate} (MLE) given $S_1, \ldots, S_k$.
\end{definition}

\begin{definition}
    The \emph{Gaussian MCPCA} model on $p$ variables, $k$ contexts, and rank $r$ is the collection of $k$ multivariate Gaussian models $\fx_i\sim \Ncal (0, AB_i A\T)$, for $i = 1, \ldots, k$, where 
 $A$ is $p\times r$ with linearly independent columns, and the $B_i$ are $r \times r$ and diagonal.
\end{definition}

A \emph{best rank $r$ approximation} of a tensor $T$ with respect to an objective function $d$ is a minimizer of $d(T, T')$ as $T'$ varies over the set of tensors of rank (at most) $r$. In the following, we say that a tensor is a \emph{rank $r$ approximation} of $T$ if it is a best rank $r$ approximation with respect to some objective function. 

\begin{proposition}\label{prop: ml A square}
    Assume $N$ samples from each of the $k$ models in the Gaussian MCPCA model when $r = p$. 
    The MLE of the common likelihood function given 
    sample covariance matrices $S_1,\ldots,S_k$ computes a rank~$r$ approximation of the covariance tensor $T$. 
\end{proposition}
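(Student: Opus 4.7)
The plan is to exhibit an objective function $d$ such that maximizing the common log-likelihood over the Gaussian MCPCA model with $r=p$ is equivalent to minimizing $d(T,\cdot)$ over rank-$\leq p$ partially symmetric tensors. The natural candidate, motivated by the log-det and inverse-trace structure of the Gaussian log-likelihood, is the sum over contexts of Kullback--Leibler divergences between centered Gaussians determined by the slices.

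The key computation is to rewrite the common log-likelihood as, up to a constant depending only on the sample covariance matrices,
\[
\ell_{S_1,\ldots,S_k}(\hat{\Sigma}_1, \ldots, \hat{\Sigma}_k) = -N \sum_{i=1}^k D_{\mathrm{KL}}\bigl(\Ncal(0, S_i) \,\|\, \Ncal(0, \hat{\Sigma}_i)\bigr),
\]
with $\hat{\Sigma}_i = AB_iA\T$. This follows from a direct application of the standard KL formula for centered Gaussians to the log-det plus inverse-trace expression for the Gaussian log-likelihood.

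Accordingly, I would define $d(T, T') := \sum_{i=1}^k D_{\mathrm{KL}}(\Ncal(0, T_i) \,\|\, \Ncal(0, T'_i))$ on pairs of partially symmetric tensors with positive definite slices, extended by $+\infty$ when $T'$ has a non-positive-definite slice. The MLE over the Gaussian MCPCA model is then a minimizer of $d(T, \cdot)$ over the image of the parametrization $(A, B_1, \ldots, B_k) \mapsto (AB_1A\T, \ldots, AB_kA\T)$. To conclude that it is a rank $r=p$ approximation in the sense of the paper, I would check that this image coincides with the set of rank-$\leq p$ partially symmetric tensors having positive definite slices. One inclusion is Proposition~\ref{prop:connection_to_tensor}. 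For the other, any rank-$\leq p$ partially symmetric tensor admits a decomposition $\sum_{j=1}^p \fa_j \otimes \fa_j \otimes \fb_j$ with slices $AB_iA\T$; positive definiteness of the slices forces $A$ to be invertible (a rank deficiency in $A$ would make each slice singular), and then $B_i = A^{-1}\hat{\Sigma}_i(A\T)^{-1}$ is diagonal and positive definite, so its diagonal entries are positive.

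The main subtlety is this image-matching step, which uses that positive definiteness of the slices forces $A$ to be invertible in any rank-$p$ decomposition. The KL reformulation itself is a standard identity, and once paired with the image-matching the MLE is identified with a rank $r=p$ approximation of $T$ with respect to $d$.
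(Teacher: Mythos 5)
Your proposal is correct, but it reaches the conclusion by a genuinely different route than the paper. The paper starts from Flury's explicit form of the common log-likelihood, profiles out the diagonal matrices (obtaining $\hat B_i = \Diag(A^{-1}S_iA^{-\mathsf T})$), and then uses Hadamard's inequality to reduce the MLE problem to the joint-diagonalization criterion $\sum_i \log\det\bigl(A\,\Diag(A^{-1}S_iA^{-\mathsf T})\,A\T\bigr) \ge \sum_i \log\det S_i$, with equality exactly when $A$ simultaneously diagonalizes the $S_i$; the link to a rank-$r$ approximation is then asserted through this criterion. You instead take the paper's definition of ``rank $r$ approximation'' (best approximation with respect to \emph{some} objective $d$) at face value: you exhibit $d$ explicitly as the sum of slice-wise KL divergences via the standard identity $\ell_{S_i}(\Sigma_i) = -N\,D_{\mathrm{KL}}\bigl(\Ncal(0,S_i)\,\|\,\Ncal(0,\Sigma_i)\bigr) + c(S_i)$, and then supply the image-matching step showing that for $r=p$ the Gaussian MCPCA model sweeps out exactly the rank-$\le p$ partially symmetric tensors with positive definite slices (positive definiteness forces $A$ invertible, whence $B_i = A^{-1}\hat\Sigma_i A^{-\mathsf T}$ is diagonal with positive entries). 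That image-matching argument is sound and addresses a point the paper glosses over; your route is shorter and makes the implicit objective function concrete, whereas the paper's route buys the closed-form profiled objective and the characterization of the MLE for $A$ as an (approximate) simultaneous diagonalizer, which is what ties this result to the algorithm and to the $r<p$ version in the next proposition (where the paper itself makes a KL interpretation, though of a different divergence, between the projected data and its diagonal approximation). Two caveats you share with the paper rather than gaps of your own: the KL identity needs $S_i$ nonsingular (almost sure once $N\ge p$, and implicitly assumed in the paper's use of $\log\det S_i$ and $A^{-1}$), and ``rank'' must be read as partially symmetric rank, consistent with its usage elsewhere in the paper.
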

\begin{proof}
The common log-likelihood function can be written up to additive and multiplicative constants as 
\begin{equation}\label{eq: common likelihood}
    \ell_{S_1, \dots, S_k}(A, \{B_i\}_{i=1}^k) = - \sum_{i=1}^k \left( \log \det (AB_iA\T) + \tr(A^{-\mathsf{T}}B_i^{-1} A^{-1} S_i) \right) ,
\end{equation}
see \cite[Equation 2.2]{flury1984common}.
The matrix $B_i$ is a diagonal matrix that minimizes 
$$
\log\det B_i + \tr\left(B_i^{-1} A^{-1} S_i A^{-\mathsf{T}}\right),
$$
since only the $i$th summand of \eqref{eq: common likelihood} involves $B_i$ and
$
\log\det(A B_i A\T) = \log\det B_i + 2\log\det(A).
$
Let $C_i = A^{-1} S_i A^{-\mathsf{T}}$ and $B_i = \Diag(b_{i1},\dots,b_{ir})$. The objective becomes
$
\sum_{j=1}^r \left( \log b_{ij} + \frac{(C_i)_{jj}}{b_{ij}} \right),
$
which is minimized at $b_{ij} = (C_i)_{jj}$ for all $j$. Thus, we obtain 
\begin{equation}\label{eq: B_i}
B_i = \Diag\left(A^{-1} S_i A^{-\mathsf{T}}\right).
\end{equation}
Substituting \eqref{eq: B_i} into \eqref{eq: common likelihood}, 
the trace part of the common log-likelihood is constant, so the MLE minimizes 
\begin{equation}\label{eq: common likelihood reduced}
\sum_{i=1}^k \log \det (A\Diag(A^{-1} S_iA^{-\mathsf{T}})A\T).
\end{equation}
The $i$th summand is 
\begin{align*}
\log \det (A\Diag(A^{-1} S_iA^{-\mathsf{T}})A\T) &= 2\log \det A + \log \det \Diag(A^{-1} S_iA^{-\mathsf{T}})\\
&\geq 2\log \det A + \log \det A^{-1} S_iA^{-\mathsf{T}} \\
& = \log \det S_i,
\end{align*}
where the inequality follows from the Hadamard inequality, see e.g. \cite[Section 2.1]{horn2012matrix}.
Thus, \eqref{eq: common likelihood reduced} is lower bounded by $\sum_{i=1}^k \log \det S_i$, which is achieved if and only if each $A^{-1} S_iA^{-\mathsf{T}}$ is diagonal. 
Hence computing an MLE finds a matrix $A\in \RR^{p\times r}$ that approximately diagonalizes $S_1,\ldots,S_k$, which is equivalent to finding a rank~$r$ approximation of $T$.
\end{proof}

The above result is for the full-rank MCPCA model, where $r = p$. We extend it to the low-rank model, where $r < p$. 
For this purpose, we define a  \emph{common log-likelihood loss}, which 
compares the 
log-likelihood 
under unconstrained versus diagonal covariance models. 

\begin{definition}
Consider a multi-context Gaussian latent variable model $\fx_i = A \fz_i$, where $\fz_i\sim \Ncal(0,B_i)$. 
Let $S_i$ be the sample covariance matrix of samples of $\fx_i$.
Let $A^\dagger \cdot S_i = A^{\dagger} S_i (A^{\dagger})\T$.
The \emph{common log-likelihood loss}
for $\fz_1,\ldots,\fz_k$ is 
\begin{equation}\label{eq: common log-likelihood loss}
\ell^{\mathrm{loss}}_{S_1,\ldots,S_k}(A)
:=
\max_{B_i \in \RR^{r\times r}}
\ell_{A^\dagger \cdot S_1,\ldots,A^\dagger \cdot S_k}(B_1,\ldots,B_k)
\;-\;
\max_{\substack{B_i \in \RR^{r\times r} \\ \text{diagonal}}}
\ell_{A^\dagger \cdot S_1,\ldots,A^\dagger \cdot S_k}(B_1,\ldots,B_k).
\end{equation}
\end{definition}

\begin{proposition}
    Assume $\fx_i = A\fz_i$, where $\fz_i\sim \Ncal(0,B_i)$ for $i = 1, \ldots, k$. 
    Given $N$ i.i.d. samples in each model, with all $kN$ samples independent, and sample covariance matrices $S_i$ for each $\fx_i$, the matrix $A$ that minimizes the common log-likelihood loss for $\fz_1,\ldots,\fz_k$ 
    computes a rank~$r$ approximation of the covariance tensor $T$.
    \end{proposition}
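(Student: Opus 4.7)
The plan is to reduce to the $r\times r$ projected covariances $C_i(A) := A^\dagger S_i (A^\dagger)\T$ and mimic the analysis inside Proposition~\ref{prop: ml A square}. The loss~\eqref{eq: common log-likelihood loss} is, by construction, the gap between the maximum Gaussian log-likelihood of $C_i(A)$ under an unconstrained covariance and under a diagonal covariance, so the first step is to evaluate both maxima analytically and collapse the loss to an explicit function of $A$ alone.

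For the unconstrained term, the classical Gaussian MLE on $N$ samples with sample covariance $C_i(A)$ is $B_i = C_i(A)$, with log-likelihood $-\tfrac{N}{2}[\log\det C_i(A) + r]$ up to constants. For the diagonal term, the same computation used in the proof of Proposition~\ref{prop: ml A square} (now applied in the ambient $r\times r$ space, with the change of basis already absorbed by $A^\dagger$) yields the optimum at $B_i = \Diag(C_i(A))$, with log-likelihood $-\tfrac{N}{2}[\log\det \Diag(C_i(A)) + r]$. Subtracting and summing over $i$ collapses the loss to
\[
\ell^{\mathrm{loss}}(A) \;=\; \tfrac{N}{2}\sum_{i=1}^k \Bigl[\log\det \Diag\bigl(C_i(A)\bigr) \;-\; \log\det C_i(A)\Bigr],
\]
which is non-negative by Hadamard's inequality and vanishes exactly when every $C_i(A)$ is diagonal. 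Minimizing $\ell^{\mathrm{loss}}$ over full-column-rank $A$ therefore seeks an $A$ whose pseudo-inverse simultaneously diagonalizes $S_1,\ldots,S_k$ as closely as possible in a log-det sense. By Proposition~\ref{prop:connection_to_tensor}, such an $A$, together with context loadings $\fb_j\in\RR^k$ read off from the diagonal entries of $C_i(A)$ across contexts, assembles into a rank-$r$ partially symmetric tensor $T'=\sum_{j=1}^r \fa_j\otimes\fa_j\otimes\fb_j$ that minimizes $\ell^{\mathrm{loss}}$ on the variety of such decompositions. This is a rank-$r$ approximation of $T$ in the sense of the definition preceding the proposition, with $\ell^{\mathrm{loss}}$ playing the role of the objective $d(T,T')$.

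The main obstacle I expect is technical rather than structural: ensuring positive-definiteness of $C_i(A)$ so that both maxima in~\eqref{eq: common log-likelihood loss} are finite and attained, a genericity hypothesis that is delicate to state cleanly in the low-rank regime $r<p$ where $A^\dagger$ is only a left inverse. Once one restricts to full-column-rank $A$ in a sufficient-sample regime, the Hadamard step and the reduction to tensor decomposition via Proposition~\ref{prop:connection_to_tensor} are direct, so the substantive conceptual point is the equivalence between approximate simultaneous pseudo-diagonalization of the $S_i$ and a best rank-$r$ partially symmetric approximation of the covariance tensor $T$.
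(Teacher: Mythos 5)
Your proposal is correct and follows essentially the same route as the paper's proof: compute the unconstrained and diagonal maximizers $B_i = A^\dagger S_i (A^\dagger)\T$ and $B_i = \Diag(A^\dagger S_i (A^\dagger)\T)$, collapse the loss to the log-det gap $\sum_i [\log\det\Diag(A^\dagger S_i (A^\dagger)\T) - \log\det(A^\dagger S_i (A^\dagger)\T)]$, invoke Hadamard's inequality for non-negativity and the diagonality characterization of the zero set, and read off minimization as approximate simultaneous diagonalization, i.e.\ a rank-$r$ approximation of $T$ in the paper's sense of ``best with respect to some objective function.'' Your explicit attention to full column rank of $A$ and positive-definiteness of the projected covariances (and your use of the correct dimension $r$ rather than $p$ in the additive constant) are minor refinements the paper leaves implicit.
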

\begin{proof}
First consider the unconstrained optimization.
Fixing $A$, the first term of \eqref{eq: common log-likelihood loss} is
\begin{equation}\label{eq:full-ll-uncon}
\ell^{\mathrm{uncon}}(A)
=
\max_{B_1,\ldots,B_k}
-\sum_{i=1}^k
\Bigl(
\log\det B_i
+
\tr\bigl(B_i^{-1} A^\dagger \cdot S_i \bigr)
\Bigr).
\end{equation}
The maximizer is
$B_i^{\mathrm{uncon}} = A^\dagger \cdot S_i$.
Substituting into \eqref{eq:full-ll-uncon} gives
$\ell^{\mathrm{uncon}}(A)
=
-\sum_{i=1}^k \bigl(\log\det A^\dagger \cdot S_i + p \bigr)$.
We now compute the second term of \eqref{eq: common log-likelihood loss}.
For fixed $A$, maximizing the same objective over diagonal matrices yields
$
B_i^{\mathrm{diag}} = \Diag\bigl(A^\dagger \cdot S_i\bigr),
$
and 
\begin{equation}\label{eq:ll-diag}
\ell^{\mathrm{diag}}(A)
=
-\sum_{i=1}^k\bigl(
\log\det \Diag\bigl(A^\dagger \cdot S_i \bigr) + p \bigr).
\end{equation}
We then minimize the difference 
\begin{align}
\ell_{S_1,\ldots,S_k}^{\rm{loss}}(A)  &= \ell^{\mathrm{uncon}}(A) - \ell^{\mathrm{diag}}(A)\\
&= \sum_{i=1}^k \log\det\bigl(\Diag(A^\dagger S_i(A^\dagger)\T)\bigr)- \log\det\bigl(A^\dagger S_i (A^\dagger)\T\bigr).\label{eq:gap}
\end{align}
The expression \eqref{eq:gap} is an objective function for simultaneous diagonalization of positive definite matrices \cite{pham2001joint}.
Each summand is nonnegative and vanishes if and only if
$A^\dagger \Cov(X_i)(A^\dagger)\T$ is diagonal, by the Hadamard inequality.
So minimizing $\ell_{S_1,\ldots,S_k}^{\rm loss}(A)$ is an
approximate simultaneous diagonalization of $S_1,\ldots,S_k$,
i.e. a rank~$r$ approximation of $T$.
\end{proof}

The objective function \eqref{eq:gap} is invariant under rescaling the columns of $A$ and has a statistical interpretation.
Let $\Sigma_i' = A^\dagger S_i (A^\dagger)^{\mathsf T}$ and $D_i=\Diag(\Sigma_i')$.  Then the KL divergence~\cite{bouchard2018riemannian} between the two Gaussian models is 
\[
\mathrm{KL}\left(\Ncal(0,\Sigma_i') \,\|\, \Ncal(0,D_i)\right)
=\tfrac12\bigl(\log\det D_i-\log\det\Sigma_i'\bigr).
\]
Thus, minimizing \eqref{eq:gap} is equivalent to minimizing the KL divergence between the Gaussian models
$\Ncal(0,A^\dagger S_i (A^\dagger)^{\mathsf T})$ and their diagonal approximations.
In other words, it quantifies the information lost by imposing uncorrelatedness on the latent variables $A^{\dagger} \fx_i$ in each context.

\section{Comparison to prior work}
\label{sec:comparison}

Many tools exist for multi-context data analysis. 
They use the multi-context information to find axes shared by all contexts, to remove the effect of one context on another (e.g. 	to find features in an experimental context relative to a control), or to maximize the difference between contexts. Methods include linear algebra tools to decompose a tuple of matrices~\cite{alter2003generalized,ponnapalli2011higher,khamidullina2022multilinear},
factor analysis models for finding shared and context-specific factors~\cite{klami2014group,de2019multi}, 
statistical models that combine context-dependent with context-invariant parameters~\cite{lewis2021identifying}, and methods that encode the context information as a categorical variable~\cite{friedman2009elements}.
Recently, there has been particular interest in the case of two contexts, a foreground dataset, representing an experimental group, and a background dataset, representing a control group~\cite{abid2018exploring,de2025identifying,li2020probabilistic,wang2024contrastive,zou2013contrastive}.
By comparison to previous work, MCPCA finds factors present across subsets of contexts, without optimizing objectives such as high importance in all contexts or high relative importance between a pair of contexts. Unlike previous work, it uses the importance of the shared factors in a context as an encoding of the context. 
We compare MCPCA to existing methods, from a conceptual and algorithmic point of view. 
 
\subsection{PCA}

PCA is used in two main ways for multi-context data~\cite{baharav2025stacked}. The first is to find PCs in individual contexts and then compare them. The second is to compute PCs of the combined data and then to quantify their importance in each context.

The PCs in individual contexts are obtained by decomposing the $i$th covariance matrix as $\Sigma_i = V_i D_i V_i\T$. Both the matrix of PCs and their corresponding eigenvalues vary between contexts. To decide when PCs are shared between contexts, a threshold similarity decides when two PCs are sufficiently close. This requires a hyperparameter or hypothesis test to quantify similarity, and may miss components not strong enough in any individual factor.  

Let the covariance matrix of all data across all contexts be $\Sigma_{\rm all}$. This is a weighted combination of the $\Sigma_j$ (their average if the datasets from the different contexts are all the same size). The PCA decomposition is $\Sigma_{\rm all} = V  D V\T$. One can assess the importance of a PC in context $i$ by computing the variance along direction $\fv_j$ in context $i$; this is $\fv_j\T \Sigma_i \fv_j$. 
Restricting the variance computations to individual contexts gives the importance of the PCs in each context, but it does not map the contexts into spaces where the PCs are uncorrelated (they are only uncorrelated for the data as a whole). Moreover, these PCs do not maximize variance or minimize reconstruction error in an individual context, only in the overall data. For uniqueness, usual PCA requires orthogonality of factors.

\subsection{The hypothesis of common principal components}

MCPCA builds on and generalizes Flury's work on common principal components (CPCs)~\cite{flury1983some}. In the CPC model, each covariance matrix is assumed to have the form
\begin{equation}
    \label{eqn:cpc}
    \Sigma_i = V D_i V\T,
\end{equation}
where $V$ is a square orthogonal matrix of eigenvectors and the $D_i$ are the context-specific eigenvalues.
MCPCs extends CPCs in two ways: it allows a low-rank model and removes the orthogonality requirement on the factors (the square orthogonal matrix $V$ in~\eqref{eqn:cpc} is replaced by a rectangular matrix $A$ with no orthogonality constraints).
Orthogonality of $V$ is a restrictive modeling assumption, equivalent to the commutativity of the matrices $\Sigma_i$~\cite{horn2012matrix}.  

MCPCA is more expressive than the CPC model, as can be quantified via a dimension computation.  The dimension of the rank $p$ MCPCA model is $p ( p + k - 1)$, see Proposition~\ref{prop:identifiability}. By comparison, the dimension of the CPC model is ${p \choose 2} + kp$, since it is parametrized by a $p \times p$ orthogonal matrix and $k$ diagonal matrices. The difference in dimension is ${{ p \choose 2}}$. 
    
\subsection{Two-context methods}\label{two-context}

MCPCA extends several two-context models.
Contrastive PCA (cPCA)~\cite{abid2018exploring} assumes
$$
    \Sigma_i = A B_i A\T, \qquad i=1,2,
$$
but imposes additional structure on $B_1$ and $B_2$.  
Let $I \subset [r]$ be a prescribed index set and $\alpha > 0$ a contrast parameter.
The cPCA assumptions can be written as
\begin{align*}
    (B_1)_{jj} &= 0, \quad j \in I, \\
    (B_2 - \alpha B_1)_{jj} &= 0, \quad j \in I^c.
\end{align*}
These constraints encode components that are active in one context and suppressed in the other; these are the contrastive components.  
In contrast, MCPCA places no such restrictions: the entries of $B_i$ are unconstrained and learned by the method.

For two data matrices $X_1, X_2 \in \mathbb{R}^{N_i \times p}$, the GSVD~\cite{alter2003generalized} produces
$$
    X_1 = U_1 C V\T, \qquad 
    X_2 = U_2 S V\T,
$$
where $U_1, U_2$ are orthogonal, $V$ is invertible, and 
$C,S$ are diagonal.
For mean-centered data, 
$$
    \Cov(X_1) = X_1\T X_1 = V C^2 V\T, 
    \qquad
    \Cov(X_2) = X_2\T X_2 = V S^2 V\T.
$$
Thus $V$ is the MCPCA matrix $A$, while $C^2$ and $S^2$ correspond to $B_1$ and $B_2$.  
The GSVD is the $k=2$ case of our model. When $k=2$, factors present in a subset of contexts are either in a single context or present in both.

Another two-context method is~\cite{wang2024contrastive}, which studies two datasets generated by 
$$\mathbf{x}_1 = A_1 \fz, \quad \fx_2 = A_1 \fz'+ A_2 \mathbf{s}$$ with $\fz$ independent latent variables and $(\fz',\mathbf{s})$ independent latent variables. 
The method uses higher-order statistical information (assuming non-Gaussianity), whereas MCPCA only uses second-order statistics and applies to Gaussian data.

\subsection{Coupled matrix decompositions}

The higher-order generalized singular value decomposition (HO GSVD)~\cite{ponnapalli2011higher} decomposes matrices $X_i \in \mathbb{R}^{N_i \times p}$ as
\begin{equation}\label{eq: hogsvd}
X_i = U_i \Sigma_i V\T , \qquad i=1,\dots,k,
\end{equation}
with a shared right basis $V$. 
The matrix $V$ is obtained from the eigenproblem
\[
S V = V \,\mathrm{diag}(\varsigma_1,\dots,\varsigma_p),
\]
where
\[
S
= \frac{1}{N(N-1)}
\sum_{1 \le i < j \le N}
\left( 
    D_{i,\pi} D_{j,\pi}^{-1} \;+\;
    D_{j,\pi} D_{i,\pi}^{-1}
\right),
\qquad 
D_{i,\pi} = X_i\T X_i.
\]
Unlike in the GSVD, the matrices $U_i$ in the HO GSVD are not constrained to be orthogonal. The output of interest is the subspace spanned by eigenvectors with eigenvalue 1, called the common subspace.
MCPCA relates to HO GSVD, as follows. If the $U_i$ were to be orthogonal, HO GSVD would be a full-rank specialization of MCPCA.  
Indeed, for mean-centered data,
\[
    \Cov(X_i)
    =
    V (\frac{1}{N_i - 1}\Sigma_i^{2}) V\T .
\]

MCPCA differs from HO GSVD in two ways.
HO GSVD is a full rank model that seeks the common subspace to the contexts, while 
MCPCA is a low-rank model, where MCPCs are only shared over subsets of contexts.
However, in principal HO GSVD can also be used to recover the matrix $A$. 
See \textbf{Figure 2} for a synthetic data experiment.
HO GSVD has been used to up to three contexts, whereas MCPCA can be applied to thousands of contexts (across which there may not be a common subspace).

The method ML-GSVD~\cite{khamidullina2022multilinear} assumes the same model as in~\eqref{eq: hogsvd} with the constraint that each $U_i$ is orthogonal.
They estimate the shared factor $V$ by stacking the data matrices into an order-three tensor of size $N \times p \times k$ and performing a CP decomposition.
Our setting differs because we do not assume that observations across contexts are paired, and thus there is no natural way to stack the data matrices into a tensor.  

\subsection{Econometric identification via heteroskedasticity}
A line of work in econometrics~\cite{sentana2001identification, rigobon2003identification, lewis2021identifying} uses second moments across time or regimes to identify latent factors or structural shocks.
Sentana and Fiorentini~\cite{sentana2001identification} consider a static factor model
\[
\mathbf{y}_t = \Lambda \mathbf{f}_t + \mathbf{u}_t,
\qquad
\rm{Var}(\mathbf{f}_t \mid \mathcal{F}_{t-1}) = D_t,
\]
where the conditional covariance matrices satisfy
\[
\Sigma_t = \Lambda D_t \Lambda\T + \Sigma_u.
\]
Rigobon~\cite{rigobon2003identification} studies structural models of the form
\[
\boldsymbol{\eta}_t = H \boldsymbol{\varepsilon}_t,
\qquad
\rm{Var}(\boldsymbol{\varepsilon}_t \mid s) = \Omega^{(s)},
\]
where $s\in \{1,2\}$ indexes one of two volatility regimes and $\Omega^{(s)}$ is diagonal.
Lewis~\cite{lewis2021identifying} departs from regime-based identification and shows that 
$H\otimes H$ can be identified from autocovariances of squared reduced-form innovations \(\Cov(\boldsymbol{\eta}_t \odot \boldsymbol{\eta}_t,\; \boldsymbol{\eta}_{t-\ell} \odot \boldsymbol{\eta}_{t-\ell})\).

All three models are full rank models. The work of Rigobon~\cite{rigobon2003identification} fits into the setting of Section~\ref{two-context}.
The econometric approaches~\cite{sentana2001identification, lewis2021identifying} differ from MCPCA in how the covariance is obtained and used.
In time-series settings, covariance matrices often cannot be estimated directly, since only a single multivariate observation is available at each time point.
Consequently, covariances are treated as latent objects inferred through model assumptions, whereas MCPCA takes as input the sample covariance matrices within each context.

\subsection{Algorithms for tensor decomposition}

MCPCA computes a low-rank partially symmetric CP approximation of the covariance tensor. 
Many algorithms compute a low-rank approximation of a tensor. 
In principle, any of these can recover the MCPCs and context loadings. We compare to other tensor decomposition algorithms in \textbf{Figure 1}. 

The workhorses of tensor decomposition are nonlinear least squares (NLS) and alternating least squares (ALS)~\cite{vervliet2016tensorlab}.  These algorithms are sensitive to initialization~\cite{regalia2000higher} and the output does not guarantee a connection between the summands of a rank $r-1$ decomposition and those of a rank $r$ decomposition, requiring heuristics to cluster the summands to be robust to unknown rank.  The  true rank is not known for real-world data, so decompositions that are robust to misspecified rank will show better performance in practice.  Other tensor decomposition methods are insufficiently scalable.

\subsection{MCPCA}

Having summarized prior related work, we now turn our attention to our new method, MCPCA. 
Our implementation of MCPCA relies on MSPM \cite{wang2025multi}, an algorithm for decomposing tensors that respects their symmetries which builds upon the symmetric tensor decomposition algorithm in \cite{kileel2025subspace,kileel2021landscape}. The MSPM algorithm is appropriate for MCPCA, it returns a decomposition of the tensor of stacked covariances that respects the symmetries of the MCPCA model \eqref{eqn:decomp}. Furthermore, the algorithm allows for selecting the rank in a semi-supervised way, by observing the plot of singular values of a matrix related to the covariance tensor, analogously to PCA.

However, the MSPM algorithm may return a decomposition of the covariance tensor where $B$ has negative entries, which is not valid in the MCPCA context. To address this, we first apply MSPM to the stacked covariance tensor to obtain the matrix of MCPCs $A$ (and discard the other outputs of MSPM). 
Then, since the MCPCA model is linear in $B$, we use the matrix $A$ obtained by MSPM to calculate $B$, through solving the following non-negative least squares (NNLS) problem using \cite{lawson1995solving}:
\begin{equation}\label{eq:nnls}
\min_{B\ge 0} \sum_{i=1}^p\left\|S_i - \sum_{j=1}^r b_{ij}\fa_j  \fa_j\T\right\|^2
\end{equation}
The procedure is summarized in Algorithm~\ref{alg:mcpca}.
A python implementation of MCPCA is available at \url{https://github.com/QWE123665/MCPCA}.

\begin{algorithm}[t]
	\caption{Multi-Context Principal Context Analysis (MCPCA)}
	\label{alg:mcpca}
	\begin{algorithmic}
		\Require Data matrices $X_1 \in \RR^{N_1\times p},\ldots,X_k\in \RR^{N_k\times p}$ and rank $r$
		\Ensure Matrix of MCPCs $A$ and context loading matrix $B$.
        \State $\Sigma_i \gets {\rm Cov}(X_i)$ for $i=1,\ldots,k$
		\State $T \gets \texttt{stack}(\Sigma_1,\dots, \Sigma_p)$
		\State $(A, \tilde B) \gets \texttt{MSPM}(T)$
		\State $B\gets \text{NNLS solution of \eqref{eq:nnls}}$ (using $A$ obtained by MSPM)
		\State \Return $(A, B)$
	\end{algorithmic}
\end{algorithm}

\section{Orthogonal MCPCA}
\label{sec:orthogonal} 

In this section we specialize to orthogonal MCPCs. We first specialize Definition~\ref{def:mcpca}. 

\begin{definition}
The \emph{orthogonal MCPCA model} on $p$ variables and $k$ contexts of rank $r$ is the set of all covariance matrices $\Sigma_1, \ldots, \Sigma_k \in \RR^{p \times p}$ with 
\begin{equation}
    \label{eqn:ABA ortho}
    \Sigma_i = AB_i A\T, \qquad \text{for } \, i = 1, \ldots, k ,
\end{equation} 
where $A \in \RR^{p \times r}$ has orthonormal columns and $B_1, \ldots, B_k \in \RR^{r \times r}$ are non-negative and diagonal.
\end{definition}

Under orthogonality, the MCPCs specialize to the CPC model~\cite{flury1984common} and also to usual PCs of any individual covariance matrix. 
We conduct a synthetic experiment in the orthogonal case, to demonstrate the advantage of MCPCA over alternatives even under orthogonality. Then we explain how the theory of MCPCA simplifies for orthogonal vectors.

\subsection{Synthetic experiment for orthogonal MCPCs}\label{synthetic details}

We compare MCPCA with PCA and tensor decomposition methods using synthetic data. 
We sample the columns of $A\in \RR^{100\times 60}$ by orthogonalizing the columns of a random Gaussian matrix and generate $50$ sparse diagonal non-negative matrices $B_i\in \RR^{60\times 60}$ with density 0.2 from absolute values of $\Ncal(0,1)$.  
For each $i$, we draw samples from the multivariate Gaussian distribution $\mathcal{N}(0, AB_i A\T)$ and estimate $A$. We use \rm{Ascore} to compare the recovered MCPCs with the true directions.

We perform \(40\) independent trials, each with new \(A\) and \(\{B_i\}_{i=1}^{50}\).
For each trial, we generate \(1000\) samples per Gaussian and compare the runtime and recovery accuracy of the competing methods (see \textbf{Supplementary Figure S2B}).

In the second experiment, we fix a set of \(A\) and \(\{B_i\}_{i=1}^k\), and vary the number of samples per Gaussian from \(10\) to \(10^5\).
The simultaneous diagonalization method Jacobi is two orders of magnitude slower than the other methods and is therefore omitted from this experiment (see \textbf{Supplementary Figure S2C}).

These experiments show that MCPCA is among the fastest and most accurate methods.
Empirically, the error of MCPCA measured by \(\log(1-\mathrm{Ascore})\) is $\mathcal{O}(1/N)$, where $N$ is the number of samples.
Interestingly, while the theoretical error analysis in Section \ref{sec:error} predicts a rate of $\mathcal{O}(1/\sqrt{N})$, consistent with classical covariance matrix estimation, the empirical results suggest even faster convergence in practice.
The only method with comparable performance and sample complexity is nonlinear least squares (NLS) with SVD initialization; however, NLS is less stable in the non-orthogonal setting. 

\begin{figure}
    \centering
    \includegraphics[width=0.9\linewidth]{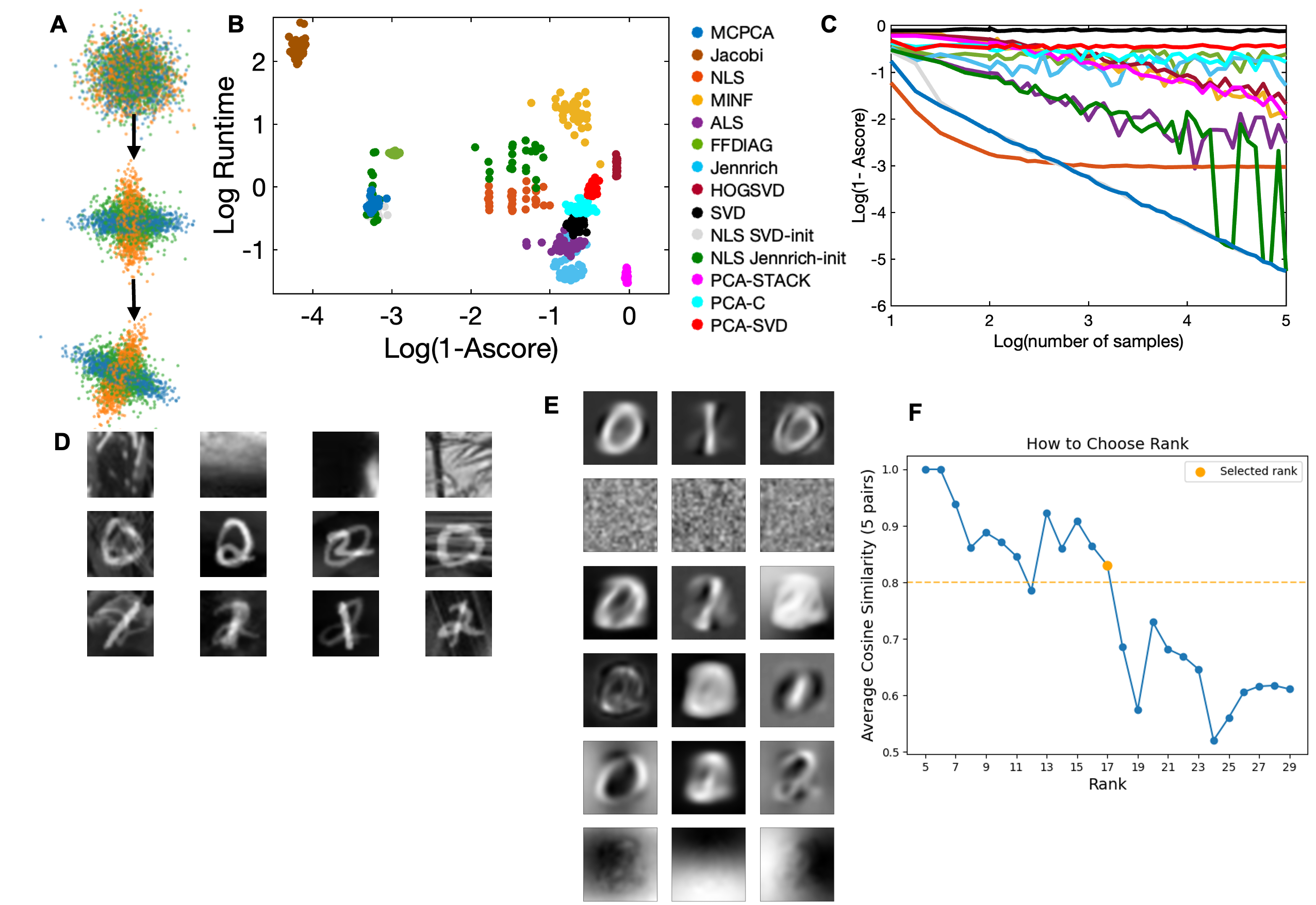}
    \captionsetup{width=0.95\textwidth}
    \caption{\textbf{⁠⁠Benchmarking MCPCA with synthetic and semi-synthetic data.}
\newline \textbf{A.} Synthetic data generated from a standard multivariate Gaussian by transforming by a context-dependent diagonal scaling followed by a shared orthogonal transformation.
\newline \textbf{B.} Runtime (in seconds) against accuracy (Ascore) for the 13 methods in the legend. MCPCA lower left, the most accurate computationally-feasible method.
\newline \textbf{C.} Accuracy compared to sample size for the 13 methods in the legend. 
\newline \textbf{D.} Examples of data from the three contexts. Top row is context 1 – background images only, second row is context 2 – zeros and twos, third row is context 3 – ones and twos.
\newline \textbf{E.} Digits recovered via other methods: Jennrich, MINF, ALS, SVD, PCA-C, PCA-SVD.
\newline \textbf{F.} How to choose the rank for the MNIST semi-synthetic experiment.
}
    \label{SIfig2}
\end{figure}

\subsection{Explanation of numerical advantages}

PCA is considered up to an elbow point. There are two reasons for this. The first is that the PCs above the elbow point explain the most variance in the data. The second, is that once the eigenvalues are less spread out the sample complexity of PCA becomes prohibitive: many samples are needed to distinguish the eigenspaces of two similar eigenvalues. The tensor decomposition approach is less affected by this phenomenon, as explained by the following result.

\begin{proposition}\label{prop:identifiability explicit orthogonal}
Suppose $\fa_1,\ldots,\fa_r$ are orthonormal.  
The MCPCA model is identifiable if and only if the vectors
$\fb_1,\ldots,\fb_r$ are pairwise distinct.
\end{proposition}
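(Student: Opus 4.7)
My plan is to recast identifiability as unique recovery of the matrix $A$, up to sign and column permutation, from $(\Sigma_1, \ldots, \Sigma_k)$, since orthonormality $A\T A = I_r$ then returns the context loadings via $B_i = A\T \Sigma_i A$. Using the tensor formulation $T = \sum_{j=1}^r \fa_j \otimes \fa_j \otimes \fb_j$ from Proposition~\ref{prop:connection_to_tensor}, the central tool will be the mode-$3$ contraction against an arbitrary $\fc \in \RR^k$,
\[
    M(\fc) := \sum_{i=1}^k c_i\,\Sigma_i \;=\; A\,\Diag(B\T \fc)\,A\T,
\]
which is a real symmetric matrix whose nonzero eigenvalues are $\langle\fc,\fb_j\rangle$ with eigenvectors $\fa_j$, and whose kernel contains $\Span(A)^\perp$ by orthonormality of the columns of $A$.

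For the ``if'' direction, suppose the $\fb_j$ are pairwise distinct (and, under the rank-$r$ convention, all nonzero). For each pair $j \ne l$, $\{\fc : \langle \fc, \fb_j - \fb_l\rangle = 0\}$ is a proper hyperplane in $\RR^k$, and for each $j$, $\{\fc : \langle\fc, \fb_j\rangle = 0\}$ is a proper hyperplane. Their finite union has Lebesgue measure zero, so I can pick $\fc^\ast$ outside it. Then $M(\fc^\ast)$ has exactly $r$ distinct nonzero eigenvalues together with zero of multiplicity $p-r$, and the spectral theorem uniquely determines $\fa_1,\ldots,\fa_r$ as the unit eigenvectors for the simple nonzero eigenvalues, up to sign and permutation.

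For the ``only if'' direction, I argue by contrapositive and assume $\fb_j = \fb_l$ for some $j \ne l$. Setting $V = \Span(\fa_j, \fa_l)$, any orthonormal basis $(\fv_j, \fv_l)$ of this 2-plane that is not a signed permutation of $(\fa_j, \fa_l)$ --- such bases exist because the orthonormal frames of $V$ form a one-parameter family --- satisfies $\fa_j\fa_j\T + \fa_l\fa_l\T = \fv_j\fv_j\T + \fv_l\fv_l\T$, since both equal the orthogonal projector onto $V$. Tensoring with $\fb_j = \fb_l$ gives
\[
\fa_j\otimes\fa_j\otimes\fb_j + \fa_l\otimes\fa_l\otimes\fb_l \;=\; \fv_j\otimes\fv_j\otimes\fb_j + \fv_l\otimes\fv_l\otimes\fb_l,
\]
so replacing the two columns of $A$ by $(\fv_j, \fv_l)$ and keeping $B$ and the other columns of $A$ fixed yields an alternative orthogonal MCPCA model point with the same covariance tensor, defeating identifiability.

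The main obstacle I anticipate is the bookkeeping around degenerate cases: I must show that for pairwise distinct nonzero $\fb_j$ the required generic $\fc^\ast$ actually exists, and I must confirm that the replacement $(\fv_j, \fv_l)$ in the converse is not a mere relabelling of $(\fa_j, \fa_l)$. Both reduce to avoiding finite unions of proper algebraic conditions, but stating the result cleanly will require that the rank-$r$ MCPCA model implicitly assumes $\fb_j \ne \mathbf{0}$ (otherwise $\fa_j$ is unconstrained and the model is already non-identifiable, a case which must either be folded into the convention or added to the converse). Once that convention is fixed, the remaining argument is a clean combination of a generic-linear-combination step with the spectral theorem in one direction and a rotational symmetry in the other.
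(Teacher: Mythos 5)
Your proposal is correct and follows essentially the same route as the paper's proof: contract the covariance tensor along the third mode with a generic vector so that $\sum_i c_i\Sigma_i$ has an eigendecomposition with distinct eigenvalues, recovering the $\fa_j$ up to sign and permutation, and for the converse rotate the two columns of $A$ whose context loadings coincide to produce an inequivalent decomposition. You are in fact slightly more careful than the paper in additionally requiring $\langle \fc^\ast,\fb_j\rangle \neq 0$ (so that when $r<p$ no $\fa_j$ merges into the kernel) and in flagging the degenerate case $\fb_j=\mathbf{0}$, which the paper's argument glosses over.
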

\begin{proof}
Assume that $\fb_1,\ldots,\fb_r$ are pairwise distinct.
Then there exists a vector $\fv$ such that the scalars
$\langle \fb_1,\fv\rangle,\ldots,\langle \fb_r,\fv\rangle$
are pairwise distinct.
Let $\Sigma_1,\ldots,\Sigma_k$ denote the covariance matrices with covariance tensor $T$.
Contracting $T$ with $\fv$ along the third mode yields
\[
\sum_{i=1}^k v_i\Sigma_i 
= \sum_{j=1}^r \langle \fb_j,\fv\rangle \,\fa_j \otimes \fa_j .
\]
This is an eigendecomposition, 
since the 
$\{\fa_j\}$ are orthonormal. It is unique, since the 
values $\langle \fb_j,\fv\rangle$ are distinct.
Hence the vectors $\fa_j$ are unique (up to sign and
ordering).
Once $\{\fa_j\}$ are found, the vectors $\fb_j$ are recovered uniquely
by solving a linear system.
Therefore, the decomposition is unique, and the model is
identifiable.

Conversely, suppose that $\fb_1=\fb_2$.
Then
\[
\fa_1\otimes \fa_1 \otimes \fb_1 + \fa_2\otimes \fa_2 \otimes \fb_2
= \left(\tfrac{\fa_1+\fa_2}{\sqrt{2}}\right)^{\otimes 2}\otimes \fb_1
+ \left(\tfrac{\fa_1-\fa_2}{\sqrt{2}}\right)^{\otimes 2}\otimes \fb_1 .
\]
This is an alternate decomposition of $T$ that is not
equivalent to the original one.
Hence the decomposition is not unique, and the MCPCA model is not
identifiable.
\end{proof}

The above result gives intuition behind the superiority of tensor methods over matrix methods: having the same context loading across all contexts is rarer than having the same context loading (i.e. same eigenvalue) in just one context. 
We quantify this difference by computing the dimensions of the ill-posed sets for both methods. The ill-posed set is the set of matrices where the method fails to recover a unique set of $A,\{B_i\}_{i=1}^k$ up to permutation. 
 Codimension is the dimension of the ambient space minus the dimension of the set.

\begin{proposition}
The ill-posed loci of the PCA method is codimension one; that of the tensor decomposition when $A$ has orthogonal columns is codimension $k$. 
\end{proposition}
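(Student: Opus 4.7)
The plan is to identify each ill-posed locus explicitly inside the parameter space $(A, B)$ of the orthogonal MCPCA model and count the equations cutting it out. By generic identifiability (Proposition \ref{prop:identifiability explicit orthogonal}), the parametrization is birational onto its image, so codimensions in the parameter space and in the model image inside the space of $k$-tuples of covariance matrices coincide, and it suffices to argue within the parameters.

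For the PCA method I will take the canonical instance, PCA on the combined (stacked) data. Under the model,
\[ \sum_{i=1}^k \Sigma_i \;=\; A\,\Lambda\, A\T, \qquad \Lambda_{jj} = \sum_{i=1}^k b_{ij}, \]
and since $A$ has orthonormal columns this is an eigendecomposition. The real spectral theorem implies that PCA recovers the columns of $A$ uniquely (up to sign and permutation) if and only if the entries $\Lambda_{jj}$ are pairwise distinct. Hence the ill-posed locus equals
\[ \bigcup_{j < j'} \bigl\{ (A,B) : \textstyle\sum_{i=1}^k b_{ij} = \sum_{i=1}^k b_{ij'} \bigr\}, \]
a union of $\binom{r}{2}$ hyperplanes, each cut out by a single linear equation in $B$. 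Its codimension is therefore $1$.

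For the tensor decomposition I invoke Proposition \ref{prop:identifiability explicit orthogonal}, which says that identifiability is equivalent to the vectors $\fb_1,\ldots,\fb_r$ being pairwise distinct. The ill-posed locus is thus
\[ \bigcup_{j < j'} \bigl\{ (A,B) : \fb_j = \fb_{j'} \bigr\}. \]
Each stratum $\{\fb_j = \fb_{j'}\}$ is carved out by $k$ linear equations on $B$, one per context, and these are manifestly independent since they involve disjoint coordinates. Each stratum, and hence the whole union, has codimension $k$.

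The chief subtlety will be confirming that the listed subvarieties genuinely exhaust the ill-posed sets rather than being proper subsets. For the tensor side this is immediate from the biconditional in Proposition \ref{prop:identifiability explicit orthogonal}. For the PCA side it reduces to the real spectral theorem as used above, applied to $\sum_i \Sigma_i$ after passing through the parametrization. A minor bookkeeping step is to verify that the non-negativity constraint on $B$ does not cut the dimensions of the listed strata: each stratum meets the interior of the non-negative orthant in $\RR^{k \times r}$, so this is a non-issue.
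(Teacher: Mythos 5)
Your proposal is correct and follows essentially the same route as the paper: non-identifiability of PCA occurs on the union of hyperplanes where two (summed) eigenvalues coincide, giving codimension one, while Proposition \ref{prop:identifiability explicit orthogonal} reduces the tensor case to the codimension-$k$ strata $\{\fb_j = \fb_{j'}\}$. Your added remarks on passing between parameter-space and model codimension and on the non-negativity constraint are careful details the paper leaves implicit, but the argument is the same.
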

\begin{proof}
PCA methods use eigendecomposition, which fails to be identifiable when two eigenvalues coincide. 
The condition that two eigenvalues coincide is defined by one linear equation in the entries of the $\{B_i\}_{i=1}^k$. Thus, the ill-posed set is a union of hyperplanes in the ambient space and has codimension one. 

For the tensor decomposition used in MCPCA to be identifiable, it is necessary and sufficient that the vectors $\fb_1,\ldots,\fb_r$ are distinct, see Proposition \ref{prop:identifiability explicit orthogonal}. 
The condition that $\fb_i=\fb_j$
are identical is a codimension $k$ set. The ill-posed set for tensor decomposition is thus a union of codimenison $k$ sets, which also has codimension $k$.
\end{proof}

The condition number of an identifiable parameter recovery problem is inversely proportional to the distance to its ill-posed locus~\cite{demmel1987condition}.
Since the ill-posed locus for tensor decomposition has higher codimension than that of PCA methods, a generic instance should lie farther away from degeneracy.
Thus, the tensor-decomposition problem is expected to have a smaller condition number and therefore be more stable under perturbations of the input, such as those arising from finite sample size.

\subsection{Theory of orthogonal MCPCA}

Several results in Section \ref{sec:theory} simplify when we impose orthogonality on the columns of $A$. 
The pseudoinverse of $A$ becomes $A\T$. The average squared variance explained by $A$ in Proposition \ref{prop: max var exp} simplifies to
$$\frac{1}{k}\sum_{i=1}^k \|P_\Acal(\Sigma_i)\| =\frac{1}{k} \sum_{ i=1}^k \sum_{j=1}^r \|\fa_j\T \Sigma_i \fa_j\|^2,$$
which coincides with the usual notion of variance explained by the directions $\fa_1,\ldots,\fa_r.$
The MLE in Proposition \ref{prop: ml A square} is the solution to the set of equations
$$\fa_j (\sum_{i=1}^k\frac{B_{i\ell}- B_{ij}}{B_{i\ell} B_{ij}} \Sigma_i) \fa_\ell = 0, \quad j,\ell\in [r], j\neq \ell,$$ for all $i\in [k], $ which is studied in~\cite{flury1984common,flury1986algorithm}.

\section{Error analysis}
\label{sec:error}

We quantify the error in MCPCA when the covariance tensors are estimated from finite samples. We show that
the 2-norm of the difference between the estimated 
compositions and true population factors
is $\Ocal (\sqrt{pk/N})$, where $p$ is the number of variables, $k$ the number of contexts, and $N$ the number of samples per context.

We study how noise in the covariance estimates due to finite sample size propagates to MCPCA output.
MCPCA takes as input the covariance tensor $\hat T$ whose slices are sample covariance matrices, and returns factors
$\{(\hat\fa_i,\hat\fb_i)\}$ from its rank~$r$ approximation.
We compare these to the population factors of the true covariance tensor 
and derive a bound on the cosine similarity of the estimated and true factors, which depends on sample size.
The main result is Theorem~\ref{thm: error analysis}, which shows that, under mild conditions on the pairwise cosine similarities of $\{\fa_i\}_{j=1}^r,\{\fb_i\}_{j=1}^r$, the recovered directions converge at rate $\Ocal(1/\sqrt{N})$.

Throughout this section, $\SB^{n-1}$ denotes the unit sphere in $\RR^n$, and $\Vect(\cdot)$ is the vectorization of a matrix or tensor obtained by lexicographical stacking of its entries. 

\begin{definition}[Contraction]
Let $T\in \mathbb{R}^{m_1 \times m_2 \times m_3}$ be an order-$3$ tensor, with entries $T_{i j k}$.
For $\fv^{(1)}\in \mathbb{R}^{m_1}$, the contraction of $T$ with $\fv^{(1)}$ in the first mode is the matrix
\[
T(\fv^{(1)},\ast,\ast) \in \mathbb{R}^{m_2 \times m_3},
\qquad
T(\fv^{(1)},\ast,\ast)_{j k}
=
\sum_{i=1}^{m_1} T_{i j k}\,\fv^{(1)}_{i}.
\]
For $\fv^{(1)}\in \mathbb{R}^{m_1}$ and $\fv^{(2)}\in \mathbb{R}^{m_2}$, the contraction of $T$ with $\fv^{(1)}$ and $\fv^{(2)}$ in the first two modes is the vector
\[
T(\fv^{(1)}, \fv^{(2)}, \ast) \in \mathbb{R}^{m_3},
\qquad
T(\fv^{(1)},  \fv^{(2)}, \ast)_{k}
=
\sum_{i=1}^{m_1}\sum_{j=1}^{m_2} T_{i j k}\,\fv^{(1)}_{i}\,\fv^{(2)}_{j}.
\]
Contraction in other modes, or with three vectors $\fv^{(1)},\fv^{(2)},\fv^{(3)}$, is defined analogously.
\end{definition}

We now state the main result of our error analysis.  Below, we define $T \in \RR^{p \times p \times k}$ to be the covariance tensor of the true covariance matrices $\Sigma_i$ and $\hat{T} \in \RR^{p \times p \times k}$ to be the covariance tensor of the sample covariance matrices $S_i$. 
We define $M = \begin{bmatrix} \Sigma_1 & \cdots & \Sigma_k \end{bmatrix} \in \RR^{p \times pk}$ to be the flattening of $T$ and denote the $r$-th largest singular value of $M$ by $\sigma_r(M)$.

\begin{theorem}\label{thm: error analysis}
For $i=1,\ldots,k$, let $X_i \in \RR^{N\times p}$ consist of $N$ i.i.d.\ samples from the multivariate Gaussian 
$
\mathcal{N}(0, \Sigma_i)$, 
where $\Sigma_i = A B_i A\T$, where $A\in\RR^{p\times r}$ has unit-norm columns $\{\fa_i\}_{i=1}^r$ and each $B_i\in\RR^{r\times r}$ is diagonal.
Let $\{(\hat{\fa}_i,\hat{\fb}_i) \}_{i=1}^r$ be the MCPCA output from input~$\hat{T}$.
Let $\varrho:= \max \{ \max_{i \ne j} |\langle \fa_i,\fa_j\rangle|, \max_{i \ne j} \langle \fb_i,\fb_j\rangle \}$. 
Assume that
$N\gg \frac{pk}{\sigma_r(M)^2}$ and $\varrho=\mathcal{O}(p^{-1/2})$.
After a permutation and possible sign flips of the outputs $\hat{\fa}_i$, 
we have 
\[
\cos(\hat{\fa}_{i},\fa_i) = 1-\mathcal{O}\left(\kappa(M)\sqrt{\frac{pk}{N}}\right),
\]
with probability at least $1-2k \exp(-p)$, 
where $\kappa(M)$ is the condition number of $M$.

\end{theorem}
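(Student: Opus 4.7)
The plan is to combine concentration of sample covariance matrices with subspace perturbation of the flattening $M$ and the stability of MSPM. For each context, a non-asymptotic bound for Gaussian covariance estimation yields $\|S_i-\Sigma_i\|_{\mathrm{op}} \lesssim \|\Sigma_i\|_{\mathrm{op}}\sqrt{p/N}$ with probability at least $1-2\exp(-p)$, and a union bound over the $k$ contexts lifts this to hold uniformly with probability at least $1-2k\exp(-p)$. Since $\|\Sigma_i\|_{\mathrm{op}}\leq \sigma_1(M)$ and $\hat M-M$ is the horizontal concatenation of the $S_i-\Sigma_i$, one has $\|\hat M - M\|_{\mathrm{op}} \leq \sqrt{k}\,\max_i \|S_i-\Sigma_i\|_{\mathrm{op}} \lesssim \sigma_1(M)\sqrt{pk/N}$ on that event.

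Next I would use the sample size assumption $N\gg pk/\sigma_r(M)^2$, which is equivalent to $\kappa(M)\sqrt{pk/N}\ll 1$, to guarantee $\|\hat M-M\|_{\mathrm{op}}$ is small relative to $\sigma_r(M)$. Wedin's sine-theta theorem then bounds the principal angle between the rank-$r$ left singular subspaces of $M$ and $\hat M$ by $\Ocal(\|\hat M - M\|_{\mathrm{op}}/\sigma_r(M)) = \Ocal(\kappa(M)\sqrt{pk/N})$. Since $\Span\{\fa_1,\ldots,\fa_r\}$ equals the column space of $M$, this means MSPM's SVD compression step recovers $\Span(A)$ up to a rotation of this order. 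Inside the recovered subspace, extracting the individual $\fa_j$ reduces to a joint diagonalization of $k$ matrices of size $r\times r$ obtained by contracting the compressed tensor. The hypothesis $\varrho=\Ocal(p^{-1/2})$ plays a quantitative version of Proposition \ref{prop:identifiability explicit}: since no two $\fa_j$ (and no two $\fb_j$) are close to collinear, the rank-one summands $\fa_j\otimes\fa_j\otimes\fb_j$ remain well separated in the compressed tensor, creating the spectral gap that lets the power iterations of MSPM converge stably to one component at a time. Invoking the MSPM perturbation bound from \cite{wang2025multi} on the compressed tensor and translating the resulting bound on $\|\hat\fa_i-\fa_i\|$ into a cosine statement yields, after a permutation and sign flip, $\cos(\hat\fa_i,\fa_i)=1-\Ocal(\kappa(M)\sqrt{pk/N})$.

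The main obstacle is propagating the input perturbation through the full MSPM pipeline, namely the SVD compression, $r$ rounds of tensor power iteration, deflation, and the final NNLS solve for $B$, without losing more than a factor of $\kappa(M)$. The calibration $\varrho=\Ocal(p^{-1/2})$ is what keeps the spectral gap of the compressed problem of the right order relative to the perturbation, so that neither power-method misconvergence nor cumulative deflation error inflates the bound. I expect the technical heart of the argument to lie in combining Wedin-style matrix perturbation with the tensor power-method convergence analysis of \cite{wang2025multi}, using the gap supplied by $\varrho$ to prevent the rank-one components from mixing and to control error accumulation through the $r$ deflation rounds.
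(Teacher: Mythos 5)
Your first two steps coincide with the paper's: Gaussian covariance concentration plus a union bound gives $\|\hat M - M\|_2 = \sigma_1(M)\,\mathcal{O}(\sqrt{pk/N})$ with probability $1-2k\exp(-p)$ (the paper reaches the same bound via a bilinear-form/Cauchy--Schwarz argument over the $k$ blocks rather than the $\sqrt{k}$ concatenation inequality, but both work), and a Wedin/Davis--Kahan bound converts this into $\|P_{\Acal}-P_{\hat\Acal}\|_2 = \mathcal{O}(\kappa(M)\sqrt{pk/N})$ under $N \gg pk/\sigma_r(M)^2$. One small correction: the subspace that matters is not the column space of $M$ (i.e.\ $\Span(A)\subseteq\RR^p$) but the span of the top $r$ \emph{right} singular vectors of $M$, which is $\Span\{\fa_j\otimes\fb_j\}\subseteq\RR^{pk}$; the individual factors are extracted as the rank-one elements of this subspace, not by diagonalizing within $\Span(A)$.

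The genuine gap is your third step. You outsource the conversion of the subspace perturbation $\Delta$ into a factor-recovery bound to an unspecified ``MSPM perturbation bound,'' described as a spectral-gap/joint-diagonalization argument, and you never say how $\|\hat\fa_i-\fa_i\|$ scales with $\Delta$. That scaling is exactly what determines the final rate, and it is the technical heart of the paper's proof: one shows that $F_{\Acal}(\fa,\fb)=\|P_{\Acal}(\fa\otimes\fb)\|^2$ is strictly concave (Riemannian Hessian bounded above by $\mu<-1$) in a neighborhood of each $(\fa_i,\fb_i)$ whenever $(r-1)\varrho^2<\tfrac1{17}$ --- this is where $\varrho=\mathcal{O}(p^{-1/2})$ enters, not as a power-iteration spectral gap --- that this concavity survives the perturbation to $F_{\hat\Acal}$, and that the resulting unique nearby maximizer satisfies $\|\hat\fa_i-s_1\fa_i\|^2+\|\hat\fb_i-s_2\fb_i\|^2\le 8\Delta$. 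Note the square on the left: the distance scales like $\sqrt{\Delta}$, so the cosine is $1-\mathcal{O}(\Delta)=1-\mathcal{O}(\kappa(M)\sqrt{pk/N})$; a naive linear bound $\|\hat\fa_i-\fa_i\|=\mathcal{O}(\Delta)$ would instead give $1-\mathcal{O}(\Delta^2)$, so your proposal is not even committed to the right exponent. A separate lemma is also needed to rule out spurious local maximizers of $F_{\hat\Acal}$ far from every $(\fa_i,\fb_i)$ (the paper shows any critical point with high objective value must lie near a ground-truth pair); without it, the landscape argument only says a good maximizer \emph{exists} near each truth, not that the algorithm's output is one of them. Your worry about error accumulating through $r$ deflation rounds is, in the paper's framework, absorbed by this same landscape lemma rather than tracked iteratively.
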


To study the effect of perturbations in the input tensor, we express MCPCA as an optimization problem.
Let $\Acal \in \RR^{pk}$ be the span of top $r$ right singular vectors of $M$ (equivalently, a subspace of $p\times k$ matrices), and let $P_\Acal$ be the orthogonal projector onto $\Acal$. 
The rank-one matrices $\{\fa_j\otimes \fb_j\}_{j=1}^r$ are the only rank-one elements in $\Acal$ up to scale under genericity assumptions on $\fa_j,\fb_j$. Recovering them is equivalent to solving 
$$
{\rm{argmax}}_{\|\fa\|=1,\|\fb\|=1} \| P_\Acal(\fa\otimes \fb)\|^2.
$$
Let $T_\Acal$ be the stack of the top $r$ right singular vectors of $M$, reshaped into a $p\times k \times r$ tensor.
We define the function $F_{\mathcal{A}} : \mathbb{S}^{p-1} \times \mathbb{S}^{k-1} \to \mathbb{R}$ to be
\[
F_{\mathcal{A}}(\fa,\fb) = \| P_{\mathcal{A}} (\fa \otimes \fb) \|^2 = \| T_{\mathcal{A}}(\fa, \fb, *) \|^2.
\]

\subsection{Local concavity}

We show that the objective function $F_{\Acal}$ is locally concave in a neighborhood of each rank-one matrix
$\fa_i \otimes \fb_i$. Hence for a perturbed input tensor
$\hat T$, the recovered rank-one matrices $\hat{\fa}_i \otimes \hat{\fb}_i$ lie in small
neighborhoods of $\fa_i \otimes \fb_i$, of size controlled by the
perturbation $\hat T - T$.
We begin by finding the Riemannian gradient and Hessian of $F_{\Acal}$.

\begin{lemma}\label{lem: riemannian gradient and hessian}
\leavevmode
Fix $\fa\in \SB^{p-1}$ and $\fb\in \SB^{k-1}$ with $\|T_\Acal(\fa,\fb,\ast)\|\neq 0$. 
\begin{enumerate}
    \item[(1)] The Riemannian gradient of \( F_{\mathcal{A}} \) is
    \[
    {\rm{grad}}\, F_{\mathcal{A}}(\fa, \fb) =
    \begin{bmatrix}
    2 \sigma T_{\mathcal{A}}(*, \fb, \fc) - 2 \sigma^2 \fa \\
    2 \sigma T_{\mathcal{A}}(\fa, *, \fc) - 2 \sigma^2\fb
    \end{bmatrix},
    \]
    where \( \fc = \frac{T_{\mathcal{A}}(\fa, \fb, *)}{\| T_{\mathcal{A}}(\fa, \fb, *) \|} \), and \( T_{\mathcal{A}}(\fa, \fb,\fc) = \sigma>0 \).

    \item[(2)] The Riemannian Hessian of \( F_{\mathcal{A}} \) is
    \[
    {\rm{Hess}}\, F_{\mathcal{A}}(\fa, \fb) = 2
    \begin{bmatrix}
    \Pi_a M_A M_A\T \Pi_a - \sigma^2 \Pi_a & \Pi_a(M_A M_B\T + \sigma T_{\mathcal{A}}(*, *, \fc))\Pi_b \\
    \Pi_b(M_B M_A\T + \sigma T_{\mathcal{A}}(*, *, \fc)\T)\Pi_a & \Pi_b M_B M_B\T \Pi_b - \sigma^2 \Pi_b
    \end{bmatrix},
    \]
    where \( M_A = T_{\mathcal{A}}(*, \fb, *) \), \( M_B = T_{\mathcal{A}}(\fa, *, *) \), and
    \[
    \Pi_a = I_p - \fa \fa\T, \quad \Pi_b = I_k - \fb \fb\T.
    \]
    \item[(3)] For any \( \fv \perp \fa \), \( \fw \perp \fb \), we have
    \[
    \frac{1}{2}
    \begin{bmatrix}
    \fv \\ \fw
    \end{bmatrix}\T
    {\rm{Hess}}\, F_{\mathcal{A}}(\fa, \fb)
    \begin{bmatrix}
    \fv \\ \fw
    \end{bmatrix}
    = \| P_{\mathcal{A}}(\fx) \|^2 - \sigma^2 \| \fx \|^2 + 2 \langle \fv \otimes \fw, P_{\mathcal{A}}(\fa \otimes \fb) \rangle,
    \]
    where $\fx = \fv \otimes \fb + \fa \otimes \fw$.
\end{enumerate}
\end{lemma}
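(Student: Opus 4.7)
The plan is to treat $F_{\mathcal{A}}$ as a composition $F_{\mathcal{A}}(\fa,\fb) = \|g(\fa,\fb)\|^2$ with $g(\fa,\fb) := T_{\mathcal{A}}(\fa,\fb,\ast) \in \mathbb{R}^r$, compute the ambient gradient and Hessian in $\mathbb{R}^p \times \mathbb{R}^k$, and then convert to the Riemannian versions on $\mathbb{S}^{p-1}\times\mathbb{S}^{k-1}$ using the standard formulas for submanifolds of Euclidean space. Throughout, I would exploit the bilinearity of the contraction, which makes differentiation transparent once $M_A = T_{\mathcal{A}}(\ast,\fb,\ast)$ and $M_B = T_{\mathcal{A}}(\fa,\ast,\ast)$ are identified as ``Jacobian slabs'' of the map $g$.

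For part (1), the chain rule gives the ambient partial derivatives $\nabla_\fa F = 2 M_A g = 2\sigma M_A \fc$ and $\nabla_\fb F = 2\sigma M_B \fc$. The Riemannian gradient on $\mathbb{S}^{p-1}\times\mathbb{S}^{k-1}$ is obtained by applying the block projector $\mathrm{diag}(\Pi_a,\Pi_b)$; the normal component is computed from the identity $\fa\T M_A \fc = T_{\mathcal{A}}(\fa,\fb,\fc) = \sigma$, which yields the $-2\sigma^2\fa$ and $-2\sigma^2\fb$ corrections in the stated formula. For part (2), I would differentiate the ambient gradient once more. The diagonal blocks are $H_{aa} = 2 M_A M_A\T$ and $H_{bb} = 2 M_B M_B\T$, since $M_A$ does not depend on $\fa$. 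For the cross block $H_{ab}$, the dependence of $M_A$ on $\fb$ contributes the extra term $2\sigma T_{\mathcal{A}}(\ast,\ast,\fc)$ in addition to $2 M_A M_B\T$. The Riemannian Hessian of a function on a sphere is $\Pi \, H \, \Pi - \langle \fa, \nabla F\rangle \Pi$ (and the product structure gives block-diagonal projector corrections); plugging in $\langle \fa, \nabla_\fa F\rangle = 2\sigma^2$ and $\langle \fb, \nabla_\fb F\rangle = 2\sigma^2$ gives exactly the stated matrix after pulling out the factor $2$.

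For part (3), I would substitute $(\fv,\fw)$ with $\fv\perp\fa$, $\fw\perp\fb$ into the bilinear form from (2), using $\Pi_a\fv=\fv$ and $\Pi_b\fw=\fw$ to drop the projectors. The diagonal blocks contribute $\|M_A\T\fv\|^2 - \sigma^2\|\fv\|^2 + \|M_B\T\fw\|^2 - \sigma^2\|\fw\|^2$, which I would recognize as $\|T_{\mathcal{A}}(\fv,\fb,\ast)\|^2 + \|T_{\mathcal{A}}(\fa,\fw,\ast)\|^2 - \sigma^2(\|\fv\|^2 + \|\fw\|^2)$. The cross term $2\fv\T M_A M_B\T \fw$ combines with these squared norms, via completion of the square, to give $\|T_{\mathcal{A}}(\fv,\fb,\ast) + T_{\mathcal{A}}(\fa,\fw,\ast)\|^2 = \|P_{\mathcal{A}}(\fv\otimes\fb + \fa\otimes\fw)\|^2 = \|P_{\mathcal{A}}(\fx)\|^2$, while orthogonality of $\fv,\fa$ and of $\fw,\fb$ collapses $\|\fv\|^2 + \|\fw\|^2$ into $\|\fx\|^2$. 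The remaining cross term $2\sigma\fv\T T_{\mathcal{A}}(\ast,\ast,\fc)\fw = 2\sigma T_{\mathcal{A}}(\fv,\fw,\fc)$ is rewritten as $2\langle T_{\mathcal{A}}(\fv,\fw,\ast), T_{\mathcal{A}}(\fa,\fb,\ast)\rangle = 2\langle \fv\otimes\fw, P_{\mathcal{A}}(\fa\otimes\fb)\rangle$ using the fact that $T_{\mathcal{A}}$ encodes coordinates of the orthogonal projection onto $\mathcal{A}$ in an orthonormal basis.

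The main obstacles are bookkeeping rather than conceptual: keeping the two-variable differentiation straight for the cross block $H_{ab}$, where one must account both for the direct dependence of the contraction on $\fb$ (producing $\sigma T_{\mathcal{A}}(\ast,\ast,\fc)$) and for the indirect dependence through $M_A$ (producing $M_A M_B\T$); and, in part (3), correctly identifying the norm-squared of a linear combination of $T_{\mathcal{A}}$-contractions with the squared projection onto $\mathcal{A}$ of the corresponding ambient tensor. Once these identifications are in place, each claim reduces to an algebraic manipulation.
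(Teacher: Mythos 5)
Your proposal is correct and follows essentially the same route as the paper's proof: compute the ambient Euclidean gradient and Hessian of $F_{\mathcal{A}}(\fa,\fb)=\|T_{\mathcal{A}}(\fa,\fb,\ast)\|^2$ via bilinearity of the contraction, convert to the Riemannian versions on the product of spheres with the standard projection and curvature-correction formulas (using $\fa\T M_A\fc=\sigma$ for the normal components), and in part (3) expand the quadratic form, completing the square to identify $\|M_A\T\fv+M_B\T\fw\|^2$ with $\|P_{\mathcal{A}}(\fx)\|^2$ and rewriting $2\sigma T_{\mathcal{A}}(\fv,\fw,\fc)$ as $2\langle\fv\otimes\fw,P_{\mathcal{A}}(\fa\otimes\fb)\rangle$ via the orthonormal-basis interpretation of $T_{\mathcal{A}}$. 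No substantive differences from the paper's argument.
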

\begin{proof}
For (1), the Euclidean gradient of $F_\Acal$ on $\RR^{p-1}$ is
$$
\frac{\partial F_\Acal(\fa,\fb)}{\partial \fa} = 2T_\Acal(*,\fb,T_\Acal(\fa,\fb,*)) = 2\sigma T_\Acal(*,\fb,\fc).
$$
Projecting the Euclidean gradient on $\{\fa\}^\perp$, we obtain $\nabla_{\SB^{p-1}}F_\Acal(\fa,\fb) = 2\sigma T_\Acal(*,\fb,\fc)- 2\sigma^2 \fa$.  Analogously, there is a similar equation for~$\nabla_{\SB^{k-1}}F_\Acal(\fa,\fb)$. 

For (2), the Euclidean Hessian of $F_\Acal$ on $\RR^{p-1}$ is
$$
\frac{\partial^2 F_\Acal(\fa,\fb)}{\partial \fa^2} = 2T_\Acal(*,\fb,*)T_\Acal(*,\fb,*)\T= 2M_A M_A\T.
$$
We also have
\begin{align*}
\frac{\partial^2 F_\Acal(\fa,\fb)}{\partial \fa\partial \fb} &= 2T_\Acal(*,\fb,*)T_\Acal(\fa,*,*)\T 
= 2M_A M_B\T + 2T_\Acal(*,*,T_\Acal(\fa,\fb,*)) 
\\ &= 2M_A M_B\T+ 2\sigma T_\Acal(*,*,\fc).
\end{align*}
It follows that the top left block of the Riemannian Hessian is
$$
\nabla_{\SB^{n-1}}^2F_\Acal(\fa,\fb) = \Pi_a(2M_A M_A\T- \langle \frac{\partial F_\Acal(\fa,\fb)}{\partial \fa}, \fa\rangle )\Pi_a = 2(\Pi_aM_A M_A\T\Pi_a-\sigma^2\Pi_a).
$$
Analogously, the bottom right block is $\nabla_{\SB^{k-1}}^2F_\Acal(\fa,\fb)=2(\Pi_bM_B M_B\T\Pi_b-\sigma^2\Pi_b).$ Finally, the top right block (which is the transpose of the bottom left block) is
$$
\nabla_{\SB^{k-1}}\nabla_{\SB^{n-1}}F_\Acal(\fa,\fb) = \Pi_a(2M_A M_B\T + 2\sigma T_\Acal(*,*,\fc) )\Pi_b. 
$$

For (3), we expand the quadratic form using the block expression in (2):
\begin{align*}
&\frac{1}{2}
\begin{bmatrix}
\fv \\ \fw
\end{bmatrix}\T
\rm{Hess}\, F_{\mathcal{A}}(\fa, \fb)
\begin{bmatrix}
\fv \\ \fw
\end{bmatrix}\\
=&
\fv\T M_A M_A\T \fv - \sigma^2 \|\fv\|^2 + 2 \fv\T M_A M_B\T \fw + 2 \sigma T_{\mathcal{A}}(\fv, \fw, \fc) + \fw\T M_B M_B\T \fw - \sigma^2 \|\fw\|^2\\
=& \|M_A\T\fv + M_B\T \fw\|^2 +2 \sigma T_{\mathcal{A}}(\fv, \fw, \fc) - \sigma^2 (\|\fv\|^2 + \|\fw\|^2)
\\
=&\| T_{\mathcal{A}}(\fv, \fb, *) + T_{\mathcal{A}}(\fa, \fw, *) \|^2+ 2 \sigma T_{\mathcal{A}}(\fv, \fw, \fc)- \sigma^2 (\|\fv\|^2 + \|\fw\|^2).
\end{align*}
Let $\fx$ be the vectorization of the matrix \(\fv \otimes \fb + \fa \otimes \fw \), then
\[
\| T_{\mathcal{A}}(\fv, \fb, *) + T_{\mathcal{A}}(\fa, \fw, *) \|^2 = \fx\T P_{\mathcal{A}} \fx = \| P_{\mathcal{A}}(\fx) \|^2.
\]
The norm of $\fx$ satisfies $\|\fx\|^2=\|\fv\|^2+\|\fw\|^2$,
since $\fv\perp\fa$ and $\fw\perp\fb$ implies 
$\langle \fv\otimes \fb,\fa\otimes \fw\rangle=\langle \fa,\fv \rangle \langle \fw,\fb\rangle=0$.
The mixed term $2\sigma\,T_{\mathcal A}(\fv,\fw,\fc)$ is an inner product with the projection onto $\Acal$, i.e. 
$
2\sigma\,T_{\mathcal A}(\fv,\fw,\fc)
= 2\langle \fv\otimes \fw,\; P_{\mathcal A}(\fa\otimes \fb)\rangle,
$
since $\sigma\fc = T_{\mathcal A}(\fa,\fb,*)$ and $T_{\mathcal A}$ is a stack of an orthonormal basis of $\mathcal A$.  So, $T_\Acal(\fv,\fw,T_\Acal(\fa,\fb,*)) = \langle \fv \otimes \fw, P_{\mathcal{A}}(\fa \otimes \fb) \rangle$. 
Combining these identities yields
\[
\begin{bmatrix}
\fv \\ \fw
\end{bmatrix}\T
\nabla^2_{\mathbb{S}^{n-1} \times \mathbb{S}^{k-1}} F_{\mathcal{A}}(\fa, \fb)
\begin{bmatrix}
\fv \\ \fw
\end{bmatrix}
= 2 \| P_{\mathcal{A}}(\fx) \|^2 - 2 \sigma^2 \|\fx\|^2 + 4 \langle \fv \otimes \fw, P_{\mathcal{A}}(\fa \otimes \fb) \rangle. \qedhere
\]
\end{proof}

We show the function $F_\Acal$ is concave at each $(\fa_i,\fb_i)$ for $i=1,\ldots,r$ and derive a bound on the operator norm of the Riemannian
Hessian at these points. This bound allows us to translate perturbations of the subspace $\|P_{\hat{\mathcal{A}}}-P_{\mathcal{A}}\|$ into
error bounds $\|\hat{\fa}_i-\fa_i\|$.

\begin{lemma}\label{lem: F_A, mu}
For generic $\{\fa_i\}_{i=1}^r\subseteq \SB^{p-1},\{\fb_i\}_{i=1}^r\subseteq \SB^{k-1}$ and $r\leq p$, each pair \( (\fa_i, \fb_i) \) is a global maximizer of \( F_{\mathcal{A}} \). 
For any \( \fv \perp \fa_i \) and \( \fw \perp \fb_i \), we have
\[
\begin{bmatrix} \fv \\ \fw \end{bmatrix}\T \rm{Hess}\, F_{\mathcal{A}}(\fa_i, \fb_i) \begin{bmatrix} \fv \\ \fw \end{bmatrix} < 0.
\]
In fact, the following bound holds:
\[
\begin{bmatrix} \fv \\ \fw \end{bmatrix}\T \rm{Hess}\, F_{\mathcal{A}}(\fa_i, \fb_i) \begin{bmatrix} \fv \\ \fw \end{bmatrix}
\leq \mu \left( \|\fv\|^2 + \|\fw\|^2 \right),
\]
where
$
\rho_a = \max_{i \neq j} |\langle \fa_i, \fa_j \rangle|,
\rho_b = \max_{i \neq j} |\langle \fb_i, \fb_j \rangle|,
$
and
$
\mu = \frac{4(\rho_a^2 + \rho_b^2)\left[1+(r-1)(\rho_a + \rho_b)\right]}{1-(r-1)\rho_a\rho_b}-2.
$
We have $\mu < -1$ whenever $(r-1)\varrho^2 < \tfrac{1}{17}$, where $\varrho = \max\{\rho_a,\rho_b\}$.
\end{lemma}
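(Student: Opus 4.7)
I would start from part~(3) of Lemma~\ref{lem: riemannian gradient and hessian}, evaluating the quadratic form at the candidate maximizer $(\fa_i,\fb_i)$. Since $\Acal = \mathrm{span}\{\fa_j \otimes \fb_j\}_{j=1}^r$, the tensor $\fa_i \otimes \fb_i$ lies in $\Acal$, hence $P_{\Acal}(\fa_i \otimes \fb_i) = \fa_i \otimes \fb_i$ and $\sigma = \|T_\Acal(\fa_i,\fb_i,\ast)\| = \|\fa_i \otimes \fb_i\| = 1$. Moreover $F_\Acal(\fa,\fb) = \|P_\Acal(\fa \otimes \fb)\|^2 \le \|\fa \otimes \fb\|^2 = 1$, with equality attained at each $(\fa_i,\fb_i)$, which proves the global maximizer claim. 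For the Hessian, orthogonality $\fv \perp \fa_i$, $\fw \perp \fb_i$ forces the cross term $\langle \fv \otimes \fw, \fa_i \otimes \fb_i\rangle = \langle \fv,\fa_i\rangle\langle \fw,\fb_i\rangle = 0$ and also gives $\|\fx\|^2 = \|\fv\|^2 + \|\fw\|^2$ for $\fx = \fv \otimes \fb_i + \fa_i \otimes \fw$. The identity then collapses to
\[
\tfrac{1}{2}[\fv;\fw]^\mathsf{T}\,{\rm Hess}\,F_\Acal(\fa_i,\fb_i)\,[\fv;\fw] \;=\; \|P_\Acal(\fx)\|^2 - \|\fx\|^2.
\]

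The heart of the argument is an upper bound on $\|P_\Acal(\fx)\|^2$ in terms of $\rho_a,\rho_b,r$. I would work in the (non-orthonormal) generating set $\{\fa_j \otimes \fb_j\}$: writing $P_\Acal(\fx) = \sum_j c_j\, \fa_j \otimes \fb_j$ and letting $G$ be the Gram matrix with $G_{j\ell} = \langle \fa_j,\fa_\ell\rangle \langle \fb_j,\fb_\ell\rangle$, the normal equations give $c = G^{-1} d$ where $d_j = \langle \fa_j \otimes \fb_j,\fx\rangle = \langle \fa_j,\fv\rangle\langle \fb_j,\fb_i\rangle + \langle \fa_j,\fa_i\rangle\langle \fb_j,\fw\rangle$, and $\|P_\Acal(\fx)\|^2 = d^\mathsf{T} G^{-1} d$. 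Gershgorin on $G$ (diagonal $1$, off-diagonals bounded by $\rho_a\rho_b$) yields $\|G^{-1}\|_{\mathrm{op}} \le (1-(r-1)\rho_a\rho_b)^{-1}$. The $j=i$ coordinate vanishes by orthogonality and for $j\neq i$ we bound $|d_j|$ using $|\langle \fb_j,\fb_i\rangle|\le \rho_b$, $|\langle \fa_j,\fa_i\rangle|\le \rho_a$ and the inequality $(x+y)^2 \le 2(x^2+y^2)$. The sums $\sum_{j\neq i}\langle \fa_j,\fv\rangle^2$ and $\sum_{j\neq i}\langle \fb_j,\fw\rangle^2$ are controlled by the top eigenvalue of the Gram matrices of $\{\fa_j\}$ and $\{\fb_j\}$, again bounded by Gershgorin via $1+(r-1)\rho_a$ and $1+(r-1)\rho_b$ respectively. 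Taking crude uniform upper bounds ($\rho_b^2 \le \rho_a^2+\rho_b^2$, $1+(r-1)\rho_a \le 1+(r-1)(\rho_a+\rho_b)$, and symmetrically) collapses the estimate to
\[
\|P_\Acal(\fx)\|^2 \;\le\; \frac{2(\rho_a^2+\rho_b^2)\bigl[1+(r-1)(\rho_a+\rho_b)\bigr]}{1-(r-1)\rho_a\rho_b}\bigl(\|\fv\|^2 + \|\fw\|^2\bigr),
\]
which, after subtracting $\|\fx\|^2 = \|\fv\|^2+\|\fw\|^2$ and multiplying by $2$, is exactly $\mu(\|\fv\|^2+\|\fw\|^2)$.

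Finally, the criterion $\mu<-1$ reduces to checking $4(\rho_a^2+\rho_b^2)[1+(r-1)(\rho_a+\rho_b)] < 1-(r-1)\rho_a\rho_b$. Substituting $\rho_a^2+\rho_b^2 \le 2\varrho^2$, $\rho_a+\rho_b\le 2\varrho$, $\rho_a\rho_b\le \varrho^2$ and the hypothesis $(r-1)\varrho^2<\tfrac{1}{17}$ (which forces $\varrho<1/\sqrt{17}$), each of the three terms $8\varrho^2$, $16(r-1)\varrho^3$, and $(r-1)\varrho^2$ can be bounded by $8/17$, $4/17$, and $1/17$ respectively, summing to $13/17 < 1$.

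The main obstacle is step~(6): choosing the right representation of $P_\Acal(\fx)$ (in the oblique frame $\{\fa_j \otimes \fb_j\}$ rather than an orthonormal basis) so that the Gram matrix structure exposes the natural parameters $\rho_a\rho_b$ and $\rho_a+\rho_b$, and then symmetrizing the bound so it packages cleanly into the single quantity $\mu$. Everything else is orthogonality bookkeeping and Gershgorin.
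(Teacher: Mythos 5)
Your proposal is correct and follows essentially the same route as the paper's proof: evaluate the Hessian identity from Lemma~\ref{lem: riemannian gradient and hessian}(3) at $(\fa_i,\fb_i)$ (where the cross term vanishes and $\|\fx\|^2=\|\fv\|^2+\|\fw\|^2$), then bound $\|P_\Acal(\fx)\|^2$ via the oblique projector $D(D\T D)^{-1}D\T$ built from the $\Vect(\fa_j\otimes\fb_j)$ — your normal-equations formulation $d\T G^{-1}d$ is literally the paper's $\|(D\T D)^{-1}\|_2\|D\T\fx\|_2^2$ bound — with Gershgorin controlling both $\|(D\T D)^{-1}\|_2$ and $\|A\|_2^2,\|B\|_2^2$, followed by the same symmetrization into $\mu$. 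Your closing numerical check ($8/17+4/17+1/17=13/17<1$) differs only cosmetically from the paper's and is valid.
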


\begin{proof}
We have $F_{\mathcal{A}}(\fa,\fb)\leq \|\fa\otimes \fb\|^2 = 1$ for all
$\fa\in\mathbb{S}^{p-1}$ and $\fb\in\mathbb{S}^{k-1}$.
Since $\fa_i \otimes \fb_i \in \mathcal{A}$, we have
$F_{\mathcal{A}}(\fa_i,\fb_i)=1$, and therefore
$(\fa_i,\fb_i)$ is a global maximizer of $F_{\mathcal{A}}$.

From Lemma \ref{lem: riemannian gradient and hessian} (3), we have
\begin{align}
&
\begin{bmatrix} \fv \\ \fw \end{bmatrix}\T 
\rm{Hess}\, F_{\mathcal{A}}(\fa_i, \fb_i) 
\begin{bmatrix} \fv \\ \fw \end{bmatrix} 
\\= 2 & \| P_{\mathcal{A}}(\fv \otimes \fb_i + \fa_i \otimes \fw) \|^2 
- 2 \| \fv \otimes \fb_i + \fa_i \otimes \fw \|^2 
+ 4 \langle \fv \otimes \fw, P_{\mathcal{A}}(\fa_i \otimes \fb_i) \rangle. \label{eq: hess eval}
\end{align}
The second term of \eqref{eq: hess eval} equals $\|\fv\|^2 + \|\fw\|^2$ and 
the third term of \eqref{eq: hess eval} vanishes since 
\[
\langle \fv \otimes \fw, P_{\mathcal{A}}(\fa_i \otimes \fb_i) \rangle = \langle \fv \otimes \fw, \fa_i \otimes \fb_i \rangle = \langle \fv, \fa_i \rangle \langle \fw, \fb_i \rangle = 0.
\]

We now derive an upper bound on $\|P_\Acal(\fv \otimes \fb_i + \fa_i \otimes \fw)\|^2$.
First, we express $P_\Acal$ in terms of $\fa_i,\fb_i$.
Let \( \mathbf{d}_i = \Vect(\fa_i \otimes \fb_i) \in \mathbb{R}^{pk} \), and
\(
D = \begin{bmatrix} \mathbf{d}_1 & \cdots & \mathbf{d}_r \end{bmatrix} \in \mathbb{R}^{pk \times r}
\)
then $\Acal$ is the column span of $D$ and
$P_{\mathcal{A}} = D (D\T D)^{-1} D\T$.
Let $\fx = \fv \otimes \fb_i + \fa_i \otimes \fw,$ then
\[
\| P_{\mathcal{A}}(\fx) \|^2 = 
\Vect(\fx)\T D (D\T D)^{-1} D\T \Vect(\fx)\leq \| (D\T D)^{-1} \|_2  \| D\T \fx \|_2^2.
\]
The quantity $\| D\T \fx \|_2^2$ is bounded above by 
\begin{align}\label{eq:Dx}
\| D\T \fx \|_2^2 &= 
\sum_{j=1}^r \left[ \langle \fa_j, \fv \rangle \langle \fb_j, \fb_i \rangle + \langle \fa_j, \fa_i \rangle \langle \fb_j, \fw \rangle \right]^2 \\
&\leq \sum_{j =1}^r 2|\langle \fa_j, \fv \rangle \langle \fb_j, \fb_i \rangle|^2 + 2|\langle \fa_j, \fa_i \rangle \langle \fb_j, \fw \rangle| ^2 
\quad (\text{AM-GM inequality})\\
& \leq 2 \rho_b^2 \| A\T \fv \|^2 + 2 \rho_a^2 \| B\T\fw \|^2,
\end{align}
where
$
A = [\fa_1 \ \cdots\ \fa_r]\in \RR^{n \times r}, \quad B = [\fb_1 \ \cdots\ \fb_r] \in \RR^{k \times r}.
$
The operator norms of $A,B$ satisfy
\[
\| A \|_2^2 \leq 1 + (r - 1) \rho_a, \quad \| B \|_2^2 \leq 1 + (r - 1) \rho_b,
\]
by Gershgorin's circle theorem.
Substituting the bound of $\| A \|_2,\|B \|_2$ in \eqref{eq:Dx}, we obtain
\[
\| D\T \fx \|_2^2 \leq (2 \rho_a^2 + 2 \rho_b^2)(1 + (r-1)(\rho_a + \rho_b))(\|\fv\|^2 + \|\fw\|^2).
\]
We have $|\sigma_r(D\T D)-1| \leq (r-1)\rho_a \rho_b$ by Gershgorin's circle theorem.  So it follows that 
\[
\| (D\T D)^{-1} \|_2  = \frac{1}{\sigma_r(D\T D)}\leq \frac{1}{1 - (r-1)\rho_a \rho_b}.
\]
Combining the upper bounds on $\| D\T x \|_2^2$ and $\| (D\T D)^{-1} \|_2$, we obtain
\[
\| P_{\mathcal{A}}(\fx) \|^2   \leq \alpha (\|\fv\|^2 + \|\fw\|^2),
\quad \text{where   }
\alpha = \frac{2(\rho_a^2 + \rho_b^2)(1 + (r-1)(\rho_a + \rho_b))}{1 - (r-1)\rho_a \rho_b}.
\]
Putting everything together, the explicit bound on $\begin{bmatrix} \fv \\ \fw \end{bmatrix}^{\top} {\rm{Hess}}\, F_{\mathcal{A}}(\mathbf{a}_i, \mathbf{b}_i)
\begin{bmatrix} \fv \\ \fw \end{bmatrix}$ is
\begin{align}
\label{eq: hessian fa}
\begin{bmatrix} \fv \\ \fw \end{bmatrix}\T 
{\rm{Hess}}\, F_{\mathcal{A}}(\fa_i, \fb_i)
\begin{bmatrix} \fv \\ \fw \end{bmatrix}  &= 2 \| P_{\mathcal{A}}(\fv \otimes \fb_i + \fa_i \otimes \fw) \|^2 - 2 \| \fv \otimes \fb_i + \fa_i \otimes \fw \|^2
 \\ &\leq (2\alpha-2) (\|\fv\|^2 + \|\fw\|^2 ).
\end{align}
Let $\mu = 2\alpha-2$ and $\varrho = \max\{\rho_a,\rho_b\}$, then $\mu \leq \frac{8 \varrho^2 (1 + 2(r-1)\varrho)}{1 - (r-1)\varrho^2} - 2$. 
If \( (r-1)\varrho^2 < \frac{1}{17} \), then
\[
\mu 
\leq \frac{1}{2(r-1)}+  \frac{1}{\sqrt{17(r-1)}}-2 <-1. \qedhere
\]
\end{proof}

Let $\hat T\in \RR^{p \times p \times k}$ be the stack of $S_1,\ldots,S_k$ and define $\hat M = \begin{bmatrix}
    S_1& \cdots & S_k
\end{bmatrix} \in \RR^{p\times pk}$.
Let  $\hat{\mathcal{A}}$ be the span of the top $r$ right singular vectors of $\hat M$.
We show that
if the distance 
\(
\| P_{\mathcal{A}} - P_{\hat{\mathcal{A}}} \|_2 
\)
is small, 
then \( F_{\hat{\mathcal{A}}} \) is concave within a spherical cap around each \( (\fa_i, \fb_i) \).

\begin{lemma}\label{lem: local bound}
Let \( (\fa_0, \fb_0) \) be a global maximizer of \( F_{\mathcal{A}} \).
Suppose
\( (\fa, \fb) \in \mathbb{S}^{p-1} \times \mathbb{S}^{k-1} \) satisfies
$\| \fa_0 - \fa \| \leq \delta, \| \fb_0 - \fb \| \leq \delta,$
and let \( \hat{\mathcal{A}} \) be a perturbation of \( \mathcal{A} \) such that
\(
\| P_{\hat{\mathcal{A}}} - P_{\mathcal{A}} \|_2 \leq \Delta.
\)
Then for all \( \fv \perp \fa \), \( \fw \perp \fb \), we have
\[
\begin{bmatrix}
\fv \\ \fw
\end{bmatrix}\T 
{\rm{Hess}}\, F_{\hat{\mathcal{A}}}(\fa, \fb)
\begin{bmatrix}
\fv \\ \fw
\end{bmatrix}
\leq \eta (\|\fv\|^2 + \|\fw\|^2),
\]
where
$\eta = \mu+2\Delta^2+18\delta^2+8\sqrt{2}\Delta\delta+10\Delta+(4+14\sqrt{2})\delta,$
and $\mu$ is defined in Lemma \ref{lem: F_A, mu}.
For sufficiently small perturbation $\Delta$, there exists $\delta$ such that $\eta<0$, i.e. $F_{\hat\Acal}$ is strictly concave around an open neighborhood of $(\fa_0,\fb_0)$ and there is a unique critical point of $F_{\hat \Acal}$ in this open neighborhood.
\end{lemma}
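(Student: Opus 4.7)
The plan is to apply the Riemannian Hessian identity from Lemma~\ref{lem: riemannian gradient and hessian}(3) to $F_{\hat\Acal}$ at $(\fa,\fb)$, and then transport every quantity back to the unperturbed objects $P_\Acal$, $\fa_0$, $\fb_0$ so that the bound of Lemma~\ref{lem: F_A, mu} can be invoked at the maximizer. Writing $N^2 = \|\fv\|^2 + \|\fw\|^2$, the identity specializes to
\[
\tfrac{1}{2}\begin{bmatrix}\fv \\ \fw\end{bmatrix}\T \mathrm{Hess}\,F_{\hat\Acal}(\fa,\fb)\begin{bmatrix}\fv \\ \fw\end{bmatrix}
= \|P_{\hat\Acal}(\fx)\|^2 - \hat\sigma^2 N^2 + 2\langle\fv\otimes\fw,\,P_{\hat\Acal}(\fa\otimes\fb)\rangle,
\]
where $\fx = \fv\otimes\fb + \fa\otimes\fw$ and $\hat\sigma = \|P_{\hat\Acal}(\fa\otimes\fb)\|$; the identity $\|\fx\|^2 = N^2$ uses $\fv\perp\fa$ and $\fw\perp\fb$. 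I would upper bound each of the three summands in turn.

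Introduce the unperturbed tangent vectors $\fv_0 = (I - \fa_0\fa_0\T)\fv$, $\fw_0 = (I - \fb_0\fb_0\T)\fw$, and $\fx_0 = \fv_0\otimes\fb_0 + \fa_0\otimes\fw_0$. Since $\fv\perp\fa$, one has $|\langle\fa_0,\fv\rangle| = |\langle\fa_0-\fa,\fv\rangle|\leq \delta\|\fv\|$, whence $\|\fv-\fv_0\|\leq \delta\|\fv\|$ and $\|\fv_0\|\leq\|\fv\|$, and analogously for $\fw$; it follows that $\|\fx-\fx_0\|\leq 2\sqrt 2\,\delta\,N$ and $\|\fa\otimes\fb-\fa_0\otimes\fb_0\|\leq 2\delta$. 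Applying Lemma~\ref{lem: riemannian gradient and hessian}(3) at the maximizer $(\fa_0,\fb_0)$ combined with Lemma~\ref{lem: F_A, mu} (using $\sigma_0 = 1$ and the vanishing cross term $\langle\fv_0\otimes\fw_0,\fa_0\otimes\fb_0\rangle = 0$) yields $\|P_\Acal(\fx_0)\|^2 \leq (1+\mu/2)\|\fx_0\|^2 \leq (1+\mu/2)N^2$. The three summands are then handled by: (i) the triangle inequality $\|P_{\hat\Acal}(\fx)\|\leq \|P_\Acal(\fx_0)\| + \|\fx-\fx_0\| + \Delta\|\fx\| \leq (\sqrt{1+\mu/2} + 2\sqrt 2\,\delta + \Delta)N$, then squared (with $\sqrt{1+\mu/2}\leq 1$, valid since $\mu\leq 0$); (ii) $\hat\sigma \geq \|P_\Acal(\fa_0\otimes\fb_0)\| - \|\fa\otimes\fb-\fa_0\otimes\fb_0\| - \Delta \geq 1-2\delta-\Delta$, whence $\hat\sigma^2\geq 1-4\delta-2\Delta$; (iii) expanding $P_{\hat\Acal}(\fa\otimes\fb)$ as $\fa_0\otimes\fb_0 + P_\Acal(\fa\otimes\fb-\fa_0\otimes\fb_0) + (P_{\hat\Acal}-P_\Acal)(\fa\otimes\fb)$ and applying Cauchy--Schwarz with $|\langle\fv,\fa_0\rangle||\langle\fw,\fb_0\rangle|\leq \delta^2\|\fv\|\|\fw\|\leq \delta^2 N^2/2$.

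Summing the three bounds and doubling produces an estimate of the form $\mu + c_1\Delta^2 + c_2\delta^2 + c_3\Delta\delta + c_4\Delta + c_5\delta$ that matches the claimed $\eta$. For the final assertion, Lemma~\ref{lem: F_A, mu} supplies $\mu < -1$ under the stated hypotheses, and since $\eta - \mu$ is a polynomial in $(\delta,\Delta)$ with nonnegative coefficients and no constant term, for $\Delta$ sufficiently small one can select $\delta>0$ small enough that $\eta<0$. Negative definiteness of $\mathrm{Hess}\,F_{\hat\Acal}$ at every point of the spherical cap $\{(\fa,\fb):\|\fa-\fa_0\|\leq\delta,\ \|\fb-\fb_0\|\leq\delta\}$ then makes $F_{\hat\Acal}$ strictly (Riemannian) concave on this geodesically convex neighborhood, which forces any critical point there to be a strict local maximum and therefore unique. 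The main obstacle is the bookkeeping in the intermediate step: after squaring the triangle bound for $\|P_{\hat\Acal}(\fx)\|$ one must track every cross term --- the $4\sqrt 2\,\Delta\delta$ contribution doubles into the $8\sqrt 2\,\Delta\delta$ coefficient in $\eta$ --- and verify that combining with the $\hat\sigma^2$ and mixed-term estimates reproduces the exact coefficients on $\delta$, $\Delta$, $\delta^2$, $\Delta^2$.
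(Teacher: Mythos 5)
Your proposal is correct and follows essentially the same route as the paper's proof: the same Hessian identity from Lemma~\ref{lem: riemannian gradient and hessian}(3), the same three-way split of the quadratic form, the same passage to $\fv_0,\fw_0,\fx_0$, and the same invocation of Lemma~\ref{lem: F_A, mu} at the unperturbed maximizer. Your constants do not literally reproduce $\eta$ --- using the triangle-inequality bound $\|\fa\otimes\fb-\fa_0\otimes\fb_0\|\le 2\delta$ where the paper uses the sharper $\sqrt{2}\,\delta$ (from $\|\fa\otimes\fb-\fa_0\otimes\fb_0\|^2=2-2\langle\fa,\fa_0\rangle\langle\fb,\fb_0\rangle\le 2\delta^2$) makes your total come out to $\mu+2\Delta^2+18\delta^2+8\sqrt{2}\Delta\delta+10\Delta+(12+8\sqrt{2})\delta$, whose $\delta$-coefficient $12+8\sqrt{2}$ is in fact smaller than the stated $4+14\sqrt{2}$, so your bound still implies the lemma and the mismatch is only cosmetic.
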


\begin{proof}

The product $\begin{bmatrix}
\fv \\ \fw
\end{bmatrix}\T
\rm{Hess}\, F_{\hat{\mathcal{A}}}(\fa, \fb)
\begin{bmatrix}
\fv \\ \fw
\end{bmatrix}$ is equal to
\begin{equation}\label{eq: perturb hessian}
2\|P_{\hat{\mathcal{A}}}(\fv\otimes \fb+\fa\otimes \fw)\|^2
- 2F_{\hat{\mathcal{A}}}(\fa,\fb)\|\fv\otimes \fb+\fa\otimes \fw\|^2
+ 4\langle \fv\otimes \fw,P_{\hat{\mathcal{A}}}(\fa\otimes \fb)\rangle,
\end{equation}
by Lemma \ref{lem: riemannian gradient and hessian}. We will derive an upper bound for each term of \eqref{eq: perturb hessian}.

We start with bounding $\|P_{\hat{\mathcal{A}}}(\fv\otimes \fb+\fa\otimes \fw)\|^2$.
We project the vectors \( \fv, \fw \) to $\{\fa_0\}^\perp, \{\fb_0\}^\perp$ and define
$\fv_0 = \fv - \langle \fv, \fa_0 \rangle \fa_0, \fw_0 = \fw - \langle \fw, \fb_0 \rangle \fb_0.$
Let $\fx$ denote $\fv \otimes \fb + \fa \otimes \fw$. The following inequality holds by the triangle inequality:
\begin{align}
\| P_{\hat{\mathcal{A}}}(\fx) \| 
\leq\; &
\| (P_{\hat{\mathcal{A}}} - P_{\mathcal{A}})(\fx) \|
+ \| P_{\mathcal{A}}(\fx - \fv_0 \otimes \fb_0 - \fa_0 \otimes \fw_0) \| 
+ \| P_{\mathcal{A}}(\fv_0 \otimes \fb_0 + \fa_0 \otimes \fw_0) \| \\
\label{eq: P A perturb bound}
\leq\; &
\Delta \| \fx \|
+ \|\fx - \fv_0 \otimes \fb_0 - \fa_0 \otimes \fw_0\|
+ \| P_{\mathcal{A}}(\fv_0 \otimes \fb_0 + \fa_0 \otimes \fw_0) \| 
\end{align}
The first term of \eqref{eq: P A perturb bound} is equal to $\Delta(\sqrt{\|\fv\|^2 + \|\fw\|^2})$.
The second term of \eqref{eq: P A perturb bound} is bounded from above as follows by the triangle inequality:
\begin{align*}
&\| \fv \otimes \fb + \fa \otimes \fw - \fv_0 \otimes \fb_0 - \fa_0 \otimes \fw_0 \|_F \\
\leq\, & \|\fv\otimes \fb- \fv_0\otimes \fb\|_F
+ \|\fv_0 \otimes \fb- \fv_0 \otimes \fb_0\|_F
+ \|\fa_0\otimes \fw- \fa_0\otimes \fw_0\|_F
+ \|\fa\otimes \fw_0 - \fa_0 \otimes \fw_0\|_F\\
=\,& \|\fv-\fv_0\|+ \|\fb-\fb_0\|\|\fv_0\|
+ \|\fw-\fw_0\| + \|\fa-\fa_0\|\|\fw_0\| \\ 
\leq\, & |\langle \fv,\fa_0\rangle|
+ \|\fb-\fb_0\|\|\fv\|
+ |\langle \fw, \fb_0 \rangle|
+ \|\fa-\fa_0\|\|\fw\|\\
\leq\,& \delta \|\fv\| + \delta \|\fw\| + \delta \|\fv\| + \delta \|\fw\| 
= 2\delta (\|\fv\| + \|\fw\|).
\end{align*}
Squaring expression \eqref{eq: P A perturb bound}, we obtain
\begin{align*}
&\| P_{\hat{\mathcal{A}}}(\fv \otimes\fb+ \fa \otimes \fw) \|^2 \\
\leq\; & (\Delta\sqrt{\|\fv\|^2 + \|\fw\|^2} + 2\delta(\|\fv\|+ \|\fw\|) + \| P_{\mathcal{A}}(\fv_0 \otimes \fb_0 + \fa_0 \otimes \fw_0) \| )^2
\\
\leq\; &
\| P_{\mathcal{A}}(\fv_0 \otimes \fb_0 + \fa_0 \otimes \fw_0) \|^2 
+ \Delta^2(\|\fv\|^2 + \|\fw\|^2)
+ 4\delta^2 (\|\fv\|+ \|\fw\|)^2 \\
&\;
+ 4\delta \Delta (\|\fv\|+ \|\fw\|)\sqrt{\|\fv\|^2 + \|\fw\|^2}\\
&\;+ 2\left(\Delta \sqrt{\|\fv\|^2 + \|\fw\|^2} 
+ 2\delta (\|\fv\| + \|\fw\|)\right)
\sqrt{\|\fv_0\otimes \fb_0 + \fa_0\otimes \fw_0\| }\\
\leq\; &
\| P_{\mathcal{A}}(\fv_0 \otimes \fb_0 + \fa_0 \otimes \fw_0) \|^2 
+ (\Delta^2 + 8\delta^2 + 4\sqrt{2}\Delta\delta + 2\Delta + 4\sqrt{2}\delta)
(\|\fv\|^2 + \|\fw\|^2),
\end{align*}
where the last inequality is because $\|\fv_0\otimes \fb_0 + \fa_0\otimes \fw_0\| = \sqrt{\|\fv_0\|^2 + \|\fw_0\|^2}\leq \sqrt{\|\fv\|^2 + \|\fw\|^2}$.

Next, we bound the second term of \eqref{eq: perturb hessian}; in particular, we bound $F_{\hat{\mathcal{A}}}(\fa,\fb)$.
The difference between the function values satisfies
\begin{align*}
&\,\,\,\,\,\,\,\,\,|F_{{\mathcal{A}}}(\fa_0,\fb_0)- F_{\hat{\mathcal{A}}}(\fa,\fb)|
= \big{|}\,\|P_{\hat{\mathcal{A}}}(\fa\otimes \fb)\|^2 - \|P_{\mathcal{A}}(\fa_0\otimes \fb_0)\|^2\,\big{|}\\
& = \big{|}\,\|P_{\hat{\mathcal{A}}}(\fa\otimes \fb)\| + \|P_{\mathcal{A}}(\fa_0\otimes \fb_0)\|\,\big{|} \,\,\,
    \big{|}\,\|P_{\hat{\mathcal{A}}}(\fa\otimes \fb)\| - \|P_{\mathcal{A}}(\fa_0\otimes \fb_0)\|\,\big{|}\\
&\leq 2 \|P_{\hat{\mathcal{A}}}(\fa\otimes \fb) - P_{\mathcal{A}}(\fa_0\otimes \fb_0)\|= 2\|P_{\mathcal{A}}(\fa\otimes\fb- \fa_0\otimes \fb_0)
      + (P_{\mathcal{A}}-P_{\hat{\mathcal{A}}})(\fa_0\otimes \fb_0)\|\\
&\leq 2\|\fa\otimes\fb- \fa_0\otimes \fb_0\|
      + 2\,\|P_{\hat{\mathcal{A}}} - P_{\mathcal{A}}\|_2\leq 2\sqrt{2}\,\delta + 2\Delta,
\end{align*}
where $\|\fa\otimes\fb- \fa_0\otimes \fb_0\|\leq \sqrt{2}\delta$ follows from 
$\| \fa \otimes \fb - \fa_0 \otimes \fb_0 \|^2
= 2 - 2 \langle \fa, \fa_0 \rangle \langle \fb, \fb_0 \rangle
\leq 2\delta^2.$

Lastly, we bound the third term $\langle \fv \otimes \fw, P_{\hat{\mathcal{A}}}(\fa \otimes \fb) \rangle $ of \eqref{eq: perturb hessian}:
\begin{align*}
&\langle \fv \otimes \fw, P_{\hat{\mathcal{A}}}(\fa \otimes \fb) \rangle \\
=& \langle \fv\otimes \fw - \fv_0\otimes \fw_0, P_{\hat{\mathcal{A}}}(\fa\otimes \fb)\rangle + \langle \fv_0\otimes \fw_0, P_{\hat{\mathcal{A}}}(\fa\otimes \fb)-P_{\hat{\mathcal{A}}}(\fa_i\otimes \fb_i)\rangle\\
&+ \langle \fv_0\otimes \fw_0, (P_{\hat{\mathcal{A}}}-P_{\mathcal{A}})(\fa_0\otimes \fb_0)\rangle
\\
\leq & \|\fv\otimes \fw - \fv_0\otimes \fw_0\| + \|\fv_0\|\|\fw_0\|\|\fa\otimes \fb-\fa_i\otimes \fb_i\|+\|\fv_0\|\|\fw_0\|\Delta\\
\leq & \frac{\delta^2+2\delta}{2}(\|\fv\|^2+\|\fw\|^2)+\|\fv\|\|\fw\|(\Delta+\sqrt{2}\delta)
\leq  \frac{\delta^2+2\delta}{2}(\|\fv\|^2+\|\fw\|^2)+\frac{\Delta+\sqrt{2}\delta}{2}(\|\fv\|^2+\|\fw\|^2),
\end{align*}
where the first inequality follows from triangle inequality, the last inequality folows from AM-GM inequality and we will now explain the second equality.
By the triangle inequality and the AM--GM inequality, the distance between
\( \fv \otimes \fw \) and \( \fv_0 \otimes \fw_0 \) satisfies
\begin{align*}
\| \fv \otimes \fw - \fv_0 \otimes \fw_0 \| 
& = \| \langle \fv, \fa_0\rangle \fa_0 \otimes \fw
      + \fv \otimes \langle \fw, \fb_0 \rangle \fb_0
      + \langle \fv, \fa_0\rangle \langle \fw, \fb_0 \rangle \fa_0 \otimes \fb_0 \| \\
&\leq \| \langle \fv, \fa_0 \rangle \fa_0 \otimes \fw \| 
      + \| \fv \otimes \langle \fw, \fb_0 \rangle \fb_0 \| 
      + \| \langle \fv, \fa_0 \rangle \langle \fw, \fb_0 \rangle \fa_0 \otimes \fb_0 \| \\
&\leq \delta \|\fv\|\|\fw\|
      + \delta \|\fv\|\|\fw\|
      + \delta^2 \|\fv\|\|\fw\| \leq \frac{2\delta + \delta^2}{2} (\|\fv\|^2 + \|\fw\|^2).
\end{align*}

Putting it all together, $\begin{bmatrix}
\fv \\ \fw
\end{bmatrix}\T 
\rm{Hess}\, F_{\hat{\mathcal{A}}}(\fa, 
\fb)
\begin{bmatrix}
\fv \\ \fw
\end{bmatrix}$ is bounded above by
\begin{align*}
& 2\|P_{\hat{\mathcal{A}}}(\fv\otimes \fb+\fa\otimes \fw)\|^2
- 2F_{\hat{\mathcal{A}}}(\fa,\fb)\|\fv\otimes \fb+\fa\otimes \fw\|^2
+ 4\langle \fv\otimes \fw,P_{\hat{\mathcal{A}}}(\fa\otimes \fb)\rangle\\
\leq\; 
&2\|P_{{\mathcal{A}}}(\fv_0\otimes \fb_0+\fa_0\otimes \fw_0)\|^2
+ 2(\Delta^2 + 8\delta^2 + 4\sqrt{2}\Delta\delta + 2\Delta + 4\sqrt{2}\delta)(\|\fv\|^2+\|\fw\|^2)\\
& + 2(-1 + 2\sqrt{2}\delta + 2\Delta)(\|\fv\|^2 + \|\fw\|^2)
\\
&+ 4(\frac{\delta^2+2\delta}{2}(\|\fv\|^2+\|\fw\|^2)+\frac{\Delta+\sqrt{2}\delta}{2}(\|\fv\|^2+\|\fw\|^2))\\
=\;& (\|\fv\|^2+\|\fw\|^2)(2\Delta^2+18\delta^2+8\sqrt{2}\Delta\delta+10\Delta+(4+14\sqrt{2})\delta- 2) + 2\|P_{{\mathcal{A}}}(\fv_0\otimes \fb_0+\fa_0\otimes \fw_0)\|^2
\\
\leq \;& (\|\fv\|^2+\|\fw\|^2) \left(2\Delta^2+18\delta^2+8\sqrt{2}\Delta\delta+10\Delta+(4+14\sqrt{2})\delta+ \mu\right) = \eta (\|\fv\|^2+\|\fw\|^2),
\end{align*}
where the last line follows from
\begin{align*}
&2\|P_{{\mathcal{A}}}(\fv_0\otimes \fb_0+\fa_0\otimes \fw_0)\|^2- 2(\|\fv\|^2 + \|\fw\|)^2\\
\leq& 2\|P_{{\mathcal{A}}}(\fv_0\otimes \fb_0+\fa_0\otimes \fw_0)\|^2- 2(\|\fv_0\|^2 + \|\fw_0\|)^2\\
=& \begin{bmatrix}
    \fv_0\\ \fw_0 
\end{bmatrix}\T
\rm{Hess}\,(F_\Acal(\fa_0,\fb_0)) \begin{bmatrix}
    \fv_0\\ \fw_0 
\end{bmatrix} \leq \mu (\|\fv_0\|^2 + \|\fw_0\|^2) \leq \mu (\|\fv\|^2 + \|\fw\|^2),  \quad \text{by \eqref{eq: hessian fa}}. 
\end{align*}
The constant $\eta$ is negative for sufficiently small $\Delta$ and $\delta$, since $\mu<0$ from Lemma~\ref{lem: F_A, mu}. The function $F_{\hat\Acal}$ is thus strictly concave in an open neighborhood of $(\fa_0,\fb_0)$ and there is a unique critical point of $F_{\hat \Acal}$ in this open neighbourhood.
\end{proof}

\subsection{Error analysis}
We show how noise in the covariance propagates to the MCPCA outputs in the tensor decomposition. The main tool is Lemma \ref{lem: local bound}.

\begin{lemma}\label{lem:projection matrix bound}
Let $T$ be a tensor of format $p \times p \times k$ that is symmetric under swapping the first two indices. 
Let
$M \in \mathbb{R}^{p \times pk}$
be a flattening of $T$.
Suppose \(M\) has exactly \(r\) positive singular values, ordered as
$\sigma_1(M) \geq \cdots \geq \sigma_r(M) > 0.$
Consider a perturbed tensor \(\hat{T} \) of size $p \times p \times k$ that is symmetric under swapping the first two indices 
and denote its flattening by \(\hat{M}\in \RR^{p\times pk}\). 
Assume the perturbation satisfies
\[
\Delta_M : =\|\hat{M} - M\|_2 < \sigma_r(M).
\]
Let $\hat{\Acal}$ be the span of the leading $r$ left singular vectors of $\hat{M}\T$.
Then the projection matrices \(P_{\mathcal{A}}\) and \(P_{\hat{\mathcal{A}}}\) satisfy 
\[
\|P_{\mathcal{A}} - P_{\hat{\mathcal{A}}}\|_2 
\leq \frac{\Delta_M}{\sigma_r(M) - \Delta_M}.
\]
\end{lemma}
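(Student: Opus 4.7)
The claim is a Wedin-type $\sin\Theta$ perturbation bound for the right singular subspaces of $M$ and $\hat M$, so the plan is to identify $\Acal$ and $\hat\Acal$ as such subspaces and then apply the classical perturbation theorem with the appropriate spectral gap. Concretely, since $M$ has exactly $r$ positive singular values, its row space coincides with the span of its top $r$ right singular vectors, which is $\Acal$; by hypothesis $\hat\Acal$ is the span of the top $r$ right singular vectors of $\hat M$. Both subspaces have dimension $r$, and for equal-dimensional subspaces the difference of orthogonal projectors satisfies $\|P_{\Acal}-P_{\hat\Acal}\|_2 = \|\sin\Theta(\Acal,\hat\Acal)\|_2$, where $\Theta$ are the principal angles. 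Thus it suffices to bound $\|\sin\Theta(\Acal,\hat\Acal)\|_2$.

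Next, I would control the singular-value gap between the two halves of the spectra that get separated by the truncation. Since $\sigma_{r+1}(M)=0$, Weyl's inequality for singular values gives
\[
\sigma_{r+1}(\hat M) \;\leq\; \sigma_{r+1}(M) + \|\hat M - M\|_2 \;=\; \Delta_M,
\]
and likewise $\sigma_r(\hat M) \geq \sigma_r(M)-\Delta_M > 0$ by the hypothesis $\Delta_M < \sigma_r(M)$. In particular, the gap between the retained singular values of $M$ and the discarded singular values of $\hat M$ satisfies
\[
\sigma_r(M) - \sigma_{r+1}(\hat M) \;\geq\; \sigma_r(M) - \Delta_M \;>\; 0,
\]
so the singular subspaces we are comparing are well separated.

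I would then invoke Wedin's $\sin\Theta$ theorem in the form which states that if $\sigma_r(M) - \sigma_{r+1}(\hat M) > 0$, then
\[
\|\sin\Theta(\Acal,\hat\Acal)\|_2 \;\leq\; \frac{\|\hat M - M\|_2}{\sigma_r(M) - \sigma_{r+1}(\hat M)}.
\]
Combining this with the Weyl bound on $\sigma_{r+1}(\hat M)$ from the previous step and with the projector-sine-angle identity yields
\[
\|P_{\Acal} - P_{\hat\Acal}\|_2 \;\leq\; \frac{\Delta_M}{\sigma_r(M) - \Delta_M},
\]
which is exactly the desired inequality. The partial symmetry of $T$ and $\hat T$ in their first two indices plays no role in this argument beyond ensuring the flattenings are well-defined matrices; the bound is purely a statement about matrix perturbation.

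The main obstacle is essentially bookkeeping: picking the correct variant of Wedin's theorem so that the denominator is $\sigma_r(M)-\sigma_{r+1}(\hat M)$ rather than $\sigma_r(\hat M) - \sigma_{r+1}(M)$ or $\sigma_r(M) - \sigma_{r+1}(M)$, since only the first choice, combined with Weyl, yields exactly $\sigma_r(M)-\Delta_M$ in the denominator. Once that choice is made, every remaining step is immediate from standard perturbation theory and does not require any tensor-specific argument.
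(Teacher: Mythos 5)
Your proof is correct, and it is essentially the intended argument: the paper's own ``proof'' is only a citation to Lemma~1 of the subspace power method landscape paper, which establishes exactly this bound via the same Weyl-plus-Wedin $\sin\Theta$ route, so your write-up simply supplies the details the paper omits. One minor quibble: your closing remark that only the variant with denominator $\sigma_r(M)-\sigma_{r+1}(\hat M)$ works is not quite right --- because $\sigma_{r+1}(M)=0$, the other asymmetric variant with denominator $\sigma_r(\hat M)-\sigma_{r+1}(M)=\sigma_r(\hat M)\geq \sigma_r(M)-\Delta_M$ yields the identical bound, so either form of Wedin's theorem suffices.
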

\begin{proof}
Same as the proof of \cite[Lemma 1]{kileel2021landscape}.
\end{proof}

\begin{proposition}\label{thm: bound on sum squared}
Let \(\{ \fa_i \}_{i=1}^r \subseteq \mathbb{S}^{p-1}\) and \(\{ \fb_i \}_{i=1}^r \subseteq \mathbb{S}^{k-1}\) be two generic collections of vectors with $r\leq p$. Define the subspace
$\mathcal{A} = \mathrm{span}\{\fa_i \otimes \fb_i : i = 1,\dots,r\},$
and let \(\hat{\mathcal{A}}\) be an \(r\)-dimensional perturbation of \(\mathcal{A}\) satisfying
$\|P_{\mathcal{A}} - P_{\hat{\mathcal{A}}}\|_2 \leq \Delta.$
Assume further that the value $\varrho := \max\{\max_{i \ne j} |\langle \fa_i, \fa_j \rangle|, \max_{i\neq j}|\langle \fb_i, \fb_j \rangle|\}$ satisfies $(r - 1)\varrho^2 < \frac{1}{17}$.
Then for each global maximizer \((s_1 \fa_i, s_2 \fb_i)\) of \(F_{\mathcal{A}}\) with \(s_1, s_2 \in \{-1,1\}\), there exists a unique critical point \((\fa,\fb)\) of \(F_{\hat{\mathcal{A}}}\) in the local neighborhood
$\mathcal{S} := \{ \fx \in \mathbb{S}^{n-1} : \| \fx - s_1 \fa_i \| \leq \delta \} 
\times \{ \mathbf{y} \in \mathbb{S}^{k-1} : \| \mathbf{y} - s_2 \fb_i \| \leq \delta \},$
where $\delta$ satisfies
$2\Delta^2+18\delta^2+8\sqrt{2}\Delta\delta+10\Delta+(4+14\sqrt{2})\delta< \frac{1}{2}.$
The critical point \((\fa,\fb)\) is a local maximizer of \(F_{\hat{\mathcal{A}}}\) and satisfies
\[
\| \fa - s_1 \fa_i \|^2 + \|\fb- s_2 \fb_i \|^2 \leq 8\Delta.
\]
\end{proposition}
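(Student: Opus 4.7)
The plan is to apply Lemma \ref{lem: local bound} to obtain strong concavity of $F_{\hat\Acal}$ on $\Scal$ and then to run a short energy-gap argument that simultaneously pins down the unique critical point and bounds its distance from $(s_1\fa_i, s_2\fb_i)$. Under the stated hypothesis on $\delta$, Lemma \ref{lem: local bound} yields $\eta < -\tfrac12$, so the Riemannian Hessian of $F_{\hat\Acal}$ is uniformly bounded above by $\eta\, I$ throughout $\Scal$; equivalently, $F_{\hat\Acal}$ is $|\eta|$-strongly concave on $\Scal$. Strict concavity immediately gives uniqueness of any critical point in $\Scal$, and tells us that if such a critical point is interior it is automatically a local maximizer.

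For the quantitative bound, I would first observe that because $s_1\fa_i\otimes s_2\fb_i \in \Acal$ has unit Frobenius norm,
\[
\|P_{\hat\Acal}(s_1\fa_i\otimes s_2\fb_i)\| \;\geq\; \|P_\Acal(s_1\fa_i\otimes s_2\fb_i)\| - \|P_{\hat\Acal}-P_\Acal\|_2 \;\geq\; 1-\Delta,
\]
so that $F_{\hat\Acal}(s_1\fa_i, s_2\fb_i) \geq (1-\Delta)^2 \geq 1-2\Delta$. Since $F_{\hat\Acal}\leq 1$ pointwise, once an interior critical point $(\hat\fa,\hat\fb) \in \Scal$ is produced, the strong-concavity inequality applied at this critical point (where the Riemannian gradient vanishes) yields
\[
F_{\hat\Acal}(s_1\fa_i, s_2\fb_i) \;\leq\; F_{\hat\Acal}(\hat\fa,\hat\fb) - \tfrac{|\eta|}{2}\,d^2 \;\leq\; 1 - \tfrac{d^2}{4},
\]
where $d^2 = \|\hat\fa-s_1\fa_i\|^2 + \|\hat\fb-s_2\fb_i\|^2$. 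Combining the two bounds gives $\tfrac{d^2}{4} \leq 2\Delta$, i.e.\ $d^2 \leq 8\Delta$, which is the claimed estimate.

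The main obstacle is establishing the existence of an interior critical point of $F_{\hat\Acal}$ in $\Scal$. My preferred route is a homotopy/implicit-function argument: set $P_t := (1-t)P_\Acal + t P_{\hat\Acal}$ with corresponding objective $F_t$, and note that at $t=0$ the point $(s_1\fa_i, s_2\fb_i)$ is a non-degenerate critical point of $F_\Acal$, since the Riemannian Hessian at that point is strictly negative definite on the tangent space by Lemma \ref{lem: F_A, mu}. The implicit function theorem on $\SB^{p-1}\times \SB^{k-1}$ then provides a smooth family $(\fa(t),\fb(t))$ of critical points of $F_t$; applying Lemma \ref{lem: local bound} with $\hat\Acal$ replaced by each intermediate $\Acal_t$ shows uniform strict concavity along the homotopy, which prevents the continuation from escaping $\Scal$ before reaching $t=1$. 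An alternative, more elementary route is to directly maximize the continuous function $F_{\hat\Acal}$ over the compact set $\overline{\Scal}$: the maximizer exists and is unique by strict concavity, and one verifies it lies in the interior by comparing the boundary values to the lower bound $\geq 1-2\Delta$ at $(s_1\fa_i,s_2\fb_i)$ and using strong concavity to quantify the drop on the way to the boundary. Either way, uniqueness in $\Scal$ is automatic from strict concavity, and the quantitative bound from the previous paragraph concludes the proof.
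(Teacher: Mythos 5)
Your proposal is correct and follows essentially the same route as the paper's proof: invoke Lemma \ref{lem: local bound} to get $\tfrac12$-strong concavity of $F_{\hat{\mathcal{A}}}$ on $\mathcal{S}$, bound the function-value gap at $(s_1\fa_i,s_2\fb_i)$ by $2\Delta$ using $\|P_{\mathcal{A}}-P_{\hat{\mathcal{A}}}\|_2\le\Delta$, and convert that gap into the $8\Delta$ distance bound via the strong-concavity inequality (the paper phrases this through the geodesic length and then passes to chordal distance, which matches your estimate since the geodesic distance dominates the chordal one). Your additional care in establishing that the critical point exists and is interior (via homotopy or compact maximization) is a refinement of a step the paper asserts more briefly, not a different approach.
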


\begin{proof}
The function \( F_{\hat{\mathcal{A}}} \) is \( \frac{1}{2} \)-concave in \( \mathcal{S}\),
by Lemmas~\ref{lem: F_A, mu} and \ref{lem: local bound} and by the assumptions
$(r - 1)\varrho^2 < \frac{1}{17}$, and $
2\Delta^2+18\delta^2+8\sqrt{2}\Delta\delta+10\Delta+(4+14\sqrt{2})\delta< \frac{1}{2}.$
Thus, there exists exactly one local maximizer \( (\fa,\fb)\) in \( \mathcal{S}\).
We have
\[
0 \leq F_{\hat{\mathcal{A}}}(\fa, \fb) - F_{\hat{\mathcal{A}}}(s_1 \fa_i, s_2 \fb_i) 
\leq 1 - F_{\hat{\mathcal{A}}}(s_1 \fa_i, s_2 \fb_i) = \|P_\Acal(s_1\fa_i,s_2\fb_i)\|^2- \|P_{\hat{\Acal}}(s_1\fa_i,s_2\fb_i)\|^2 \leq 2 \Delta.
\]
Let \( c \) be the geodesic segment connecting \( (\fa, \fb) \) to \( (s_1 \fa_i, s_2 \fb_i) \). The function value difference of $F_{\hat \Acal}$ at \( (\fa, \fb) \) and \( (s_1 \fa_i, s_2 \fb_i) \) satisfies
\[
F_{\hat{\mathcal{A}}}(s_1 \fa_i, s_2 \fb_i) - F_{\hat{\mathcal{A}}}(\fa, \fb) \leq -\frac{1}{2} \cdot \frac{1}{2} L(c)^2,
\]
where \( L(c) \) is the length of the geodesic and the first $\frac{1}{2}$ is because  \( F_{\hat{\mathcal{A}}} \) is \( \frac{1}{2} \)-concave in \( \mathcal{S} \).
We thus obtain
\[L(c)^2 \leq 8 \Delta.\]

The geodesic length satisfies
\[
L(c) = \sqrt{ \cos^{-1}(\langle \fa, s_1 \fa_i \rangle)^2 + \cos^{-1}(\langle \fb, s_2 \fb_i \rangle)^2 }.
\]
Using the inequality \( 2 - 2\cos \theta \leq \theta^2 \), we have $\cos^{-1}(x)^2\geq 2-2x$, and we obtain
\[
L(c)^2 \geq 2 - 2 \langle \fa, s_1 \fa_i \rangle + 2 - 2 \langle \fb, s_2 \fb_i \rangle = 
\| \fa - s_1 \fa_i \|^2 + \|\fb- s_2 \fb_i \|^2.
\]
Hence,
\[
\| \fa - s_1 \fa_i \|^2 + \|\fb- s_2 \fb_i \|^2 \leq 8\Delta.\qedhere
\]
\end{proof}

We have shown that there exists a local maximizer of $F_{\hat{\Acal}}$ in a neighborhood of one of the pairs $(\fa_i,\fb_i)$. However, $F_{\hat{\Acal}}$ may also admit other local maximizers that are far from the ground-truth vectors.
We show that any critical point of $F_{\hat{\Acal}}$ that is far from the ground truth attains a low objective value.

\begin{lemma}\label{lem: high function value implies close to global maxima}
Let \(\{ \fa_i \}_{i=1}^r \subseteq \mathbb{S}^{p-1}\) and \(\{ \fb_i \}_{i=1}^r \subseteq \mathbb{S}^{k-1}\) be two generic collections of vectors with $r\leq p$. 
Define the subspace
$\mathcal{A} = \mathrm{Span}\{\fa_i \otimes \fb_i : i = 1,\dots,r\},$
and let \(\hat{\mathcal{A}}\) be an \(r\)-dimensional perturbation of \(\mathcal{A}\) satisfying
$\|P_{\mathcal{A}} - P_{\hat{\mathcal{A}}}\|_2 \leq \Delta.$
If
\[
F_{\hat{\mathcal{A}}}(\fa,\fb) \ge \frac{2-\delta^2}{2}\,C + 2\Delta,
\]
where
\(
C
=
\frac{
\bigl(1+(r-1)\max_{i\neq j}|\langle \fa_i,\fa_j\rangle|\bigr)^{1/2}
\bigl(1+(r-1)\max_{i\neq j}|\langle \fb_i,\fb_j\rangle|\bigr)^{1/2}
}{
1-\max_{i\neq j}|\langle \fa_i,\fa_j\rangle\langle \fb_i,\fb_j\rangle|
},
\)
then there exists an index $i$ and signs $s_1,s_2\in\{\pm1\}$ such that
\[
\|\fa - s_1\fa_i\| \le \delta,
\qquad
\|\fb - s_2\fb_i\| \le \delta.
\]
\end{lemma}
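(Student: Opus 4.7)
The plan is to establish the contrapositive: if for every index $i$ and every pair of signs $s_1, s_2 \in \{\pm 1\}$ at least one of $\|\fa - s_1 \fa_i\| > \delta$ or $\|\fb - s_2 \fb_i\| > \delta$ holds, then $F_{\hat{\mathcal{A}}}(\fa, \fb) < \frac{2-\delta^2}{2} C + 2\Delta$. The strategy is threefold: (i) pass from the perturbed subspace $\hat{\mathcal{A}}$ to the ground-truth subspace $\mathcal{A}$, at the cost of an additive $2\Delta$; (ii) translate the geometric hypothesis into a uniform bound on the products $|\langle \fa_i, \fa\rangle \langle \fb_i, \fb\rangle|$; and (iii) reuse the projection-matrix manipulation from the proof of Lemma~\ref{lem: F_A, mu} to control $F_{\mathcal{A}}$.

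For (i), the identity $|\,\|P_{\hat{\mathcal{A}}}(\fx)\|^2 - \|P_{\mathcal{A}}(\fx)\|^2| \le 2\|P_{\hat{\mathcal{A}}} - P_{\mathcal{A}}\|_2 \|\fx\|^2$ applied at $\fx = \fa \otimes \fb$ (this is exactly the argument used to bound $|F_{\mathcal{A}}(\fa_0,\fb_0) - F_{\hat{\mathcal{A}}}(\fa,\fb)|$ inside the proof of Lemma~\ref{lem: local bound}) gives $F_{\hat{\mathcal{A}}}(\fa, \fb) \le F_{\mathcal{A}}(\fa, \fb) + 2\Delta$, so it suffices to prove $F_{\mathcal{A}}(\fa, \fb) \le \frac{2-\delta^2}{2} C$. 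For (ii), using $\|\fa - s_1 \fa_i\|^2 = 2 - 2 s_1 \langle \fa, \fa_i\rangle$, the negated conclusion is equivalent to the statement that for each $i$, either $|\langle \fa, \fa_i\rangle| \le 1 - \delta^2/2$ or $|\langle \fb, \fb_i\rangle| \le 1 - \delta^2/2$. Since each inner product is bounded by $1$ in absolute value, this yields the key inequality
\[
\max_{1 \le i \le r} |\langle \fa_i, \fa\rangle \langle \fb_i, \fb\rangle| \le \frac{2-\delta^2}{2}.
\]

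For (iii), let $D = [\Vect(\fa_i \otimes \fb_i)]_{i=1}^{r}$, so that $P_{\mathcal{A}} = D (D\T D)^{-1} D\T$. As in Lemma~\ref{lem: F_A, mu}, this gives $F_{\mathcal{A}}(\fa, \fb) \le \|(D\T D)^{-1}\|_2 \cdot \sum_{i=1}^{r} \langle \fa_i, \fa\rangle^2 \langle \fb_i, \fb\rangle^2$. I factor one power of $|\langle \fa_i, \fa\rangle \langle \fb_i, \fb\rangle|$ out of each summand, pull out the maximum using (ii), and apply Cauchy--Schwarz together with the Gershgorin bounds $\|A\|_2^2 \le 1 + (r-1)\rho_a$ and $\|B\|_2^2 \le 1 + (r-1)\rho_b$ to bound the remaining factor by $\sqrt{(1+(r-1)\rho_a)(1+(r-1)\rho_b)}$. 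For $\|(D\T D)^{-1}\|_2$, I apply Gershgorin to the Gram matrix $D\T D$: its diagonal entries are $1$ and its off-diagonals are $\langle \fa_i, \fa_j\rangle \langle \fb_i, \fb_j\rangle$, producing the denominator of $C$. Combining these estimates yields the desired bound.

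The main obstacle is bookkeeping: the factorization in step (iii) must match both the numerator and the denominator of $C$ precisely, so the order in which the extracted maximum, Cauchy--Schwarz, and the two Gershgorin estimates are applied matters. The crucial trick is to extract only \emph{one} copy of $|\langle \fa_i, \fa\rangle \langle \fb_i, \fb\rangle|$ from each squared term, rather than the naive bound $\langle \fa_i, \fa\rangle^2 \langle \fb_i, \fb\rangle^2 \le ((2-\delta^2)/2)^2$; this keeps the dependence on $\delta$ linear in $(2-\delta^2)/2$ in the final bound, which is what the statement requires.
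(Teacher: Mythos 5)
Your overall strategy is the same as the paper's: absorb the subspace perturbation into an additive $2\Delta$, reduce to the bound $F_{\mathcal{A}}(\fa,\fb)\le C\max_j|\langle\fa_j,\fa\rangle\langle\fb_j,\fb\rangle|$, and translate $|\langle\fa_i,\fa\rangle|,|\langle\fb_i,\fb\rangle|\ge 1-\delta^2/2$ into $\|\fa-s_1\fa_i\|,\|\fb-s_2\fb_i\|\le\delta$. Your steps (i) and (ii) are correct (indeed, since $\|P(\fx)\|^2=\langle P\fx,\fx\rangle$ for orthogonal projectors, you even get the $2\Delta$ with room to spare), and your Cauchy--Schwarz/Gershgorin treatment of the numerator, giving $\sum_i|\langle\fa_i,\fa\rangle\langle\fb_i,\fb\rangle|\le\|A\|_2\|B\|_2\le\bigl(1+(r-1)\rho_a\bigr)^{1/2}\bigl(1+(r-1)\rho_b\bigr)^{1/2}$, is fine. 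The difference is that the paper does not prove the key inequality at all --- it cites Theorem 5.7 of \cite{wang2025multi} as a black box --- whereas you attempt to derive it, and that derivation has a gap.

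The gap is in your bound on $\|(D\T D)^{-1}\|_2$. Gershgorin applied to the Gram matrix $D\T D$ (diagonal $1$, off-diagonal $\langle\fa_i,\fa_j\rangle\langle\fb_i,\fb_j\rangle$) gives $\sigma_{\min}(D\T D)\ge 1-(r-1)\max_{i\ne j}|\langle\fa_i,\fa_j\rangle\langle\fb_i,\fb_j\rangle|$, i.e.\ a denominator $1-(r-1)\max_{i\ne j}|\cdots|$, \emph{not} the denominator $1-\max_{i\ne j}|\cdots|$ appearing in $C$. For $r>2$ your constant is therefore strictly larger than $C$, and since in the contrapositive you need an \emph{upper} bound $F_{\hat{\mathcal{A}}}\le \tfrac{2-\delta^2}{2}C+2\Delta$, a larger constant does not suffice: you only prove the lemma with $C$ replaced by your weaker constant. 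This cannot be patched by sharpening the eigenvalue estimate, because $\sigma_{\min}(D\T D)\ge 1-\max_{i\ne j}|(D\T D)_{ij}|$ is false for general Gram matrices of unit vectors (e.g.\ three unit vectors at $120^{\circ}$ have maximal off-diagonal $1/2$ but minimal eigenvalue $0$); whatever argument yields the denominator of $C$ in \cite{wang2025multi} must exploit more than $\|(D\T D)^{-1}\|_2$. So either invoke Theorem 5.7 of \cite{wang2025multi} as the paper does, or accept the $(r-1)$ in the denominator and restate the lemma with the correspondingly larger constant.
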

\begin{proof}
We have $$|F_{\mathcal{A}}(\fa,\fb)-F_{\hat{\mathcal{A}}}(\fa, \fb)|\leq 
\|P_\Acal(\fa,\fb) + P_{\hat\Acal}(\fa,\fb)\| \|P_{\hat\Acal}(\fa,\fb) - P_\Acal(\fa,\fb)\| \leq 2 \Delta$$ by definition of $F_\Acal$, 
so it follows that
$
F_{\mathcal{A}}(\fa,\fb)\geq F_{\hat{\mathcal{A}}}(\fa,\fb)-2 \Delta \geq \frac{2-\delta^2}{2}C.
$
The inequality
$
F_{\mathcal{A}}(\fa, \fb)\leq C \max_j|\langle\fa_j,\fa\rangle\langle\fb,\fb_j\rangle|.
$
holds by Theorem 5.7 of \cite{wang2025multi}.
Thus, we obtain
$$
\max_j|\langle\fa_j,\fa\rangle\langle\fb,\fb_j\rangle|\geq\frac{2-\delta^2}{2} 
$$
and there exists an index $i$ such that 
$
\|\fa - \fa_i\|\leq \delta, \quad \|\hat \fb - \fb_i\|\leq \delta.
$
Hence, \[
\|\fa - s_1\fa_i\| \le \delta,
\qquad
\|\fb - s_2\fb_i\| \le \delta. \qedhere
\]
\end{proof}

\begin{proof}[Proof of Theorem \ref{thm: error analysis}]
Let $\hat M = \begin{bmatrix}
    S_1&\cdots S_k
\end{bmatrix}$ and $\hat\Acal$ be the span of the top $r$ right singular vectors of $\hat M$. Letting $\mathbf{u}\in \SB^{p-1}, \fv\in \SB^{pk-1}$, we have
\begin{align*}
\mathbf{u}\T (M-\hat M) \fv
    &= \sum_{i=1}^k \mathbf{u}\T (\Sigma_i-S_i ) \fv_i \leq \sum_{i=1}^k \|\fv_i\| \|\Sigma_i-S_i\|_2 \\
    & \leq \sum_{i=1}^k \|\fv_i\| \|\Sigma_i\|_2  \, \mathcal{O}(\sqrt{\frac{p}{N}}) \leq \sigma_1(M)\mathcal{O}(\sqrt{\frac{pk}{N}}),
\end{align*}
with probability at least $1-2k \exp(-p)$, 
see~\cite[Theorem 4.7.1]{vershynin2018high}.  
Thus, $\|M-\hat M\|_2 = \sigma_1(M)\mathcal{O}(\sqrt{\frac{pk}{N}})$.  By Lemma \ref{lem:projection matrix bound} and since $N\gg \frac{pk}{\sigma_r(M)^2}$, we have $\|P_\Acal-P_{\hat\Acal}\|_2 = \mathcal{O}(\sqrt{\frac{pk}{N}})$.
Let $\kappa(M)$ be the condition number of $M$.
Since $\rho = \mathcal{O}(p^{-\frac{1}{2}})$, the assumptions in Proposition \ref{thm: bound on sum squared} are satisfied and we obtain
\[\cos(\hat{\fa}_{\pi(i)},\fa_i) = 1- \frac{1}{2}\|\hat{\fa}_{\pi(i)}-\fa_i\|^2 = 1-\mathcal{O}(\frac{\sigma_1(M)\sqrt{pk}}{\sigma_r(M)\sqrt{N}}) = 1- \mathcal{O}(\kappa(M)\sqrt{\frac{{pk}}{{N}}}), \]
provided that the algorithm converges to a point sufficiently close to one of the ground-truth vectors.
Such convergence is guaranteed by examining the objective value at the limit point via Lemma~\ref{lem: high function value implies close to global maxima}.
\end{proof}

\setcounter{figure}{2} 

{
\renewcommand{\thefigure}{SI-3-\arabic{figure}}
\setcounter{figure}{0}
\begin{figure}[htbp]
    \centering
    \includegraphics[width=0.9\linewidth]{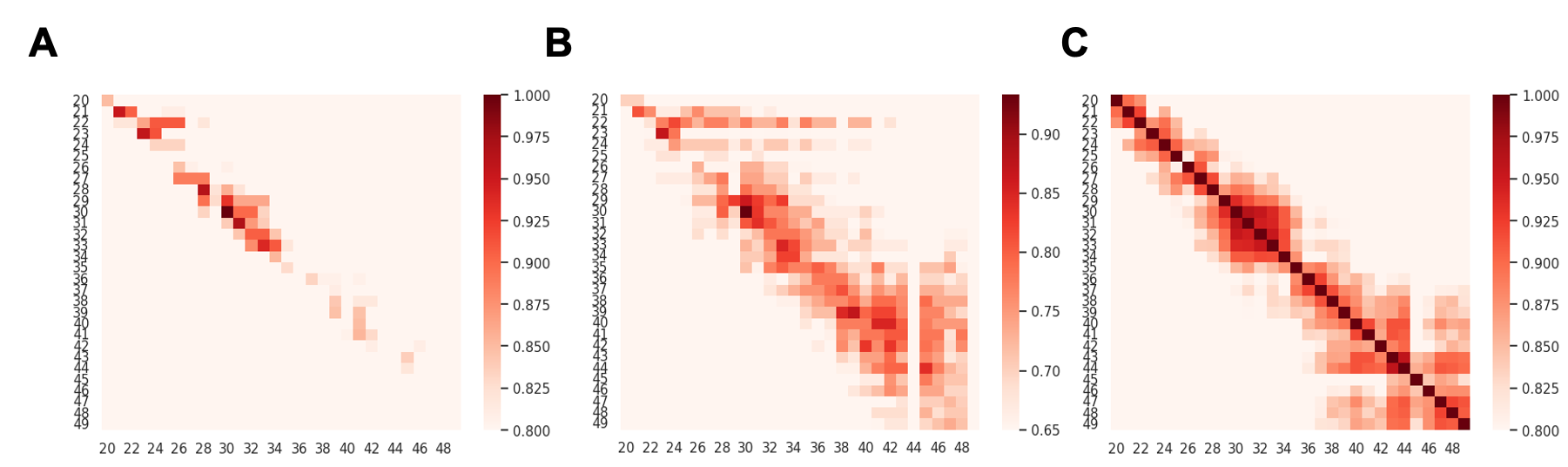}
    \captionsetup{width=0.95\textwidth}
    \caption*{FIGURE SI-3-1
    \textbf{A.} Determining rank for TCGA decomposition by comparison of proportion of MCPCs consistent between downsampled and original datasets. Color is proportion of MCPCs with correlation between projection matrices $> 0.99$ between original and downsampled dataset.
\newline \textbf{B.}
Determining rank for TCGA decomposition by comparison of proportion of MCPCs consistent between downsampled and original datasets. Color is proportion of MCPCs with correlation of context loadings $> 0.99$ between original and downsampled dataset.
\newline \textbf{C.}
Stability of context relationships by rank. Spearman correlation between pairwise correlations between context loadings at different ranks.
}
    \label{SIfig3}
\end{figure}

\renewcommand{\thefigure}{SI-3-\arabic{figure}}
\setcounter{figure}{1}
\begin{figure}[htbp]
    \centering
    \includegraphics[width=0.9\linewidth]{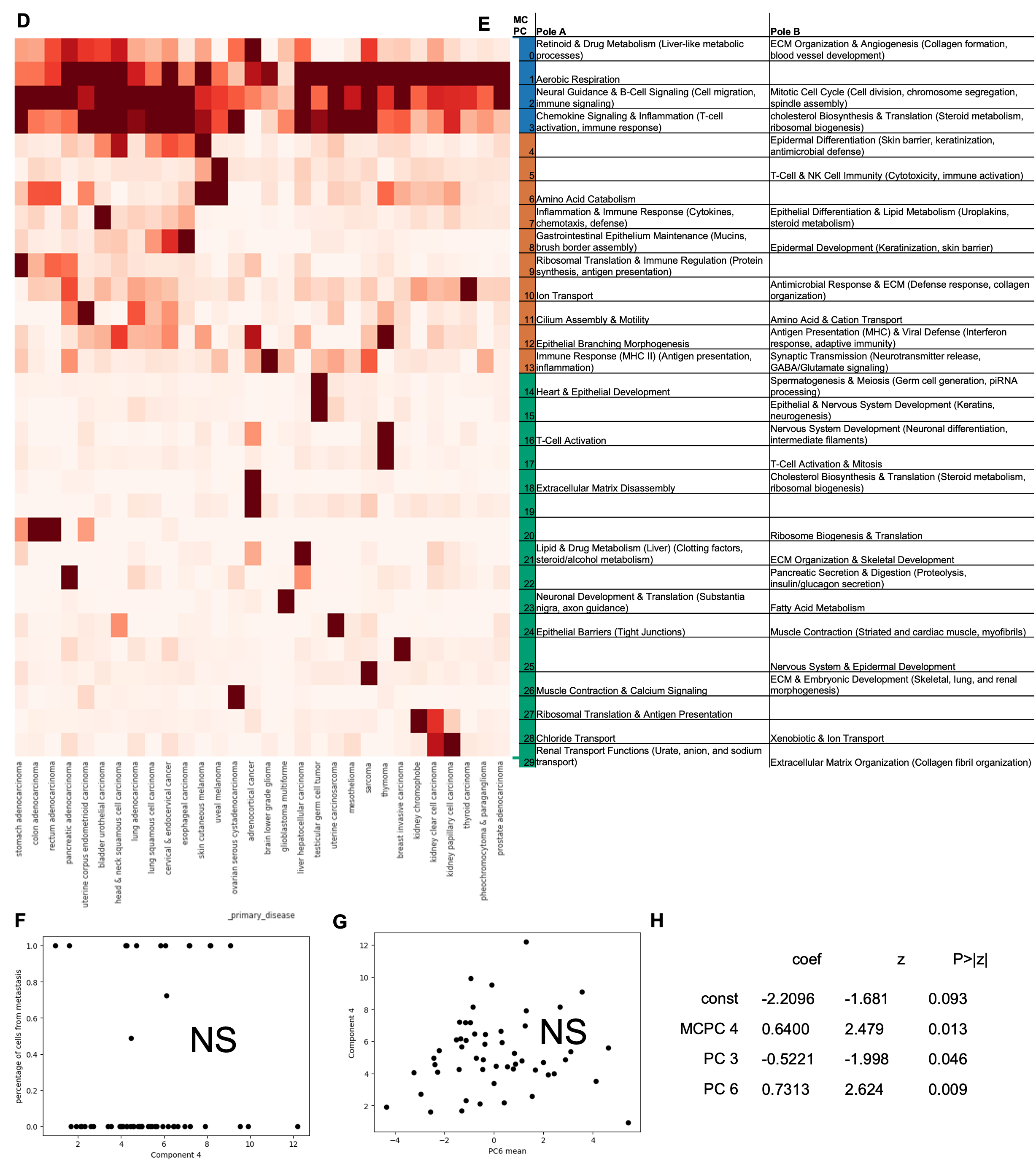}
    \captionsetup{width=0.95\textwidth}
    \caption{
\newline \textbf{D.}
Heatmap of context loadings of MCPCs across cancer types.
\newline \textbf{E.}
Annotation of context loadings and MCPCs by gene set enrichment among top 200 genes per MCPC.
\newline \textbf{F.} Correlation of MCPC5 context loading in lung cancer dataset with proportion of metastatic cells. NS indicates not significant. 
\newline \textbf{G.}
Correlation of MCPC5 context loading with mean of overall PC6.
\newline \textbf{H.} Multivariate cox proportional hazards regression results for MCPCs and overall PC means associated marginally with survival.
}
    \label{SIfig3}
\end{figure}
}

\setcounter{figure}{3} 
\begin{figure}[htbp]
\centering
    \includegraphics[width=0.84\linewidth]{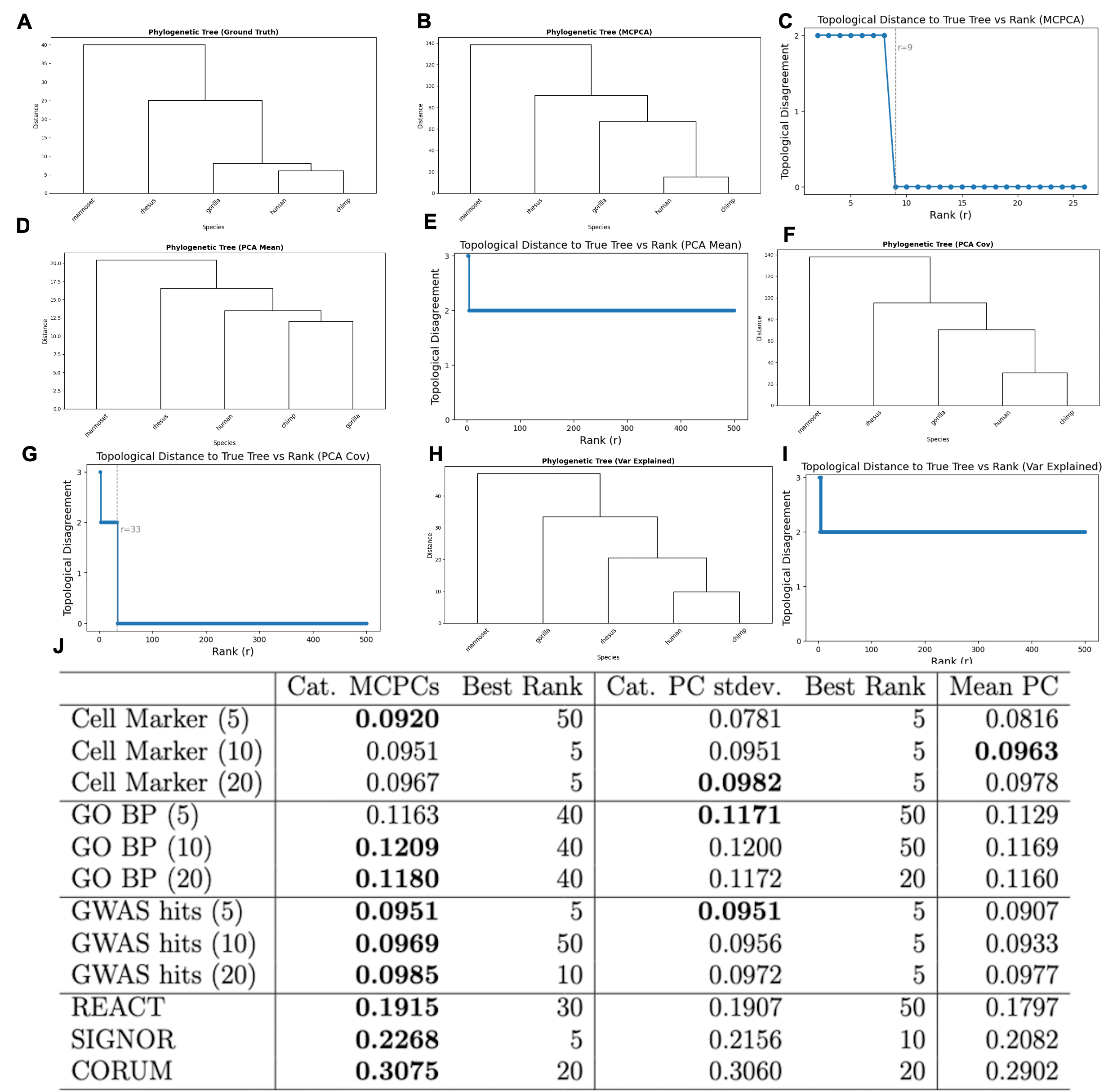}
    \captionsetup{width=0.95\textwidth}
    \caption{\textbf{A.} Ground truth phylogenetic tree. 
\newline \textbf{B.} Hierarchical clustering tree between species using the loading matrix of MCPCs across gene $\times$ gene covariance (rank 9) as (species-level) features.
\newline \textbf{C.} Edit distance to ground truth tree using MCPCA at different ranks on species $\times$ gene $\times$ gene covariance as features.
\newline \textbf{D.} Hierarchical clustering tree between species using mean PCs as features.
\newline \textbf{E.} Edit distance to ground truth tree across different numbers of PCs, using the mean PC per species as features.
\newline \textbf{F.} Hierarchical clustering tree between species using flattened covariance matrix between overall PCs1-33 as features.
\newline \textbf{G.} Edit distance to ground truth tree using flattened covariance matrix between different numbers of overall PCs ($x$-axis) as features.
\newline \textbf{H.} Hierarchical clustering tree between species using variance explained by PCs.
\newline \textbf{I.} Edit distance to ground truth tree using variance explained by PCs tree across different numbers of PCs.
\newline \textbf{J.} Benchmarking of MCPCs for gene perturbation representation learning with Perturb-seq data. Recall for gene-gene links across gene sets (rows) obtained by thresholding gene-gene similarity between gene perturbations computed with different features (columns). Cat MCPCs: concatenating MCPC context loadings on perturbation $\times$ gene $\times$ gene matrices to mean PCs in each perturbation. Cat PC stdev.: concatenating standard deviation of PCs to mean PCs for each perturbation.}
    \label{SIfig4}
\end{figure}

% \bibliographystyle{plain}
% \bibliography{new_reference}
\end{document}